\documentclass[11pt]{article}
\usepackage[a4paper,margin=1in]{geometry}
\usepackage[T1]{fontenc}
\usepackage[utf8]{inputenc}
\usepackage{lmodern,amsmath,amssymb,amsthm,mathtools}
\usepackage{booktabs,tabularx,longtable,array,enumitem,microtype}
\usepackage[dvipsnames]{xcolor}
\usepackage{tikz}
\usetikzlibrary{arrows.meta,positioning}
\usepackage{pgfplots}
\pgfplotsset{compat=1.18}
\definecolor{fiNative}{HTML}{315A83}
\definecolor{fiFrontier}{HTML}{177F7B}
\definecolor{fiAccent}{HTML}{B76A20}
\definecolor{fiGray}{HTML}{68717A}
\usepackage[hidelinks]{hyperref}
\hypersetup{
pdftitle={Verification and Self-Improvement in Agentic AI: Foundations and Limits},
pdfauthor={Chien-Ping Lu},
pdfsubject={Foundations and limits of bounded verification and recursive self-improvement in agentic AI},
pdfkeywords={agentic AI, bounded verification, recursive self-improvement, closure frontiers, polynomial hierarchy}}
\newcommand{\Pclass}{\mathsf{P}}
\newcommand{\NP}{\mathsf{NP}}
\newcommand{\BPP}{\mathsf{BPP}}
\newcommand{\PH}{\mathsf{PH}}
\newcommand{\MA}{\mathsf{MA}}

\newcommand{\SP}[1]{\Sigma_{#1}^{\mathsf P}}
\newcommand{\PP}[1]{\Pi_{#1}^{\mathsf P}}
\newcommand{\SR}[1]{\Sigma_{#1}^{\mathsf{RV}}}
\newcommand{\PR}[1]{\Pi_{#1}^{\mathsf{RV}}}
\newcommand{\Nat}{\mathrm{Nat}}
\newcommand{\Exp}{\mathrm{Exp}}
\newcommand{\Core}{\mathrm{Core}}
\newcommand{\Front}{\mathrm F}
\newcommand{\Lang}{\mathrm L}
\newcommand{\eps}{\varepsilon}
\newcommand{\bits}{\{0,1\}}
\newcommand{\Q}{\mathcal Q}
\newcommand{\Acc}{\mathcal A}
\newtheorem{theorem}{Theorem}[section]
\newtheorem{proposition}[theorem]{Proposition}

\theoremstyle{definition}
\newtheorem{definition}[theorem]{Definition}

\theoremstyle{remark}

\setlist{topsep=4pt,itemsep=2pt}
\title{Verification and Self-Improvement in Agentic AI:\\Foundations and Limits}
\author{Chien-Ping Lu}
\date{}
\begin{document}
\maketitle
\begin{abstract}
\noindent A self-improving agent can find accepted solutions more often without expanding the tasks its verification interface can accept. We extend task-relative verification with explicit support and transcript interfaces to distinguish search gains from changes in accepted-task coverage. A stage's native reach contains tasks accepted with default support; its closure frontier contains tasks accepted with any support the declared interface permits. A semantic containment criterion identifies the authority or exposure condition that every verified escape must violate, while guard-transfer invariance prevents a change of representation from being counted as a capability expansion. An executable Boolean-program repair agent makes the distinction concrete: over 36,864 paired runs, counterexample feedback reduces verifier calls without changing coverage, whereas an enforced permission change admits a new target with a proof excluding every old-interface program. Exact and uniformly gapped randomized verification share this account; independent majority amplification improves reliability while preserving the accepted language. Separate adaptive-selection contracts state what is additionally required to interpret a proposed successor's score as evidence of deployment improvement. Classical hierarchy results and finite audits provide supporting boundaries. The framework specifies the evidence needed to establish improved search, wider accepted capability, and reliable successor selection.
\end{abstract}

\section{Introduction}
A coding agent proposes patches, receives feedback, and revises its search procedure. After an update, it solves more benchmark instances. Does the update make old solutions easier to find, make previously optional support routine, or admit solutions that the old interface could never accept? These mechanisms can produce similar score changes while requiring different explanations and evidence.

We study the accepted-task question. A deployed stage specifies a correctness relation, permitted support, admissible interaction records, and a bounded terminal checker. Its \emph{native reach} records the tasks with an accepted solution under default support. Its \emph{closure frontier} records tasks with an accepted solution under any support that the current interface permits. The frontier is the entire accepted-input set, not a topological boundary or an empirical score. Search effectiveness remains a separate probability or cost measure.

The central comparison asks what a proved expansion must change. If a successor's accepted transcripts remain within the predecessor's semantic authority and its default records remain within the predecessor's permitted exposure, the successor's native reach stays inside the old frontier. An escape witness must violate at least one of these conditions. The criterion is elementary set semantics; its purpose is to make the evidence obligation explicit. Guard-transfer invariance ensures that moving a check between the harness and terminal verifier does not count as expansion when accepted behavior is unchanged.

The program-repair construction supplies a concrete use of this distinction. True counterexamples reduce verifier calls while preserving the first correct candidate and accepted coverage. Changing an enforced permission admits a particular new target, and a growing-family exclusion argument rules out every old-interface program for that target. The 36,864 paired runs measure the search benefit; the general exclusion proof establishes the interface separation. Neither substitutes for the other.

We also separate two claims often attached to self-improvement. Randomized verification requires a justified connection between acceptance probability and correctness: independent amplification of the same tuple preserves its accepted language. Selecting a successor requires its own error and deployment-transfer contract; a better score alone proves no language containment or escape. The detailed hierarchy analysis uses classical complexity tools and does not establish new class separations. Contemporary agent and improver updates motivate these comparisons \cite{Zhang2026DGM,Zelikman2024STOP,Zhang2026Hyperagents,Duan2026LastAI}, but the structural results concern declared families and do not bound the rate of practical improvement.

Section~\ref{sec:stage} defines the contract. Section~\ref{sec:structure} gives the containment and representation criteria, and Section~\ref{sec:repair} demonstrates search improvement and a proved expansion. Section~\ref{sec:random} explains reliability and successor evaluation. Full proofs, classical placement and depth results, constructions, and experimental protocols are in the appendices.

\begin{figure}[tbp]
\centering
\begin{tikzpicture}[x=1cm,y=1cm,font=\small,
 old/.style={draw=fiGray,densely dashed,line width=.8pt},
 frontier/.style={draw=fiFrontier,line width=1.1pt},
 native/.style={draw=fiNative,fill=fiNative!12,line width=.8pt}]
\foreach \x/\heading in {0/{(a) Better search},5.15/{(b) Internalization},10.3/{(c) Frontier expansion}} {
  \node[font=\small\bfseries,anchor=west] at (\x,3.2) {\heading};
}
\begin{scope}[shift={(2.1,1.65)}]
  \draw[frontier,fill=fiFrontier!4] (0,0) ellipse (1.95 and 1.1);
  \draw[native] (0,-.05) ellipse (.78 and .5);
  \node at (0,-.05) {$L'=L$};
  \node[text=fiFrontier] at (0,.74) {$F'=F$};
\end{scope}
\node[align=center,text width=4.3cm] at (2.1,-.05)
  {Higher success probability;\\same accepted-input sets.};
\begin{scope}[shift={(7.25,1.65)}]
  \draw[frontier,fill=fiFrontier!4] (0,0) ellipse (1.95 and 1.1);
  \draw[native] (0,-.12) ellipse (1.35 and .68);
  \draw[old] (-.24,-.18) ellipse (.58 and .34);
  \node at (-.24,-.18) {$L$};
  \node[text=fiNative] at (.8,-.1) {$L'$};
  \node[text=fiFrontier] at (0,.74) {$F'=F$};
\end{scope}
\node[align=center,text width=4.3cm] at (7.25,-.05)
  {$L\subsetneq L'\subseteq F$;\\more default access.};
\begin{scope}[shift={(12.4,1.65)}]
  \draw[frontier,fill=fiFrontier!4] (0,0) ellipse (1.95 and 1.1);
  \draw[native] (.36,-.13) ellipse (1.34 and .66);
  \draw[old] (-.4,-.13) ellipse (1.15 and .7);
  \node[text=fiGray] at (-1,.27) {$F$};
  \node[text=fiFrontier] at (0,.82) {$F'$};
  \node[text=fiNative] at (.5,-.16) {$L'$};
  \fill[fiAccent] (1.39,-.13) circle (2pt);
  \node[text=fiAccent,below] at (1.39,-.2) {$x_\star$};
\end{scope}
\node[align=center,text width=4.3cm] at (12.4,-.05)
  {$x_\star\in L'\setminus F$;\\beyond the old frontier.};
\draw[fiNative,fill=fiNative!12] (2.15,-.92) rectangle (2.48,-.75);
\node[anchor=west,font=\footnotesize] at (2.58,-.84) {native reach};
\draw[frontier] (6.05,-.84)--(6.5,-.84);
\node[anchor=west,font=\footnotesize] at (6.6,-.84) {closure frontier};
\draw[old] (10.75,-.84)--(11.2,-.84);
\node[anchor=west,font=\footnotesize] at (11.3,-.84) {previous boundary};
\end{tikzpicture}
\caption{Three comparisons under a common task relation. Each shape represents
\emph{task inputs}; area is schematic and does not measure probability or
cardinality. Write $L=\Lang_R(S)$, $F=\Front_R(S)$ and use primes for the
successor. Better search can preserve both languages. Internalization enlarges
native reach within unchanged optional support. The illustrated frontier
expansion admits an input natively that has no accepted source-frontier
witness. This nested drawing is one possible transition, not an assumption
that every stage change is monotone.}
\label{fig:frontier-changes}
\end{figure}

\section{A contract for comparing accepted capability}\label{sec:stage}
A stage $S$ fixes a task relation $R(x,z)$, default support $\sigma_0$, an exposure predicate $\Exp_S(x,\sigma,t)$, and a terminal core $\Core_S(x,z,t,\vec y;r)$. Support, transcripts, outputs, and verification moves have declared polynomial length bounds. Exposure and the core are jointly uniform polynomial-time procedures; malformed or out-of-bound records are rejected. Default support is uniformly polynomial-time computable. Support supplies bounded auxiliary data. All move domains include the empty string.

Let $\Q_k\vec y$ denote $k\ge1$ alternating bounded verification blocks beginning existentially. All moves precede the uniform hidden random tape $r$. On every well-formed bounded tuple, the core acceptance probability lies in $[0,1/3]\cup[2/3,1]$. Define
\begin{align*}
D_S(x,z,t,\vec y)&=\mathbf 1\{\Pr_r[\Core_S(x,z,t,\vec y;r)=1]\ge2/3\},\\
\Acc_S(x,z,t)&\ \Longleftrightarrow\ \Q_k\vec y\ D_S(x,z,t,\vec y)=1.
\end{align*}
Task-relative soundness requires
\[
\Exp_S(x,\sigma,t)\wedge\Acc_S(x,z,t)\quad\Longrightarrow\quad R(x,z).
\]
The gap is an assumption on every bounded tuple, and soundness connects the complete quantified acceptance game to correctness. Neither follows from the other. Setting the coin length to zero recovers exact verification. Definition~\ref{def:stage} in Appendix~\ref{app:stage-contract} gives the full bounds, encoding checks, and probability contract.

\begin{figure}[htbp]
\centering
\begin{tikzpicture}[>=Latex,font=\small,
b/.style={draw=fiGray,rounded corners,align=center,text width=2.5cm,minimum height=1cm},
node distance=5mm]
\node[b,fill=fiNative!8] (in) {Input and support\\$x,\sigma$};
\node[b,fill=fiNative!8,right=of in] (ex) {Admissible record\\$\Exp_S(x,\sigma,t)$};
\node[b,fill=fiNative!8,right=of ex] (mv) {Output and moves\\$z,\Q_k\vec y$};
\node[b,fill=fiFrontier!9,right=of mv] (co) {Terminal checker\\$\Core_S(w;r)$};
\draw[->] (in)--(ex); \draw[->] (ex)--(mv); \draw[->] (mv)--(co);
\node[below=6mm of co,align=center] (r) {Hidden coins $r$};
\draw[->] (r)--(co);
\end{tikzpicture}
\caption{The declared verification interface. Exposure and length bounds fix
the admissible records; alternating moves specify the acceptance game. Only
the terminal checker receives hidden random bits. Setting the coin length to
zero recovers deterministic verification. The diagram specifies semantics,
not the cost of solving the quantified game.}\label{fig:flow}
\end{figure}

\begin{definition}[Native reach and closure frontier]\label{def:frontier}
Let
\[
\Nat_S(x)=\{t:\Exp_S(x,\sigma_0,t)\},\qquad
\Exp_S^*(x)=\{t:\exists\sigma\ \Exp_S(x,\sigma,t)\}.
\]
All objects are bounded as in Definition~\ref{def:stage}. Define
\begin{align}
\Lang_R(S)&=\{x:\exists t\in\Nat_S(x)\ \exists z\ \Acc_S(x,z,t)\},\label{eq:native}\\
\Front_R(S)&=\{x:\exists t\in\Exp_S^*(x)\ \exists z\ \Acc_S(x,z,t)\}.\label{eq:frontier}
\end{align}
These symbols use bounded-error acceptance throughout; the superscript
$\mathrm{ex}$ will denote an exact specialization when comparison is needed.
\end{definition}

For a concrete instance, let $x=(1^d,a)$ specify a Boolean linear function,
and let $t=z$ list its referenced variables. Optional support selects a quota
$q\le B(d)$, defaulting to $b_0(d)$. Exposure requires $|t|\le q$; an exact
coefficient check establishes correctness. Native reach then requires
$\|a\|_0\le b_0(d)$, and the frontier requires $\|a\|_0\le B(d)$.
Section~\ref{sec:xor-case} specifies and proves this instance.

These are languages of \emph{task inputs}, not sets of accepted patches or
measures of success probability. Existence of an accepted witness is weaker
than efficient production of that witness by a given learned policy. A support
source must be bounded and its admissibility polynomially checkable under the
declared interface. A fact supplied by an unmodeled oracle cannot be called
conservative merely because it fits in a short string.

\section{What a verified expansion must change}\label{sec:structure}
Define the transcript-indexed accepted-input set
$K_S(x,t)\iff\exists z\,\Acc_S(x,z,t)$. A comparison uses a fixed common
transcript encoding, including rejection outside the declared domains.
When encodings differ, an explicit translation is required.

\begin{definition}[Semantic authority]\label{def:authority}
Write $T\preceq^{\rm auth}S$ if $K_T(x,t)\Rightarrow K_S(x,t)$ for all
$x,t$. This compares quantified acceptance, possibly across different depths.
\end{definition}
Authority here is acceptance of task inputs per transcript; it is not raw
model accuracy, nor necessarily containment of every accepted output. Both
stages must be sound for the same task. Opposite inclusions define equivalence.

A comparison should also survive changes of encoding. Polynomial transcript
translations and a representation using just default and optional support
preserve the two languages under the conditions developed in
Appendix~\ref{app:normalization}.

\begin{proposition}[Guard transfer]\label{prop:guard}
For a deterministic polynomial-time guard $g(x,t)$, the stages
\[
(\Exp_S\wedge g,\Core_S)\quad\hbox{and}\quad
(\Exp_S,g\wedge\Core_S)
\]
have the same native reach and frontier, with all other data fixed.
\end{proposition}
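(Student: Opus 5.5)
We compare the two stages, $S_1=(\Exp_S\wedge g,\Core_S)$ and $S_2=(\Exp_S,g\wedge\Core_S)$, by showing that for every input $x$, support $\sigma$ and transcript $t$ the conjunction ``$t$ admissible and some $z$ accepted'' is the same predicate in both. Both languages in Definition~\ref{def:frontier} are existential projections of this conjunction: over $t\in\Nat(x)$ with $\sigma=\sigma_0$ for native reach, and over $t$ and $\sigma$ jointly for the frontier. So pointwise equality gives equality of $\Lang_R$ and $\Front_R$ at once.

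The key step is a factoring lemma for the acceptance game. The guard $g(x,t)$ is deterministic and does not depend on $z$, $\vec y$ or $r$. For each well-formed bounded tuple we therefore have
\[
\Pr_r[(g\wedge\Core_S)(x,z,t,\vec y;r)=1]=g(x,t)\cdot\Pr_r[\Core_S(x,z,t,\vec y;r)=1].
\]
This probability is $0$ when $g(x,t)=0$, and otherwise equals the original one. Hence the gap promise $[0,1/3]\cup[2/3,1]$ is inherited, so $S_2$ is a legitimate stage. It also gives $D_{S_2}=g\wedge D_{S_1}$, where $D_{S_1}=D_S$ because the core is unchanged. The quantifier block $\Q_k\vec y$ ranges over $\vec y$, on which $g$ does not depend. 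For fixed $(x,t)$ the guard is therefore a constant, and $\Q_k\vec y\,(g\wedge D_S)\iff g\wedge\Q_k\vec y\,D_S$. When $g=1$ this is trivial. When $g=0$ both sides are false, since a quantified formula over a nonempty domain with constant-false matrix is false, and the paper guarantees every move domain contains the empty string. Thus $\Acc_{S_2}(x,z,t)\iff g(x,t)\wedge\Acc_S(x,z,t)$.

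Combining the two directions is then immediate. In $S_1$ the condition is $\Exp_S(x,\sigma,t)\wedge g(x,t)\wedge\exists z\,\Acc_S$. In $S_2$ it is $\Exp_S(x,\sigma,t)\wedge\exists z\,(g(x,t)\wedge\Acc_S)$, and $g$ can be pulled out of $\exists z$ because it is independent of $z$. The two conditions agree. We also check the side conditions: $\Exp_S\wedge g$ and $g\wedge\Core_S$ remain uniform polynomial time, and length bounds and malformed-record rejection are untouched. Soundness transfers in both directions, since each stage's admissible-and-accepted set is the same conjunction as the original stage's set intersected with $\{g=1\}$. That set is therefore contained in the original sound set.

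The main obstacle is the quantifier step, specifically the case $g=0$ under alternation and bounded error. One must make sure that the universal blocks cannot make a guarded-false matrix vacuously true. That would require an empty move domain, which the contract excludes. One must also confirm that multiplying by $g$ preserves the gap on every tuple, not just on accepted ones. I would state this as a small lemma about $\Q_k$ over nonempty domains and a $\vec y$-independent conjunct, and then invoke it.
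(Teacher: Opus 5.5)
Your proof is correct and follows the paper's argument. Multiplying the core's acceptance probability by $g\in\{0,1\}$ preserves the pointwise gap and turns the threshold matrix into $g\wedge D_S$; the move-independent guard then commutes with the nonempty quantifier blocks, and both stages reduce to the same conjunction of exposure, guard, and thresholded acceptance before projecting over $\sigma_0$ or over all supports. The differences are cosmetic: you pull $g$ out to the level of $\Acc$ rather than forming the paper's fused matrix under the quantifiers, and you add checks of soundness and uniformity that the paper leaves implicit.
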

A guard can reject the same records at either interface without changing
accepted behavior. Appendix~\ref{app:guard-proof} proves the result;
Appendix~\ref{app:local-invariant} characterizes the associated local invariant.

This invariance explains an audit limitation. With one input and two
transcripts $a,b$, let a source expose and accept only $a$. A target that
exposes only $a$ and whose core accepts both transcripts has a raw authority
change. Moving the guard for $a$ into the target core produces an equivalent
target that exposes both transcripts but accepts only $a$; now the raw change
is exposure. Neither transition escapes the source frontier. Component labels
are representation-dependent even when accepted behavior is unchanged.

\begin{theorem}[Semantic containment and escape diagnostic]\label{thm:diagnostic}
If $T\preceq^{\rm auth}S$ and
$\Nat_T(x)\subseteq\Exp_S^*(x)$ for every $x$, then
$\Lang_R(T)\subseteq\Front_R(S)$, regardless of their depths.
More precisely, if $x\in\Lang_R(T)\setminus\Front_R(S)$, every accepted
target-native transcript $t$ for that input satisfies
\[
t\notin\Exp_S^*(x)\quad\hbox{or}\quad \neg K_S(x,t).
\]
\end{theorem}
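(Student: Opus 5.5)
The plan is a direct unfolding of Definition~\ref{def:frontier} and Definition~\ref{def:authority}, using only the transcript-indexed predicate $K$. The key observation is that both languages can be rewritten through $K$:
\[
x\in\Lang_R(T)\iff \exists t\in\Nat_T(x)\ K_T(x,t),\qquad
x\in\Front_R(S)\iff \exists t\in\Exp_S^*(x)\ K_S(x,t).
\]
This follows by pulling the quantifier $\exists z$ inside, since $K_S(x,t)\iff\exists z\,\Acc_S(x,z,t)$. Once both memberships are expressed this way, the quantifier depths $k$ of $T$ and $S$ never enter the argument. They are absorbed into $K_T$ and $K_S$, and authority compares these predicates directly. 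This is why the statement holds ``regardless of their depths.''

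For the containment, I fix $x\in\Lang_R(T)$ and choose a witness $t\in\Nat_T(x)$ with $K_T(x,t)$. The hypothesis $\Nat_T(x)\subseteq\Exp_S^*(x)$ gives $t\in\Exp_S^*(x)$. Authority, $T\preceq^{\rm auth}S$, gives $K_S(x,t)$. The same transcript $t$ then witnesses $x\in\Front_R(S)$. The only thing to verify is that $t$ is interpreted in the fixed common transcript encoding, so that the same string is meaningful on both sides. Malformed or out-of-bound records are rejected uniformly, so the length bounds cause no mismatch: $t$ is already within $S$'s declared bounds because it lies in $\Exp_S^*(x)$.

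For the ``more precisely'' clause, I argue by contraposition, pointwise in $t$ rather than through the global hypotheses. Suppose $x\in\Lang_R(T)\setminus\Front_R(S)$ and $t\in\Nat_T(x)$ with $K_T(x,t)$. If both $t\in\Exp_S^*(x)$ and $K_S(x,t)$ held, then $t$ would witness $x\in\Front_R(S)$, a contradiction. Hence $t\notin\Exp_S^*(x)$ or $\neg K_S(x,t)$. This step needs neither global hypothesis, only the frontier definition. It also implies the first part: under the two hypotheses, no such $t$ can exist. So I will present the diagnostic as the main lemma and derive the containment as a corollary.

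I expect no real obstacle here. The only delicate point is making the rewriting of $\Lang_R$ and $\Front_R$ through $K$ explicit. In particular, I must check that soundness and the gap assumption are not needed: bounded-error acceptance is already encoded in $D$ and $\Acc$, so the argument is purely set-theoretic.
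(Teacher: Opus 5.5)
Your proposal is correct and is the paper's argument: a target-native accepted transcript $t$ with $t\in\Exp_S^*(x)$ and $K_S(x,t)$ would itself witness $x\in\Front_R(S)$, and the two containment premises guarantee such a $t$ for every $x\in\Lang_R(T)$. Rewriting both languages through $K$ just makes explicit why depth, the gap, and soundness play no role.
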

The proof is in Appendix~\ref{app:diagnostic-proof}.

This is an elementary semantic diagnostic. It says exactly what an escape
witness must violate; it does not recover those premises from benchmark scores.
In particular, authority-preserving internalization of permitted support can
increase native reach only within the old frontier. The familiar three-field
report---depth, authority, exposure---records a useful architecture description,
but is not a theorem that three causes are independent. Because authority
already compares fully quantified acceptance, increased depth cannot produce
escape while both semantic containment premises hold. Depth obstructions need
the additional class argument of Appendix~\ref{sec:limits}.

Either core change or exposure change can realize frontier escape at fixed
depth, for exact and randomized stages. Appendix~\ref{app:realizations} gives
separate constructions that keep the other component fixed.
Strict escape is
stronger than a changed component identifier or a failed global containment
test: the latter can occur on an input that another source transcript already
solves. Appendix~\ref{app:chain} extends the construction to finite chains.

\section{Program repair: faster search and a proved expansion}\label{sec:repair}
This is a controlled symbolic computational experiment. Its proposal policies
are explicit search procedures, and no neural model is trained or invoked.
The restricted program grammar and complete truth-table specifications make
exact coverage comparisons possible. The runs measure search behavior; a
separate invariant proof excludes every old-interface solution for a new
target. Neither the finite measurements nor that task-specific proof establish
a complexity-class separation or learned recursive self-improvement.

\subsection{Programs, permissions, and exact evidence}
For brevity, $L$ and $F$ in the finite counts below denote $\Lang_R$ and
$\Front_R$ for this task family.
Consider Boolean decision programs on $d=2^h\ge4$ inputs, represented by a
fixed complete binary tree. Leaves read the inputs once, in order. Each of
the $d-1$ internal nodes uses AND, OR, or XOR. The initial program uses AND
at every node. A support mask grants permission to change selected operators;
the harness rejects changes at other nodes and rejects all other syntax.
Constants, negations, leaf reordering, and arbitrary code execution are outside
the declared language. Generated programs are executable Python functions.

A task input is $x=(d,T)$, where $T$ is an explicit $2^d$-bit truth table.
An output $z$ encodes the operators; $R(x,z)$ means agreement with $T$ on every
Boolean input. The terminal checker evaluates the entire table and binds the
output to the final program record. The transcript is this final record;
search traces are logged separately and do not restrict exposure. Hashes
identify code and target files in the artifact; acceptance rechecks content
rather than trusting a claimed hash.

The old maximum mask $M_{\rm leaf}$ permits edits only at the $d/2$ parents
of leaves. Default support permits one such node. The expanded maximum mask
$M_{\rm all}$ permits every internal node. Any subset of a maximum mask is
admissible support. The resulting frontier is exactly the set of tables
computed by permitted programs. Raising default permission to the old maximum
makes native reach equal to the old frontier. Enlarging the maximum mask is
a different operation.

The checker is uniform and polynomial-time in the declared input length
$n=\Theta(2^d)$, taking $O(d2^d)$ elementary operations. It is exponential in
the number of Boolean variables; the experiment makes no claim of efficient
verification for succinct arbitrary specifications. A dummy existential move
places this instance at exact depth one.

\begin{figure}[t]
\centering
\begin{tikzpicture}[>=Latex,font=\small,
 gate/.style={draw,rounded corners,minimum width=13mm,minimum height=6mm}]
\node[gate,draw=fiAccent,fill=fiAccent!12] (r) at (0,1.5) {AND};
\node[gate,draw=fiFrontier,fill=fiFrontier!9] (l) at (-1.45,.5) {$\circ_1$};
\node[gate,draw=fiFrontier,fill=fiFrontier!9] (q) at (1.45,.5) {$\circ_2$};
\node (a) at (-2,-.35) {$v_1$}; \node (b) at (-.85,-.35) {$v_2$};
\node (c) at (.85,-.35) {$v_3$}; \node (e) at (2,-.35) {$v_4$};
\draw[->] (a)--(l); \draw[->] (b)--(l); \draw[->] (c)--(q); \draw[->] (e)--(q);
\draw[->] (l)--(r); \draw[->] (q)--(r);
\node[anchor=west,text=fiAccent,font=\footnotesize,align=left] at (.8,1.7) {Old: root locked\\New: root editable};
\node[align=center,font=\footnotesize] at (0,-1) {$\circ_i\in\{\mathrm{AND},\mathrm{OR},\mathrm{XOR}\}$\\On $(1,1,0,0)$ every old program returns $0$.};
\end{tikzpicture}
\caption{An enforced code interface and its exclusion witness at $d=4$.
Changing the root to OR and choosing AND below it yields output $1$ on the
displayed input. Counterexample-guided search alone cannot make that patch
admissible under the old permissions.}
\label{fig:cut}
\end{figure}

\begin{proposition}[An immutable-gate exclusion certificate]\label{prop:cut}
For every $d=2^h\ge4$, let
\[
f_*(v)=\left(\bigwedge_{j\le d/2}v_j\right)\lor
          \left(\bigwedge_{j>d/2}v_j\right).
\]
The truth table of $f_*$ belongs to the expanded frontier and not to the old
frontier, regardless of the proposal policy. The same claim holds even if
every old nonroot node is editable among the three allowed operations.
\end{proposition}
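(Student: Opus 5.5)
Two parts are needed: membership of $f_*$ in the expanded frontier, and its exclusion from the old frontier for every old-interface program. Since both frontiers are exactly the sets of truth tables computed by permitted programs, the proposal policy plays no role; only the existence of an accepted program matters.

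\emph{Membership.} Under $M_{\rm all}$ we exhibit a program directly. Set the root to OR and every other internal node to AND. The left subtree of the root is a complete AND tree over $v_1,\dots,v_{d/2}$, and the right subtree is the same over $v_{d/2+1},\dots,v_d$. The program therefore computes $f_*$. It changes only operators at nodes allowed by $M_{\rm all}$, so it is admissible support. The terminal checker evaluates the full table and accepts, and the dummy existential move is trivially satisfied.

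\emph{Exclusion.} We prove the stronger claim that any program whose root is AND fails to compute $f_*$, whatever the operators at the nonroot nodes. This covers every subset of $M_{\rm leaf}$, since the root is never a leaf parent when $d\ge4$.

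Take the witness input $v=(1,\dots,1,0,\dots,0)$ with ones in the first half. Then $f_*(v)=1$. The right subtree of the root sees only zeros. The main step is an invariant on subtrees: any tree built from AND, OR, XOR over all-zero inputs outputs $0$. This follows by induction on height, since each of the three operations maps $(0,0)$ to $0$. Hence the right child outputs $0$, and the AND root outputs $0\ne f_*(v)$. So no program in the old interface agrees with the table of $f_*$, no accepted $z$ exists for any admissible old transcript, and the table lies outside the old frontier. This is exactly the $d=4$ instance on $(1,1,0,0)$ in Figure~\ref{fig:cut}.

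The only real obstacle is bookkeeping. We must confirm that the old frontier is exactly the set of tables of root-AND programs, and this relies on the harness rejecting all other syntax (constants, negations, reordering). The invariant argument itself is immediate.
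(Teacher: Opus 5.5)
Your proposal is correct and takes the same approach as the paper. Membership uses the program with OR at the root and AND at every other node. Exclusion uses the input with ones on the left half and zeros on the right half: because AND, OR and XOR all map $(0,0)$ to $0$, the right subtree outputs $0$ under every nonroot operator assignment, so the immutable AND root returns $0$, and this also covers the larger all-nonroot-editable mask.
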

The proof is in Appendix~\ref{app:pruning}.
If the expanded stage defaults to $M_{\rm all}$, this is also an explicit
witness in $\Lang_R(S_+)\setminus\Front_R(S)$ for Theorem~\ref{thm:diagnostic}.
Both languages remain within exact polynomial-time verification. This is an
interface separation, not a complexity-class separation.

\subsection{Counterexamples improve search without changing coverage}
The repair agent sees the initial code, editable nodes, candidate language,
and verifier feedback. Its proposal interface does not expose $T$ or the
target program. A verifier call returns either acceptance or the first input
on which the candidate disagrees with $T$, together with its expected value.
The target is private to this software interface, not a cryptographically
isolated service.

We compare two policies using the same target-independent candidate order.
Blind enumeration proposes each candidate in order. The feedback policy
retains counterexamples and skips candidates inconsistent with any retained
example, then calls the verifier on the next consistent one. Both use the
same admission checks, final verifier, and permitted program set. This is a
finite CEGIS loop, following the synthesis literature
\cite{SolarLezama2006,JhaSeshia2017}.

\begin{proposition}[Sound pruning preserves solutions]\label{prop:prune}
For a fixed finite candidate set and order, suppose an exact verifier accepts
if and only if the candidate is correct, and every negative reply supplies a
true counterexample to the current candidate. Consistency
filtering never removes a correct candidate. If a correct candidate exists,
the feedback policy returns the same first correct candidate as blind
enumeration with no more verifier calls. Neither policy changes the stage's
native language or frontier.
\end{proposition}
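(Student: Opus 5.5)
The proof has three claims, taken in order: consistency filtering is safe, the first correct candidate is the same under both policies with no extra calls, and the languages are unchanged. Fix the finite ordered candidate list $c_1,\dots,c_m$ and write $E$ for the set of retained counterexamples at any point in the feedback run. Each element of $E$ is a pair $(v,b)$ with $T(v)=b$, because every negative reply is assumed to supply a true counterexample. Filtering removes a candidate only if it disagrees with some $(v,b)\in E$. A correct candidate agrees with $T$ on every input, so it is consistent with every retained pair and is never removed. This invariant holds at every step, by induction on the number of verifier calls.

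Next, let $i^*$ be the least index of a correct candidate. Blind enumeration calls the exact verifier on $c_1,\dots,c_{i^*}$. It rejects the first $i^*-1$ and accepts $c_{i^*}$, since the verifier accepts if and only if the candidate is correct. It therefore returns $c_{i^*}$ after exactly $i^*$ calls. The feedback policy moves through the same order and calls the verifier only on candidates it has not skipped. Every candidate it verifies before reaching $i^*$ has index below $i^*$, so it is incorrect and is rejected. By the first step, $c_{i^*}$ itself is never skipped. Hence the first accepted candidate is $c_{i^*}$. The feedback policy calls the verifier at most once per index in $\{1,\dots,i^*\}$, so it makes at most $i^*$ calls. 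The one point that needs care is that the policy never re-verifies a candidate or verifies one out of order. I would build this into the definition of the policy as a single forward scan.

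Finally, native reach and frontier are defined in Definition~\ref{def:frontier} purely from $\Exp_S$, $\Core_S$, $\sigma_0$ and the quantified acceptance predicate. None of these mentions the proposal policy, the candidate order, or the feedback traces. The search traces are logged separately and do not restrict exposure, as stated in the setup. Both policies also share the same admission checks, final verifier and permitted program set. So $\Lang_R(S)$ and $\Front_R(S)$ are literally the same sets under either policy.

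I expect no real obstacle. The main care point is the second step: stating the feedback policy precisely enough that ``no more calls'' follows from an injective map from its verified candidates into $\{c_1,\dots,c_{i^*}\}$.
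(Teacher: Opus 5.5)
Your proposal is correct and follows essentially the same route as the paper's proof. True counterexamples cannot eliminate a correct program, so the first correct candidate $c_{i^*}$ is never skipped, both policies return it, and the feedback policy submits only a subset of $c_1,\dots,c_{i^*}$; the language claim rests, as in the paper, on exposure admitting all permission-valid final records independently of the proposal policy, and your single-forward-scan convention just makes that subset step explicit.
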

The proof is in Appendix~\ref{app:pruning}. The last conclusion uses the stated
exposure convention: it admits all permission-valid final records independently
of the online policy. The proposition does not bound total computation. Local
candidate filtering costs work, and the complete candidate space is constructed
in advance in this testbed.

The exact task family permits exhaustive paired evaluation under sixteen fixed candidate orders. The complete protocol and counting conventions are in Appendix~\ref{app:repair-evaluation}. Table~\ref{tab:results} reports coverage and verifier calls; local filtering and precomputation are excluded from that call metric.

\begin{table}[t]
\centering\small
\begin{tabular}{@{}rlrrr@{}}
\toprule
$d$ & Edit mask & $|F|$ & Blind calls & Guided calls\\
\midrule
4 & $M_{\rm leaf}$ & 9 & 5.00 & 2.94\\
4 & $M_{\rm all}$ & 27 & 14.00 & 4.27\\
8 & $M_{\rm leaf}$ & 81 & 41.00 & 5.56\\
8 & $M_{\rm all}$ & 2,187 & 1,094.00 & 10.42\\
\bottomrule
\end{tabular}
\caption{Exact coverage and mean verifier calls to acceptance. Each distinct
reachable target is evaluated under sixteen common candidate orderings.
Default native reach has three tasks at each size; changing the default to
$M_{\rm leaf}$ yields native reach $9$ or $81$ without changing the old frontier.
The four rows contain 36,864 paired runs in total.}
\label{tab:results}
\end{table}
At $d=8$, full permissions admit 2,187 distinct targets. Blind enumeration
averages 1,094 verifier calls, while feedback averages 10.42. Both reach every
one of these targets when allowed to complete. The guided policy makes at most
35 verifier calls in these runs. The old maximum mask admits 81 targets;
its smaller default admits three. These counts are computed from the admitted
program sets independently of either search policy.

The reduction in verifier calls can be large even though the languages are
identical between policies. Enlarging edit permissions changes coverage and
also increases the search space; it need not improve fixed-budget success.
For a given ordering, the feedback policy still inspects candidates locally
up to the same first correct candidate. Thus verifier-call savings should
not be reported as proportional runtime or compute savings.

\begin{figure}[t]
\centering
\begin{tikzpicture}
\begin{axis}[width=.76\linewidth,height=6cm,
 xmode=log,log basis x=2,xmin=1,xmax=64,ymin=0,ymax=105,
 xtick={1,2,4,8,16,32,64},xticklabels={1,2,4,8,16,32,64},
 ytick={0,25,50,75,100},xlabel={Verifier-call budget},
 ylabel={Solved pairs (\%)},grid=major,grid style={gray!15},
 tick label style={font=\footnotesize},label style={font=\small},
 legend style={font=\footnotesize,at={(.03,.97)},anchor=north west,draw=none,legend columns=1}]
\addplot[thick,fiNative,mark=square*] coordinates {(1,0.045725) (2,0.091449) (4,0.182899) (8,0.365798) (16,0.731596) (32,1.463192) (64,2.926383)};
\addlegendentry{Blind enumeration}
\addplot[thick,fiFrontier,mark=*] coordinates {(1,0.045725) (2,0.148605) (4,0.900206) (8,17.175354) (16,97.930956) (32,99.991427) (64,100.000000)};
\addlegendentry{Counterexample feedback}
\end{axis}
\end{tikzpicture}
\caption{Search effectiveness at $d=8$ with the same full-permission frontier.
Percentages use all 2,187 reachable targets and sixteen orderings (34,992
pairs). Curves count verifier calls, excluding local filtering and
precomputation. Coverage is identical; no hierarchy separation is implied.}
\label{fig:search}
\end{figure}

\paragraph{Validation and limits.}
We check the bit-parallel verifier against separately compiled Python
execution on every program-input pair at both sizes. Additional checks cover
permission violations, malformed encodings, certificate substitution, genuine
counterexamples, and unreachable targets. The all-source exclusion in
Proposition~\ref{prop:cut} holds for the growing family; empirical call counts
are only those of the two finite sizes. The task is Boolean-program repair
with fixed syntax and exact feedback. It does not demonstrate learned
improvers, natural-language specifications, autonomous evaluator redesign,
or transfer to software repositories.

\section{Reliability and evaluating proposed successors}\label{sec:random}
\subsection{Reliability under a fixed acceptance contract}
An occasional accepting run is insufficient evidence of capability when a false answer can pass with nonzero probability. The pointwise gap in Definition~\ref{def:stage} distinguishes the high and low sides before the verification moves are quantified. The following result concerns repeated independent checks of the same complete tuple, with every witness domain fixed.

\begin{theorem}[Amplification preserves the frontier]\label{thm:amplify}
Fix a stage $S$ and a polynomial-time computable, odd positive,
polynomially bounded repetition count
$N(n)$. Replace its terminal core by a strict majority of $N(n)$ independent
runs on the \emph{same} tuple. Preserve all object and move domains and the
exposure relation, while allowing polynomially more coins and runtime.
The resulting stage $S^{[N]}$ has the same $D_S$, quantified acceptance,
native reach, and frontier. Its pointwise error is at most $e^{-N/18}$.
\end{theorem}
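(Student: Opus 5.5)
The plan is a pointwise argument that stays entirely inside the terminal core and never touches the quantifier structure. Fix any well-formed bounded tuple $w=(x,z,t,\vec y)$ and let $p(w)=\Pr_r[\Core_S(w;r)=1]$. By the gap assumption of Definition~\ref{def:stage}, either $p(w)\ge 2/3$ or $p(w)\le 1/3$. The new core $\Core_{S^{[N]}}(w;r_1,\dots,r_N)$ runs $\Core_S(w;r_i)$ on $N=N(n)$ independent coin blocks and accepts iff strictly more than $N/2$ runs accept.

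This core is well defined and uniform polynomial-time. $N$ is polynomial-time computable and polynomially bounded, each run is polynomial, and malformed or out-of-bound tuples are still rejected because every run rejects them. Nothing else changes: exposure, default support, and all object and move domains are held fixed.

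\textbf{Chernoff step.} Let $X=\sum_i \mathbf 1\{\Core_S(w;r_i)=1\}$, a sum of $N$ i.i.d.\ Bernoulli$(p)$ variables.
\begin{itemize}
\item If $p\ge 2/3$, majority failure requires $X\le N/2$, that is, $X/N-p\le -1/6$. Hoeffding's inequality bounds this by $\exp(-2N(1/6)^2)=e^{-N/18}$.
\item If $p\le 1/3$, majority acceptance requires $X>N/2$, which Hoeffding bounds the same way.
\end{itemize}
So the new acceptance probability $p^{[N]}(w)$ lies in $[0,e^{-N/18}]\cup[1-e^{-N/18},1]$. For $N\ge 1$ we have $e^{-1/18}>2/3$, so the bound lands on the wrong side of the gap. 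Two exact facts cover this:
\begin{itemize}
\item $p=1$ forces $p^{[N]}=1$ and $p=0$ forces $p^{[N]}=0$.
\item The majority of i.i.d.\ Bernoulli$(p)$ trials with $N$ odd has success probability $\ge p$ when $p\ge 1/2$ and $\le p$ when $p\le 1/2$. This is the standard monotone-amplification fact, proved by a short coupling or derivative argument.
\end{itemize}
Hence $p^{[N]}\ge 2/3$ whenever $p\ge 2/3$, and $p^{[N]}\le 1/3$ whenever $p\le 1/3$. Together with the Hoeffding bound, this shows that $S^{[N]}$ satisfies the gap contract and that $D_{S^{[N]}}(w)=D_S(w)$ for every well-formed tuple. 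The pointwise error is at most $\min(1/3,e^{-N/18})\le e^{-N/18}$.

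\textbf{Lifting to the languages.} Since $D$ agrees on every tuple and the move domains and $\Q_k$ prefix are unchanged, $\Acc_{S^{[N]}}(x,z,t)\iff\Acc_S(x,z,t)$ for all $x,z,t$. The exposure predicate is unchanged, so $\Nat$ and $\Exp^*$ are identical. Definition~\ref{def:frontier} then gives equal $\Lang_R$ and $\Front_R$. Task-relative soundness transfers verbatim, because it depends only on $\Exp$ and $\Acc$.

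The main obstacle is the gap-preservation check for small $N$ (for example $N=1$ or $3$), where the exponential bound alone does not certify the $[0,1/3]\cup[2/3,1]$ contract. I would handle it with the monotone majority fact rather than try to sharpen the constant. A secondary point is that the hidden random tape grows to $N$ times its length while staying after all moves, so the amplified core still only sees coins after the quantified moves, as the contract requires.
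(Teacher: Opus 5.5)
Your proposal is correct and follows the paper's proof. Monotonicity of odd majority keeps each side of the gap fixed for every odd $N$; the paper proves this through the increment identity $g_{2m+3}(p)-g_{2m+1}(p)=\binom{2m+1}{m}p^{m+1}(1-p)^{m+1}(2p-1)$ instead of a coupling or derivative argument. As in the paper, Hoeffding gives $e^{-N/18}$, and pointwise equality of $D$ lifts through the unchanged quantifier prefix and exposure.

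One small correction: the Hoeffding bound alone fails to certify the gap whenever $e^{-N/18}>1/3$, that is, for every odd $N\le 19$, not just $N=1,3$. Your monotonicity step already covers all of these cases.
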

Independent majority improves reliability without moving any tuple across the acceptance gap. The proof is in Appendix~\ref{app:random-proofs}.

For any constant $0<\delta<1/2$, sufficiently many odd repetitions yield
error at most $\delta$ with the same acceptance language, using threshold
$1-\delta$ for the amplified predicate. The runtime and raw core implementation
change, but semantic authority does not. Preserving the witness domains is
essential: enlarging them while amplifying would be a separate stage change.
Repeatedly sampling \emph{different} candidate answers is a search policy, not
the verifier amplification covered by this theorem. Correlated judge calls
also do not satisfy its independence premise.

The false-tuple counterexample and exact numerical curves are in Appendix~\ref{app:amplification-example}. They distinguish verifier amplification from drawing different proposed answers. A learned judge requires a separate justification of its gap and task-relative soundness; repeated agreement does not supply that justification.

\subsection{A separate contract for successor selection}
A proposed successor can change the solver, harness, evaluator, or improver. Structural comparison asks which tasks its declared stage accepts; empirical selection asks whether it is better under an improvement criterion. Neither a containment proof nor a frontier escape establishes that a benchmark score transfers to deployment. Conversely, a score gain does not by itself prove a changed acceptance language.

Proposition~\ref{prop:selection} controls false selection across an adaptive run under a history-conditional error budget. Proposition~\ref{prop:paired} gives a sufficient implementation using fresh paired evaluations and an independently justified allowance for evaluation-to-deployment mismatch. Their full assumptions, thresholds, proofs, and failure cases are in Appendix~\ref{sec:rsi} and Appendix~\ref{app:paired-proof}. These results supplement the accepted-language account with a separate validity condition; they do not certify a learned judge from the data used to select its successors.

\section{Scope, related work, and conclusion}
The framework compares declared verification families through task soundness, accepted-input languages, and admissible evidence. Its central criterion is a semantic containment statement, and its representation invariant prevents equivalent placements of a check from being treated as different capability. The program-repair example combines this account with an executable feedback policy and a general exclusion witness: faster search preserves coverage, while a permitted new operation admits a target unavailable to all old-interface programs.

Classical verification and hierarchy theory provide the complexity background \cite{CKS1981,Sipser1983,Lautemann1983}. The placement, depth, and certificate-substitution results retain their explicit uniformity and separation assumptions (Appendices~\ref{app:class-boundaries} and~\ref{sec:limits}); they are not new class separations. The repair check enumerates explicit truth tables, so its polynomial cost is in that expanded input length, not in the number of Boolean variables. The synthetic tasks and audits establish no generalization or learned recursive-improvement result for deployed LLM agents. Appendix~\ref{sec:scope} gives the full comparison with prior work and the additional scope limits.

Contemporary systems can revise prompts, workflows, code, evaluators, and improvement procedures. The relevant questions remain which accepted behavior changes, which evidence the old interface already allowed, and whether successor evaluation remains valid. A changed goal requires an explicit task comparison, and any necessity of human input depends on assumptions about its exclusive information source (Appendix~\ref{sec:goal-information}). The framework turns these questions into distinct obligations for a verified capability expansion and for the empirical value of the update.

\section*{AI tool use}
OpenAI's Codex assisted with manuscript revision, literature synthesis,
proof development and checking, and preparation of executable artifacts and
figures. The author is responsible for the paper's content. The scope of the
executable and formal checks is specified in Appendix~\ref{app:reproduce}.

\appendix
\section{Digital computation and verification background}\label{app:background-overview}\label{sec:prelim}
\subsection{Complete bounded-stage contract}\label{app:stage-contract}
\begin{definition}[Bounded probabilistic stage]\label{def:stage}
A stage for a polynomially balanced task $R$ is
$S=(k,p_S,\sigma_0,\Exp_S,\Core_S)$, with the following data and contracts.
\begin{enumerate}[label=(\roman*)]
\item $k\ge1$ is fixed. The polynomial $p_S(n)$ bounds the lengths of support
$\sigma$, transcript $t$, output $z$, and each verification move $y_i$.
Each object type has its own declared polynomial length limit within this
common envelope. Raising the envelope to encode a longer transcript does not
expand the other domains. A move ranges over all binary strings up to its
declared limit.
The default support $\sigma_0(x)$ is uniformly polynomial-time computable
and admissible; we suppress its input argument when unambiguous.
All quantified domains include the empty string.
\item $\Exp_S(x,\sigma,t)$ is a deterministic polynomial-time predicate that
checks admissible support and transcript encodings and their length bounds.
Support supplies data through this interface; it does not install an oracle.
\item $\Core_S(x,z,t,\vec y;r)$ is a uniform polynomial-time Boolean predicate.
Here $r$ is uniform on $\bits^{\rho_S(n)}$ for a declared polynomial $\rho_S$.
Its runtime has a polynomial bound separate from the object-length bound.
Malformed encodings are rejected. The random tape is hidden until after all
moves, so moves cannot depend on it.
\item On every well-formed bounded tuple $w=(x,z,t,\vec y)$, the acceptance
probability obeys
\begin{equation}\label{eq:gap}
a_S(w):=\Pr_r[\Core_S(w;r)=1]\in[0,1/3]\cup[2/3,1].
\end{equation}
Define $D_S(w)=\mathbf1\{a_S(w)\ge2/3\}$, with value zero on malformed or
out-of-bound tuples, and define quantified acceptance
\begin{equation}\label{eq:accepted-output}
\Acc_S(x,z,t)\quad\Longleftrightarrow\quad\Q_k\vec y\ D_S(x,z,t,\vec y)=1.
\end{equation}
\item Task-relative soundness holds: for every exposed transcript,
\begin{equation}\label{eq:sound}
\Exp_S(x,\sigma,t)\ \wedge\ \Acc_S(x,z,t)\quad\Longrightarrow\quad R(x,z).
\end{equation}
\end{enumerate}
The coin and runtime bounds are part of the core specification. Increasing
them alone does not increase the domains of support, output, or moves.
\end{definition}

The tuple retains the exact stage interface while allowing hidden core coins.
The gap makes $D_S$ a total $\BPP$ predicate on encoded tuples: the original
randomized algorithm decides its high-probability set. This observation requires
the gap on \emph{all} bounded tuples, with an efficiently checked encoding
domain. A promise on an unspecified, computationally inaccessible subset would
not justify the same $\BPP$ or derandomization claims.

\paragraph{What soundness requires.}
The gap and soundness are different assumptions. A checker that confidently
accepts a false answer can satisfy the gap and violate soundness. Conversely,
at depth two a false output can have a locally accepting move and still be
rejected by the quantified game: $D(a,u)=\mathbf1\{u=0\}$ satisfies the gap,
but $\exists a\,\forall u\,D(a,u)$ is false on the Boolean challenge domain.
Thus quantified soundness is \emph{not} equivalent to requiring low acceptance
on every move tuple of every false output. The latter is sufficient and stronger.
At depth one it is equivalent on exposed transcripts.

In a patch audit, $\sigma$ can encode a counterexample, $t$ the bounded
proposal/test/repair record, and the final $z$ the patch to certify. Any
verification moves after $z$ support or challenge that fixed patch. If the
output itself must change in response to a universal challenge, a different
game with an output strategy is required; it is not silently encoded as a fixed
output in~\eqref{eq:accepted-output}.

\subsection{Neural models, programs, and verification}\label{sec:computational-bridge}
An agentic system can use a neural model to propose actions while software
maintains memory, invokes tools, and checks results. ReAct, for example,
interleaves language-model reasoning with actions that obtain information
from an external environment~\cite{Yao2023ReAct}. This extends the unit of
analysis from an isolated model to an interacting system; neural inference
remains a component of that system. The present framework studies its
verification interface rather than a particular network architecture.

A Turing machine supplies a common description of such digital computation.
Stored weights, tokens, memory, and code have finite bit encodings. A forward
pass executes arithmetic with finite precision, and decoding and tool control
execute further instructions. Once the random bits and external responses
are specified, these operations define a computational execution. Standard
Turing-machine and RAM models simulate one another with polynomial overhead
under their stated cost conventions~\cite{BarakRunningTime}. No claim that a
standalone network is itself a universal machine is needed. Nor does digital
computability alone imply polynomial time: inference, precision, memory,
and tool costs need explicit bounds. Appendix~\ref{app:digital-resources}
details this accounting and the distinction between a fixed checkpoint and
a uniformly specified family.

For intuition, consider a coding agent asked to repair a program. A neural
model suggests a patch; a controller runs permitted checks and returns
feedback; a terminal rule determines whether to accept the final patch.
Table~\ref{tab:agent-mapping} maps these roles to the paper's notation.
The correctness relation must match the claim: passing a finite test suite
certifies that suite's condition, not arbitrary program correctness.

\begin{table}[htbp]
\centering\small
\begin{tabularx}{\linewidth}{@{}p{.29\linewidth}X@{}}
\toprule
Coding-agent component & Role in the framework\\
\midrule
Program and specification & Task input $x$; $R(x,z)$ states the required correctness of patch $z$.\\
Neural proposer and decoder & Search policy producing candidate patches and records; its discovery probability is separate from acceptance.\\
Permitted feedback and memory & Support $\sigma$ and a bounded record $t$, when admitted by the declared interface.\\
Tool and editing permissions & Exposure predicate $\Exp_S$ checks which support and records are admissible.\\
Final verification & $\Core_S$ checks a complete tuple; $\Q_k$ states the required proposal/challenge conditions.\\
\bottomrule
\end{tabularx}
\caption{A role mapping for a declared coding-agent protocol. These roles can
have neural or conventional implementations; membership in the model still
requires the bounds, probability gap, and soundness in Definition~\ref{def:stage}.}
\label{tab:agent-mapping}
\end{table}

With exposure and acceptance fixed, changing neural weights or decoding can
improve how often an accepted witness is found while preserving the accepted
task sets. If an update changes admissible records or the checking rule,
those changes instead enter the stage comparison explicitly. A neural model
can also serve as a critic or checker, but its implementation does not
establish the required soundness or probability gap. Thus architecture is
abstracted from the notation, not excluded from the system.

The complexity results classify the declared \emph{acceptance languages}.
They do not predict a model's benchmark score, training sample efficiency,
or GPU latency. In particular, $\exists$ denotes the existence of a suitable
proposal, and $\forall$ demands every challenge in the declared domain.
Sampling a neural proposal or critic is an execution policy, not a way to
discharge those quantifiers automatically. This distinction also explains
why finite experiments illustrate mechanisms without proving asymptotic
class separations.

\subsection{Task encodings and classical verification}
Inputs, outputs, and messages are finite binary strings under fixed encodings.
Write $n=|x|$. A task relation $R\subseteq X\times Z$ is \emph{polynomially
balanced} if there is a polynomial $q$ such that
$R(x,z)\Rightarrow |z|\le q(|x|)$. This bounds every valid encoded output.
It does not assert that $R$ is decidable in polynomial time, or that a valid
output can be efficiently found. For a bounded patching task, $x$ contains the
program and specification, and $z$ is a patch of the declared maximum length.

\paragraph{Certificates and alternating verification (classical background).}
$\Pclass$ is deterministic polynomial time. A language is in $\NP$ if membership
has a polynomial-length certificate checked in polynomial time. More generally,
for fixed $k\ge1$, write
\[
\Q_k\vec y=\exists y_1\,\forall y_2\,\exists y_3\cdots Q_k y_k,
\]
with each string ranging over a nonempty polynomially bounded domain. A language
$L$ is in $\SP{k}$ when $x\in L$ is equivalent to $\Q_k\vec y\,V(x,\vec y)=1$
for a deterministic polynomial-time predicate $V$. The universal-start version
defines $\PP{k}$, and $\PH=\bigcup_{k\ge0}\SP{k}$ with $\SP{0}=\Pclass$.
These are standard bounded-alternation characterizations~\cite{CKS1981}.
There are $k$ quantifier blocks and $k-1$ switches between quantifier types;
``depth'' below counts blocks.

For example, $\exists a\,\forall u\,\exists v\,V(x,a,u,v)$ says that some
proposal $a$ has a valid reply $v$ for every challenge $u$. The reply may depend
on $u$, while $a$ cannot. A sampled critic does not implement $\forall u$.
An exhaustive search over the bounded domain, or a separately justified
certificate for the same universal condition, is needed. A polynomial bound on
the \emph{length} of $u$ permits exponentially many challenges. Thus a
polynomial-time terminal predicate does not make the whole quantified game a
polynomial-time executable agent.

Existential start matches proposal-first verification. Challenge-first games
have the universal-start dual. Several proposals without an intervening
universal block can be concatenated into one existential string if the total
length remains polynomial. This is the formal reason that additional sampled
reasoning traces need not increase depth.

\paragraph{Randomized terminal predicates.}
$\BPP$ consists of languages decidable in polynomial time with two-sided error
at most $1/3$ on every input. We use it for a \emph{tuple-level} predicate, after
the verification moves have been specified. The classical containment
\[
\BPP\subseteq\SP{2}\cap\PP{2}
\]
is the Sipser--G\'acs--Lautemann bound~\cite{Sipser1983,Lautemann1983}.
These background facts will locate the stage languages. They do not identify
the stage declaration of an observed product.

\section{Verifier classes and classical boundaries}\label{app:class-boundaries}
\begin{definition}[Randomized-verifier classes]\label{def:rv}
For $k\ge1$, a language $L$ belongs to $\SR{k}$ if some polynomial-time
randomized predicate $V(x,\vec y;r)$ has a pointwise gap on all bounded tuples
and
\[
x\in L\quad\Longleftrightarrow\quad
\Q_k\vec y\ [\Pr_r(V(x,\vec y;r)=1)\ge2/3].
\]
$\PR{k}$ is the universal-start dual. On no-instances the gap gives the dual
quantifier formula with threshold at most $1/3$ by negation.
\end{definition}

This is a paper-specific notation for bounded alternation over a total
$\BPP$ predicate. At depth one, $\NP\subseteq\SR{1}\subseteq\MA$:
the witness is chosen before the coins. The additional pointwise restriction
applies even to losing witnesses on yes-instances, so equality with standard
$\MA$ is not assumed. Placing the probability outside the existential witness
instead gives public-coin semantics; it changes the information available to
the witness and must be modeled separately~\cite{BabaiMoran1988}.

\begin{proposition}[Placement and deterministic recovery]\label{prop:placement}
For a sound stage of depth $k$,
\[
\Lang_R(S)\subseteq\Front_R(S)\subseteq\mathrm{dom}(R),\qquad
\Lang_R(S),\Front_R(S)\in\SR{k}.
\]
An exact stage has $\rho_S=0$, hence $D_S=\Core_S$. Its native and frontier
languages belong to $\SP{k}$, and its exact and probabilistic readings coincide.
\end{proposition}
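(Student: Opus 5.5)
The proof has three parts, handled in order: the two inclusions, membership in $\SR{k}$, and the exact specialization.

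\textbf{Inclusions.} Since $\sigma_0$ is admissible, $\Nat_S(x)\subseteq\Exp_S^*(x)$ pointwise. This gives $\Lang_R(S)\subseteq\Front_R(S)$ directly from \eqref{eq:native} and \eqref{eq:frontier}. For $\Front_R(S)\subseteq\mathrm{dom}(R)$, take $x\in\Front_R(S)$ with witnesses $\sigma,t,z$ satisfying $\Exp_S(x,\sigma,t)\wedge\Acc_S(x,z,t)$. Task-relative soundness \eqref{eq:sound} then yields $R(x,z)$, so $x\in\mathrm{dom}(R)$.

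\textbf{Membership in $\SR{k}$.} The idea is to fold the outer existential choices into the first existential block, which is allowed because $\Q_k$ begins with $\exists$. For the frontier, define a new first move $y_1'=(\sigma,t,z,y_1)$. Its length is polynomially bounded by $p_S$, and a fixed pairing encoding is used. Define the predicate
\[
V(x,y_1',y_2,\dots,y_k;r)=\Exp_S(x,\sigma,t)\wedge\Core_S(x,z,t,\vec y;r).
\]
For the native language, drop $\sigma$ and use $\sigma_0(x)$, which is polynomial-time computable.

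Two conditions must then be checked.
\begin{enumerate}
\item $V$ has the pointwise gap on all bounded tuples. Exposure is deterministic, so on a tuple where exposure fails the acceptance probability is $0$. Where exposure holds, the probability equals $a_S(w)$, which lies in $[0,1/3]\cup[2/3,1]$ by \eqref{eq:gap}. Malformed encodings of $y_1'$ are rejected with probability $0$.
\item The quantified formula is equivalent to membership. Since $\Exp_S$ does not depend on $y_2,\dots,y_k$, the formula $\exists(\sigma,t,z,y_1)\forall y_2\cdots[\Pr\ge 2/3]$ equals $\exists\sigma,t,z\,[\Exp_S\wedge\Q_k\vec y\,D_S]$. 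This is exactly \eqref{eq:frontier}.
\end{enumerate}

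The main obstacle is domain bookkeeping in this merge. Each move in Definition~\ref{def:rv} must range over all strings up to a length limit, but a merged string may fail to parse as a valid $(\sigma,t,z,y_1)$. I will handle this by rejecting malformed $y_1'$ deterministically. This is harmless under the existential quantifier: such strings contribute nothing, and the empty string remains in the domain. One further case: at $k=1$ no later moves exist, and the argument goes through unchanged.

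\textbf{Exact specialization.} When $\rho_S=0$, the core's probability is $\mathbf 1\{\Core_S=1\}\in\{0,1\}$, so $D_S=\Core_S$. The gap then holds trivially, and the predicate $V$ built above is deterministic polynomial-time. The same quantifier merge therefore gives the $\SP{k}$ characterization of \cite{CKS1981}. The exact and probabilistic readings coincide because $\Acc_S$ is defined through $D_S$, which equals $\Core_S$ in this case.
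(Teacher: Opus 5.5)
Your proposal is correct and follows the paper's proof essentially step for step: admissibility of $\sigma_0$ and soundness give the two inclusions, and merging $\sigma,t,z$ into the leading existential block with matrix $\Exp_S\wedge\Core_S$ preserves the gap and adds no alternation. As in the paper, you fix $\sigma=\sigma_0$ for the native case, and zero coins make that matrix deterministic. Your extra bookkeeping on malformed merged moves and nonempty quantifier domains spells out what the paper leaves implicit; note that malformed strings should be \emph{accepted} with probability $0$, not ``rejected with probability $0$'' as you wrote.
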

Permitted support can be selected with the first existential move; exact checking removes only the terminal randomness. The proof is in Appendix~\ref{app:placement}.

\begin{theorem}[Expressive completeness of native/frontier pairs]\label{thm:complete}
Fix $k\ge1$ and $L_0\subseteq L_1$ with both languages in $\SR{k}$.
There is a sound depth-$k$ stage with $\Lang_R(S)=L_0$ and $\Front_R(S)=L_1$
for a polynomially balanced task $R$. If both languages are in $\SP{k}$,
the stage can be exact.
\end{theorem}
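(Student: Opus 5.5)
Given verifiers $V_0,V_1$ for $L_0,L_1\in\SR{k}$ (Definition~\ref{def:rv}), I will build a stage whose support is one bit selecting the verifier. The task relation is the obvious one: $R(x,z)$ holds iff $x\in L_1$ and $z$ is the empty string. This $R$ is trivially polynomially balanced, and it need not be decidable efficiently. The transcript is a single bit $t\in\{0,1\}$ (plus the empty string, which exposure rejects), support is a bit $\sigma$, and default support is $\sigma_0=0$. Exposure is defined by
\[
\Exp_S(x,\sigma,t)\iff \sigma\in\{0,1\}\wedge t=\sigma .
\]
The core accepts only $z$ empty and runs $V_t(x,\vec y;r)$, after padding both verifiers to common move lengths, coin length and runtime. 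Malformed tuples are rejected, so those tuples have acceptance probability $0$.

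The key steps, in order, are as follows.
\begin{enumerate}[label=(\arabic*)]
\item Check the contract of Definition~\ref{def:stage}. Exposure is deterministic polynomial time. The core is uniform polynomial time. The gap holds on every well-formed tuple because each $V_i$ has a pointwise gap and padding preserves acceptance probabilities. Rejected tuples have probability $0$, which also lies inside the gap.
\item Compute the languages. Definition~\ref{def:frontier} gives $\Nat_S(x)=\{0\}$ and $\Exp^*_S(x)=\{0,1\}$. Since $K_S(x,t)\iff x\in L_t$, we get $\Lang_R(S)=L_0$ and $\Front_R(S)=L_0\cup L_1=L_1$, using $L_0\subseteq L_1$.
\item Verify soundness. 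If $t$ is exposed and $\Acc_S(x,z,t)$ holds, then $z$ is empty and $x\in L_t\subseteq L_1$, so $R(x,z)$ holds. This is exactly where the hypothesis $L_0\subseteq L_1$ is needed a second time.
\item Treat the exact case. If both languages are in $\SP{k}$, take the $V_i$ deterministic with coin length $\rho_S=0$. Then the same construction gives an exact stage, consistent with Proposition~\ref{prop:placement}.
\end{enumerate}

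The main obstacle is the padding/normalization in step~(1). The two verifiers may have different move-length polynomials. The domain convention says each move ranges over \emph{all} strings up to its declared limit, including the empty string.

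To handle this, I will take the common limit to be the maximum of the two limits. $V_t$ then interprets each move by rejecting or ignoring the excess. The choice between the two depends on the quantifier. For an existential block, padding with extra strings that $V_t$ rejects adds only losing options, so the $\exists$ value is unchanged. For a universal block, rejecting would wrongly falsify the $\forall$, so here $V_t$ instead maps overlong or differently sized strings to a canonical in-range move, for example by truncating to the original length. Under that mapping the $\forall$ value is again unchanged.

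A simpler alternative is to define each $V_i$ from the outset with the common limit and the truncation convention on all moves. This is harmless because for any fixed $k$-block formula, truncation is surjective onto the original domain and so preserves the quantified value. It also keeps the move domains identical for $t=0,1$, as required by a single fixed depth-$k$ stage.
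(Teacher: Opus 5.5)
Your construction is correct and is essentially the paper's proof: a one-bit tag selects $V_0$ or $V_1$, default support exposes only tag $0$, optional support exposes tag $1$, and $R$ requires membership in $L_1$ together with a fixed output (the paper uses $z=1$ where you use the empty string), so that $L_0\subseteq L_1$ yields both $\Front_R(S)=L_1$ and soundness. Your truncation argument for reconciling the move domains expands the paper's brief ``pad to common bounds and ignore padding'' step, and it is valid: truncation maps each common move domain surjectively onto the original one, so it preserves every quantified value and the pointwise gap.
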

The proof is in Appendix~\ref{app:placement}.

This is a representation theorem: the interface loses no pair of languages in
the declared class. It is not a procedure for learning a verifier or discovering
the exposure sets of a product. Together with Proposition~\ref{prop:placement},
it supplies the deterministic foundation without reliance on a separate paper.

\subsection{Classical placement and conditional comparisons}
\begin{theorem}[Sandwich bound]\label{thm:sandwich}
For every $k\ge1$,
\[
\SP{k}\subseteq\SR{k}\subseteq\SP{k+1},\qquad
\PP{k}\subseteq\PR{k}\subseteq\PP{k+1}.
\]
\end{theorem}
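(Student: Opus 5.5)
The plan is to prove the left inclusions by viewing a deterministic predicate as a coinless randomized one, and the right inclusions by using the Sipser--G\'acs--Lautemann containment $\BPP\subseteq\SP{2}\cap\PP{2}$ on the tuple-level predicate and merging adjacent like quantifiers. The $\Pi$ statements will follow from the $\Sigma$ ones by complementation.

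\emph{Left inclusions.} Let $L\in\SP{k}$ via a deterministic polynomial-time $V$. Take $\rho=0$, so that $\Pr_r[V=1]\in\{0,1\}$ on every tuple. The pointwise gap then holds on all bounded tuples, and the threshold-$2/3$ predicate coincides with $V$. Hence $L\in\SR{k}$; this is the same observation used in Proposition~\ref{prop:placement}. The argument for $\PP{k}\subseteq\PR{k}$ is identical.

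\emph{Right inclusion for $\SR{k}$.} Let $L\in\SR{k}$ via $V(x,\vec y;r)$ with the pointwise gap. The predicate $D(x,\vec y)=\mathbf 1\{\Pr_r[V=1]\ge 2/3\}$ is a total $\BPP$ language on encoded tuples $(x,\vec y)$, because the gap holds on every bounded tuple and malformed tuples are rejected. By Sipser--G\'acs--Lautemann, $D$ admits both an $\exists\forall$ form $D\iff\exists a\,\forall b\,P(x,\vec y,a,b)$ and a $\forall\exists$ form $D\iff\forall a'\,\exists b'\,P'(\cdots)$, with $P,P'$ deterministic polynomial time and polynomially bounded $a,b,a',b'$.

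\begin{itemize}
\item If $k$ is odd, $Q_k=\exists$. Substitute the $\exists\forall$ form, merge $\exists y_k\exists a$ into one existential block, and append $\forall b$. This gives $k+1$ blocks beginning with $\exists$, so $L\in\SP{k+1}$.
\item If $k$ is even, $Q_k=\forall$. Substitute the $\forall\exists$ form, merge $\forall y_k\forall a'$, and append $\exists b'$. This again gives $k+1$ blocks, so $L\in\SP{k+1}$.
\end{itemize}

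Pairing strings into one block needs a fixed polynomial-length encoding, and the domains must include the lengths used by the SGL witnesses; both are routine.

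\emph{$\Pi$ side.} This follows by duality. If $L\in\PR{k}$, the gap makes the complement of $L$ expressible with the dual quantifier prefix over $1-D$, which is again a gapped $\BPP$ predicate (the note after Definition~\ref{def:rv}). So $\bar L\in\SR{k}\subseteq\SP{k+1}$, and therefore $L\in\PP{k+1}$. The same merging argument can also be applied directly.

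\emph{Main obstacle.} The main care point is that applying SGL requires $D$ to be a genuine $\BPP$ language on the tuple domain, not a promise problem. The total pointwise gap assumed in Definition~\ref{def:rv} is exactly what secures this, so I expect it to go through once stated explicitly. The remaining work is bookkeeping of length bounds in the merged blocks.
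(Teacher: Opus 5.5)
Your proposal is correct and follows essentially the same route as the paper. You use zero coins for the lower inclusions. For the upper inclusions you apply the Sipser--G\'acs--Lautemann representation to the total tuple-level $\BPP$ matrix $D$, choose the $\exists\forall$ or $\forall\exists$ form to match the last block, and merge adjacent like blocks to get $k+1$ blocks. The only difference is minor: you get the $\Pi$ side by complementation, using the gap to negate the matrix, while the paper reruns the same merging argument with the leading quantifier reversed; both work.
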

Replacing the terminal randomized predicate by its classical alternating representation costs at most one extra block. The proof is in Appendix~\ref{app:hierarchy-proofs}.

The bound concerns \emph{families of expressible languages}. It does not assign
an ``exact frontier'' to an arbitrary randomized stage by changing a label.
No general nested-ring comparison of exact and randomized frontiers of the
same physical checker follows from it.

\begin{theorem}[Conditional enlargement and derandomization]\label{thm:regimes}
\begin{enumerate}[label=(\roman*)]
\item If $\SR{1}\not\subseteq\NP$, some sound randomized depth-one stage
has a frontier that no exact depth-one stage can equal.
\item If $\BPP=\Pclass$, then $\SR{k}=\SP{k}$ and $\PR{k}=\PP{k}$ for
all $k$. Every randomized stage has a sound exact companion with the same
depth, object domains, exposure, native reach, and frontier, allowing a
different polynomial bound on terminal runtime.
\end{enumerate}
\end{theorem}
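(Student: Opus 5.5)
Part (i) goes through the expressive-completeness result, and part (ii) through a tuple-level derandomization followed by the definitions of $\SR{k}$ and $\PR{k}$.

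\emph{Part (i).} Assume $\SR{1}\not\subseteq\NP$ and pick $L_1\in\SR{1}\setminus\NP$. Apply Theorem~\ref{thm:complete} with $k=1$ and $L_0=L_1$; this is allowed since $L_0\subseteq L_1$ trivially. It yields a sound randomized depth-one stage $S$ with $\Front_R(S)=L_1$. If some exact depth-one stage $E$ had $\Front_R(E)=L_1$, then Proposition~\ref{prop:placement} would put $\Front_R(E)$ in $\SP{1}=\NP$, a contradiction. If the statement is meant for the same task $R$, the argument is unchanged, because placement holds for any polynomially balanced task.

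\emph{Part (ii).} Assume $\BPP=\Pclass$ and work at the tuple level. The gap on every well-formed bounded tuple makes $D_S$ a total $\BPP$ predicate on encoded tuples, since the encoding domain is polynomial-time checkable and malformed tuples get value zero. The hypothesis then gives a deterministic polynomial-time predicate $C$ with $C=D_S$ on all strings.

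Define the companion $S^{\rm ex}=(k,p_S,\sigma_0,\Exp_S,C)$ with coin length $\rho=0$. It keeps the same depth, object and move domains, and exposure, with only a new polynomial runtime bound for the core. Because $D_{S^{\rm ex}}=C=D_S$ pointwise, the quantified acceptance $\Acc$ agrees on every $(x,z,t)$. Soundness transfers verbatim, and by Definition~\ref{def:frontier} both $\Lang_R$ and $\Front_R$ coincide. The gap condition is trivial for an exact core.

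For the class equalities, take $L\in\SR{k}$ with its predicate $V$. The gap again makes $\mathbf 1\{\Pr_r[V=1]\ge2/3\}$ a $\BPP$ language over pairs $(x,\vec y)$, hence a $\Pclass$ predicate $V'$. Then $x\in L\iff\Q_k\vec y\,V'(x,\vec y)$, so $L\in\SP{k}$. The converse inclusion is the first part of Theorem~\ref{thm:sandwich}. The $\PR{k}$ case is the dual: apply the same argument to the universal-start prefix, or to complements via negation of the gap predicate.

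\emph{Main obstacle.} The delicate step is justifying that $\BPP=\Pclass$ applies to the tuple predicate $D_S$ uniformly. The hypothesis concerns languages, while $D_S$ is a function of a tuple whose length is polynomial in $n=|x|$. I would handle this by viewing the set of encoded tuples accepted with probability at least $2/3$ as a single language in the tuple's own length. That language is in $\BPP$ because the core runs in time polynomial in $n\le$ the tuple length and the gap holds on every string, so totality is essential and is exactly what Definition~\ref{def:stage}(iv) supplies. A polynomial in the tuple length is still polynomial in $n$ given the declared bounds, so the deterministic core meets the stage contract.
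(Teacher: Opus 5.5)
Your proposal is correct and follows the paper's proof essentially step for step. Part (i) applies Theorem~\ref{thm:complete} with $L_0=L_1\in\SR{1}\setminus\NP$ and uses Proposition~\ref{prop:placement} to place every exact depth-one frontier in $\NP$; part (ii) derandomizes the total tuple-level $\BPP$ predicate $D_S$, installs the deterministic decider as the terminal core, and obtains the class equalities by the same substitution. The paper adds one detail you leave implicit: the new core keeps the original domain and length checks, and this is what makes your runtime claim (polynomial in the tuple length, hence in $n$) hold on out-of-bound inputs too.
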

The proof is in Appendix~\ref{app:hierarchy-proofs}.

The enlargement premise is specific to the pointwise-gap class; $\MA\ne\NP$
alone does not supply it. Failure of $\BPP=\Pclass$ would not by itself prove
strict stage-frontier enlargement either. These are conditional comparisons,
not demonstrated advantages of a deployed neural judge.

\section{Depth limits and certificate substitution}\label{sec:limits}
\begin{theorem}[Conditional depth obstructions]\label{thm:depth}
Fix $k\ge1$.
\begin{enumerate}[label=(\roman*)]
\item If $\SP{k}\ne\SP{k+1}$, there is a task $R$ and an exact depth-$(k+1)$
stage $T$ whose native language is not contained in the frontier of any sound
exact depth-$k$ stage for $R$.
\item The analogous assertion holds for randomized depths $k,k+1$ if
$\SR{k}\ne\SR{k+1}$.
\item If $\SP{k+1}\ne\SP{k+2}$, some exact depth-$(k+2)$ target cannot be
covered by any sound randomized depth-$k$ stage for its task.
\end{enumerate}
\end{theorem}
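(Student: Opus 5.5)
The plan is to prove all three parts by one template. Take a language $L$ that lies in the larger class but not the smaller one. Use Theorem~\ref{thm:complete} to realize $L$ as the native reach of a sound target stage $T$. Then argue by contradiction with Proposition~\ref{prop:placement}: if $L$ were contained in the frontier of a sound lower-depth stage for the same task, $L$ would lie in the smaller class.

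The key point to arrange is that containment in a frontier forces \emph{equality}. I will choose the task $R$ so that $\mathrm{dom}(R)=L$ exactly. Soundness then gives
\[
\Front_R(S)\subseteq\mathrm{dom}(R)=L
\]
for every sound stage $S$ for $R$, by Proposition~\ref{prop:placement}. So $L=\Lang_R(T)\subseteq\Front_R(S)$ yields $\Front_R(S)=L$.

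\textbf{Choosing $R$.} I would inspect the construction behind Theorem~\ref{thm:complete} applied to $L_0=L_1=L$. The natural choice is to let $R(x,z)$ hold iff $x\in L$ and $z$ is empty (or a fixed dummy string). This $R$ is polynomially balanced and has domain exactly $L$. Its decidability is irrelevant, since balance is all that is required.

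\textbf{Building $T$.} I will build the target stage directly.
\begin{itemize}
\item Default support is empty, and exposure admits only the empty transcript.
\item The core is the deterministic predicate $V(x,\vec y)$ witnessing $L\in\SP{k+1}$, with $z$ required to be empty.
\item Then $\Acc_T(x,\epsilon,\epsilon)\iff x\in L$. Hence soundness holds and $\Lang_R(T)=\Front_R(T)=L$.
\end{itemize}
If the paper's construction instead appends witnesses into $z$, I will adapt $R$ accordingly, keeping $\mathrm{dom}(R)=L$.

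\textbf{Part (i).} Pick $L\in\SP{k+1}\setminus\SP{k}$, which exists by hypothesis since $\SP{k}\subseteq\SP{k+1}$. Let $S$ be any sound exact depth-$k$ stage with $L\subseteq\Front_R(S)$. Then $\Front_R(S)=L$, and Proposition~\ref{prop:placement} places it in $\SP{k}$. This is a contradiction.

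\textbf{Part (ii).} This is identical, using $L\in\SR{k+1}\setminus\SR{k}$ and the randomized clause of Proposition~\ref{prop:placement}. I must check $\SR{k}\subseteq\SR{k+1}$, which comes from appending a dummy move to the empty domain. That check makes the hypothesis $\SR{k}\ne\SR{k+1}$ yield a language in the difference.

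\textbf{Part (iii).} Pick $L\in\SP{k+2}\setminus\SP{k+1}$, and build an exact depth-$(k+2)$ target as above. A covering sound randomized depth-$k$ stage would give $L=\Front_R(S)\in\SR{k}$. By Theorem~\ref{thm:sandwich}, $\SR{k}\subseteq\SP{k+1}$, which is a contradiction.

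\textbf{Main obstacle.} The main obstacle is the interface bookkeeping in building $T$: making $\mathrm{dom}(R)=L$ exactly while meeting every clause of Definition~\ref{def:stage}. The clauses to check are the empty-string domains, the pointwise gap (trivial for exact stages), and task-relative soundness at depth $k+1$.

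The subtle case is soundness. It holds because acceptance of the empty output implies $x\in L$, and that is precisely $R(x,\epsilon)$. Any other output is rejected by the core, so it never needs to satisfy $R$.
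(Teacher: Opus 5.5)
Your proposal is correct and follows the paper's proof essentially step for step: choose a separator $L$, take a task $R$ whose domain is exactly $L$ so that soundness forces $\Front_R(S)\subseteq L$, realize $L$ as native reach via the construction of Theorem~\ref{thm:complete}, and derive a forbidden equality from Proposition~\ref{prop:placement}, using Theorem~\ref{thm:sandwich} for part (iii). The differences are cosmetic: you use the empty output where the paper uses $z=1$, and you build $T$ directly. In part (ii), the appended dummy block should range over the domain $\{\eps\}$ (length bound zero) rather than an empty domain, which is what the paper's ``ignored dummy blocks'' means.
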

The proof is in Appendix~\ref{app:depth-proofs}.

Soundness is essential to the obstruction: a language outside a class can be a
subset of a language inside that class. Nonmembership alone would not rule
out coverage. Nor does this theorem say that every extra block improves every
stage. Its targets are chosen separator families. Iterating it supplies
different tasks at successive levels, not a training trajectory on one fixed
benchmark. Under the stated assumptions, randomization at depth $k$ cannot
uniformly replace all exact depth-$(k+2)$ protocols. The theorem does not
establish a sharp adjacent-level bound.

\begin{proposition}[When certificates can replace challenge]\label{prop:certificate}
Let $V(x,a,u)$ be polynomial-time with bounded strings $a,u$, and let
$G(x,a)$ be a deterministic polynomial-time guard for the supported family.
Suppose a polynomial-time exact checker $C(x,a,\pi)$ has polynomially bounded
certificates and, for every bounded $(x,a)$ with $G(x,a)=1$, satisfies
\[
\exists\pi\ C(x,a,\pi)=1\quad\Longleftrightarrow\quad
\forall u\ V(x,a,u)=1.
\]
Then the guarded language
\begin{align*}
&\{x:\exists a\ [G(x,a)\wedge\forall u\ V(x,a,u)=1]\}\\
&\qquad=\{x:\exists a,\pi\ [G(x,a)\wedge C(x,a,\pi)=1]\}
\end{align*}
has an exact depth-one certificate representation. Soundness of $C$ alone
gives containment, not equality. If this guarded language is
$\SP{2}$-complete, such a uniform sound-and-complete certificate system
implies $\SP{2}=\NP$.
\end{proposition}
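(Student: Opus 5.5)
The proof has three parts: the language equality, the caveat about soundness alone, and the collapse consequence. The equality is a direct two-way inclusion using the guarded equivalence pointwise. If $x$ is in the left language, fix $a$ with $G(x,a)=1$ and $\forall u\,V(x,a,u)=1$. Because $G(x,a)=1$, the hypothesis yields some bounded $\pi$ with $C(x,a,\pi)=1$, so $(a,\pi)$ witnesses the right side. Conversely, if $G(x,a)\wedge C(x,a,\pi)=1$, the guard again activates the equivalence, giving $\forall u\,V(x,a,u)=1$.

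The depth-one representation then comes from concatenating $(a,\pi)$ into a single existential string. Its length is polynomial because $a$ and $\pi$ are both polynomially bounded. The terminal predicate $G\wedge C$ is deterministic polynomial-time, so the language is in $\NP=\SP{1}$. If a stage form is wanted, I would present it as an exact depth-one stage with trivial exposure and $\rho_S=0$. By Proposition~\ref{prop:placement} its languages are then in $\SP{1}$, and Theorem~\ref{thm:complete} gives a stage realizing the language if needed.

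For the remark that soundness alone gives only containment, I would note that soundness is the single implication $\exists\pi\,C=1\Rightarrow\forall u\,V=1$ on guarded pairs. That direction reproduces only the right-to-left inclusion, so the certificate language is contained in the challenge language. A one-line example shows equality can fail: take $C\equiv0$, which is vacuously sound and yields the empty language.

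For the collapse, suppose the guarded language $L$ is $\SP{2}$-complete under polynomial-time many-one reductions. By the equality, $L\in\NP$. Any $L'\in\SP{2}$ reduces to $L$ by some $f$, and composing $f$ with the $\NP$ verifier for $L$ puts $L'$ in $\NP$, because $\NP$ is closed under such reductions. Hence $\SP{2}\subseteq\NP$, and with the trivial reverse inclusion $\SP{2}=\NP$. The only point needing care, and the step I expect to be the main obstacle, is uniformity. The certificate system $(C,G)$ must be a single polynomial-time family with a uniform certificate bound across all input lengths, as the statement assumes; otherwise one gets only a nonuniform conclusion. I would state explicitly that this uniformity is what licenses placing $L$ in uniform $\NP$.
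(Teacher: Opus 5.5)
Your proposal is correct and follows the paper's proof: keep $G$, apply the certificate equivalence only on guarded pairs, merge $a$ and $\pi$ into one existential block to place the language in $\NP$, note that soundness supplies only the certified-to-universal direction, and collapse $\SP{2}$ to $\NP$ using closure of $\NP$ under polynomial-time reductions. One small fix: your $C\equiv0$ example shows strict containment only if the challenge language is nonempty, so take, say, $G\equiv V\equiv1$.
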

The proof is in Appendix~\ref{app:depth-proofs}.

This is a design constraint derived from classical complexity, with two
separate engineering obligations: soundness of the checker and existence of
short certificates for every accepted proposal. Restricted families can
satisfy both. For example, equality of two XOR circuits can be checked by
computing their coefficient vectors over $\mathbb F_2$, even though direct
universal testing ranges over exponentially many Boolean inputs. Such a
symbolic checker changes the terminal implementation and can remove explicit
challenge on that family. Sampled criticism alone supplies neither obligation.

\subsection{Bounded certificates and the cost of checking}
\label{sec:certificate-resources}

Proposition~\ref{prop:certificate} has three distinct obligations: a
certificate characterizes the same universal claim, every true guarded
claim has a certificate within the declared length bound, and one uniform
algorithm checks those certificates within the declared time bound.
Completeness with arbitrarily long certificates does not establish the
second obligation. A separate polynomial-time checker for each fixed
input does not establish the third.

To make the distinction explicit, fix bounds $p,b,c$ and define
\begin{align*}
L_\forall={}&\{x:\exists a,\ |a|\le p(n),\ G(x,a),\\
&\hspace{35mm}\forall u,\ |u|\le b(n)\Rightarrow V(x,a,u)\},\\
L_C={}&\{x:\exists a,\pi,\ |a|\le p(n),\ |\pi|\le c(n),\\
&\hspace{35mm}G(x,a)\wedge C(x,a,\pi)=1\},
\end{align*}
where $n=|x|$. Bounded soundness gives $L_C\subseteq L_\forall$;
bounded completeness gives the reverse inclusion. These are statements
about languages even before a computational cost is assigned. Polynomial
bounds and a uniform polynomial-time implementation supply the separate
$\NP$ representation.

\begin{proposition}[An explicit circuit resource account]
\label{prop:circuit-cost}
Represent a Boolean checker as a formula tree with input, constant,
negation, and conjunction nodes. A strict evaluator visiting both
children at each conjunction performs exactly one node visit per tree
node. If guard and checker trees have sizes $s_G(n)$ and $s_C(n)$, their
conjunction uses exactly $s_G(n)+s_C(n)+1$ node visits. Consequently,
polynomially bounded tree sizes imply polynomially many visits. If one
uniform polynomial-time procedure constructs these trees and their input
wiring from the bounded strings $(x,a,\pi)$, polynomial certificate and
proposal bounds give a uniform polynomial-time verifier for $L_C$.
\end{proposition}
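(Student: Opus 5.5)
The plan is a structural induction on formula trees for the exact node count, followed by a composition argument for the uniform verifier. Let $v(\tau)$ denote the number of node visits the strict evaluator makes on a tree $\tau$, and $|\tau|$ the number of nodes. We show $v(\tau)=|\tau|$ by induction on $\tau$.

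\emph{Base case.} An input or constant leaf is visited once and has one node.

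\emph{Negation.} A node $\neg\tau_1$ costs one visit plus $v(\tau_1)$. By induction this is $1+|\tau_1|=|\neg\tau_1|$.

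\emph{Conjunction.} Strictness means both children are always evaluated, with no short-circuiting. Hence $v(\tau_1\wedge\tau_2)=1+v(\tau_1)+v(\tau_2)=1+|\tau_1|+|\tau_2|$, which is the size of the conjunction. This case is exactly where strictness is needed: a short-circuit evaluator would only satisfy an upper bound, not equality.

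\emph{The guard--checker conjunction.} Take the tree whose root is a conjunction with the guard tree $G$ and the checker tree $C$ as children. The identity just proved gives exactly $s_G(n)+s_C(n)+1$ visits. A sum of polynomials plus one is polynomial, so polynomially bounded tree sizes give polynomially many visits.

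\emph{Uniform verifier for $L_C$.} Take the uniform procedure from the hypothesis, which on input $(x,a,\pi)$ builds the wired guard and checker trees in time $q(|x|+|a|+|\pi|)$ for some polynomial $q$.
\begin{enumerate}
\item Under the proposal and certificate bounds $|a|\le p(n)$ and $|\pi|\le c(n)$, the argument length is at most $n+p(n)+c(n)$. So construction takes polynomial time in $n$.
\item Construction time bounds the size of the produced trees, so $s_G+s_C$ is polynomial in $n$.
\item Each node visit is simulated in polynomial time on a standard machine, reading an input bit through the wiring or applying a negation or conjunction. The polynomial-simulation convention of Appendix~\ref{sec:computational-bridge} covers this.
\end{enumerate}
Hence the verifier $(x,a,\pi)\mapsto[G(x,a)\wedge C(x,a,\pi)]$ runs in uniform polynomial time.

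Combining this verifier with the existential quantification over bounded $(a,\pi)$ in the definition of $L_C$ gives the $\NP$-style representation mentioned before the proposition.

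The main obstacle is cost accounting rather than mathematics. We must state clearly that a visit is charged per node, including the root conjunction. We must also bound tree size by construction time, since nothing else prevents a nonuniform blowup. Finally, the per-visit simulation cost must be polynomial and not merely constant. With those conventions fixed, each step is routine.
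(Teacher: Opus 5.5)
Your proof is correct and follows the paper's route. It uses structural induction to show the visit count equals tree size, with strictness entering at the conjunction case; the root conjunction then gives $s_G(n)+s_C(n)+1$; and the verifier's polynomial cost comes from charging uniform construction plus per-visit input access.

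There are two small differences. The paper makes the per-visit cost concrete by scanning a stored, polynomially bounded input encoding, rather than appealing to the simulation convention of the neural-agent appendix. It also explicitly folds the length and wiring checks into the uniform construction; your step~1 should say this too, so that out-of-bound $(a,\pi)$ are rejected.
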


\begin{proof}
For an input or constant the visit count is one. Negation adds one to
its child's count; conjunction adds one to both children's counts.
Structural induction proves that the count equals tree size, and the
conjunction formula follows immediately. For the last statement, include
the construction time and input access in the cost. A sequential
implementation can store the input assignment and access each requested
bit by a scan of its polynomially bounded encoding; multiplying its
length by a polynomial number of visits remains polynomial. Domain and
wiring checks are included in the same uniform construction.
\end{proof}

The circuit statement supplies an explicit sufficient implementation
contract. It does not assert that every polynomial-time verifier has a
polynomial-size formula tree: sharing can make circuits smaller than
expanded trees. The Lean artifact proves bounded semantic replacement
with explicit string-length restrictions and the exact node-visit
recurrence for this evaluator. Its circuit-family theorem assumes a
stated polynomial size envelope. Uniform tree generation, bit-access
cost, and the complexity-class conclusion use the additional mathematical
premises above; they are not inferred from a bare circuit family or a
reported runtime counter. This separates the verified evaluator from
the hypotheses needed to apply it to a deployed checker.

\section{A reproducible audit, from exact to probabilistic}\label{sec:audit}
\subsection{Running task and three exact designs}
Let $x_\star$ be the task of repairing a lookup function on the three-element
array $(\texttt{alpha},\texttt{beta},\texttt{gamma})$. The declared input
domain for testing is $I=\{-3,-2,-1,0,1,2,3,4\}$. Correct behavior returns
the indexed element for $0\le i<3$ and a default value otherwise.
The candidate set contains the original buggy program, an upper-bound-only
repair, and a full lower-and-upper-bound repair. The task relation is
\[
R(x_\star,z)\iff\forall i\in I,\quad z(i)=\mathrm{spec}(i).
\]
This example is finite and completely enumerated. It does not establish a
complexity separation for general program repair.

The sampled critic checks only $\{0,2,3,4\}$. The upper-bound-only repair
passes all four, but at $i=-1$ returns \texttt{gamma} rather than the default.
The exhaustive design checks all eight inputs. The certificate design checks
a complete table of outcomes against both the candidate execution and the
specification, bound to the declared task and candidate. Both exact designs
reject the incomplete patch and accept the full repair. The sampled checker
is deterministic as a program, but is not sound for $R$: deterministic testing
does not imply exact verification of a broader specification.

\begin{table}[htbp]
\centering\small
\begin{tabularx}{\linewidth}{@{}Xccc@{}}
\toprule
Terminal rule & Incomplete patch & Full repair & Sound for $R$\\
\midrule
Four declared tests & Accept & Accept & No\\
All eight declared inputs & Reject & Accept & Yes\\
Task-bound complete certificate & Reject & Accept & Yes\\
\bottomrule
\end{tabularx}
\caption{Exact enumeration on the bounded lookup task. Acceptance is relative
to the declared specification and domain. The sampled design's apparent reach
cannot be used as a sound frontier for this task.}\label{tab:patch}
\end{table}

The existing executable declaration includes default and optional certificate
transcripts. Default support in the certificate stage supplies no valid
certificate, while permitted optional support exposes the complete correct
record. It therefore has
\[
\Lang_R(S_{\rm cert})=\varnothing,\qquad
\Front_R(S_{\rm cert})=\{x_\star\}.
\]
The exhaustive stage exposes the valid repair natively and has both languages
equal to $\{x_\star\}$. These are sets of the single task input; the accepted
patch projection, reported separately by the artifact, contains only the
full repair. They must not be confused with one another.

To isolate exposure changes, keep the certificate core fixed and compare
three declarations. $S_0$ permits only the default uncertified record under
all support states, so both languages are empty. $S_1$ admits the valid
certificate through optional support, so native reach is empty and the
frontier is $\{x_\star\}$. $S_2$ makes that record native, so both languages
are $\{x_\star\}$. Thus $S_0\to S_1$ expands support and the frontier,
and $S_1\to S_2$ internalizes already permitted support. The direct
$S_0\to S_2$ comparison exhibits native escape under unchanged authority.
Repeating default attempts in $S_1$ cannot introduce the excluded native
record; searching its optional support can find a record already in its
frontier. This is the finite distinction between search effort, support
expansion, and internalization.

\subsection{A controlled stochastic extension}
For the same finite checker $C$, reject malformed and out-of-bound records
deterministically. On well-formed bounded tuples, define a noisy terminal
core that uses two fair bits: with probability $3/4$ it returns $C$ and with probability $1/4$
it returns $1-C$. Accepted exact tuples now have probability $3/4$ and
rejected tuples probability $1/4$. The gap holds, $D=C$, and soundness and
every native/frontier set are unchanged. Three independent evaluations with
majority change these probabilities to $27/32$ and $5/32$ without changing
the sets. Existential acceptance over the two-bit tape accepts every such tuple,
including the incomplete patch.

This deliberately controlled example verifies the semantic distinction using
exact rational arithmetic; it is not an empirical model of a learned judge.
It can also expose misleading error reductions. Accepting if \emph{any} of
three calls accepts gives a false acceptance probability
$1-(3/4)^3=37/64$, which violates the gap on a false tuple. The majority
rule, independence, and fixed candidate are therefore substantive assumptions.

\subsection{Analytic search and interface controls}
Appendix~\ref{sec:xor-case} gives a growing XOR-synthesis family whose exact
frontier is determined by a quota on referenced variables. Two explicit
proposal policies have unbounded ratios of per-attempt success while sharing
the same accepted languages. The target coefficients are provided in the
input, so this is an analytic control rather than a learning experiment.
Section~\ref{sec:repair} supplies a complementary executable agent whose
proposal interface receives counterexamples instead of the target table.

\subsection{Declaration protocol and disagreement rules}\label{sec:declaration}
A reproducible audit first records the following evidence, before assigning
complexity labels or claiming an escape.
\begin{enumerate}
\item \textbf{Task and encoding.} Specify input size, output meaning, the
correctness relation, and any restricted domain. Separate test-suite
correctness from full program correctness.
\item \textbf{Stage and budgets.} Version the model, decoder, interface,
default support, optional support, transcript grammar, object-length bounds,
coin policy, and runtime bounds. Trace permissibility needs an executable
predicate or a proof; one observed trace establishes only one possibility.
\item \textbf{Acceptance contract.} Identify the final checker and provide
task-soundness evidence. A randomized checker additionally needs a uniform
gap argument. Average agreement is insufficient. Universal moves need
exhaustive bounded semantics or an equivalent proved procedure.
\item \textbf{Comparison.} Align task and transcript encodings. Prove
authority/exposure containment, or exhibit a counterexample to it. A frontier
escape claim additionally needs an accepted target input and evidence excluding
\emph{all} source-frontier witnesses for that input.
\item \textbf{Audit status.} Record established conclusions relative to
validated premises, candidate claims awaiting specific evidence, and
indeterminate claims with missing or disputed premises. Preserve competing
declarations rather than voting one into truth.
\end{enumerate}

A schema linter can reproduce the consequences of identical evidence records.
It cannot validate an arbitrary semantic inclusion from a certificate string
or make two auditors agree about an opaque deployment. A false output proved to satisfy $\Acc_S$ on an exposed transcript
falsifies task soundness. One observed false stochastic acceptance does not:
it may be a permitted low-probability event. At depth greater than one,
locally high acceptance also need not imply quantified acceptance. A proved
admissible tuple with probability in $(1/3,2/3)$ falsifies the gap; a supposedly universal critic that checks only
a sample invalidates that depth declaration. Finite measurements can reveal
counterexamples but do not in general prove uniform contracts on an infinite
family. An unobserved trace is not evidence that it is forbidden.

The patching artifact supplies a content-addressed declaration, exact outcomes,
accepted input languages, and separate output projections. The additional
probabilistic artifact enumerates random tapes and uses rational probabilities.
The patching artifact also distinguishes inclusion of visible facts from inclusion of actual
transcripts: exhaustive-test facts can be contained in certificate-stage facts
even though their encoded transcript sets are disjoint. Only the latter sets,
or a proved translation between them, enter Theorem~\ref{thm:diagnostic}.
Appendix~\ref{app:reproduce} specifies how to reproduce the artifacts. We do not use the
names of commercial systems or multiagent roles to assign established
$\SP{k}$ or $\SR{k}$ membership to their implementations.

\section{Recursive self-improvement: agent changes and selection}\label{sec:rsi}
The framework separates two questions about self-improvement: how a successor
changes the tasks its protocol accepts, and how reliably the selection process
identifies an improvement. The first concerns verification structure; the
second requires a separate error contract for the improvement criterion.

Duan et al. distinguish structural recursion, where an inherited revision
controls a later improvement round, from effective recursion, where it produces
better successors under comparable budgets and independent assessment
\cite{Duan2026LastAI}. Both can occur within one verification class. Our
comparison concerns the accepted languages and contracts of those successive
systems; it does not replace the empirical test of improvement effectiveness.

\subsection{What recent systems make observable}
Self-improvement can change an answer within one task, a reusable agent
component across tasks, or the mechanism that generates subsequent changes.
These are different comparison scopes. GEPA evolves prompts from execution
feedback, ACE maintains structured contextual playbooks, and AFlow searches
code-represented workflows~\cite{Agrawal2026GEPA,Zhang2026ACE,Zhang2025AFlow}.
RSIAgent accumulates memory without parameter updates and freezes it for
subsequent evaluation~\cite{Zhu2026}. Such mechanisms motivate explicit
support and transcript declarations, including what the old interface already
permitted.

Executable components can also evolve. DGM searches an archive of modified
agent programs; AHE connects harness edits to component-level observations;
Self-Harness uses failure analysis and regression testing to select harness
changes~\cite{Zhang2026DGM,Lin2026AHE,Zhang2026SelfHarness}.
At the meta level, STOP applies a program improver to its own code while
leaving the underlying language model unchanged, and Hyperagents makes both
the task agent and meta agent editable~\cite{Zelikman2024STOP,Zhang2026Hyperagents}.
AIDE$^2$ reports successive research-agent improvements and held-out transfer,
while noting that its outer improver's advantage over a strong baseline is
not decisively resolved~\cite{Srikanth2026}.

These reports make it necessary to record \emph{which component changes and
which evaluation remains fixed}. They provide empirical evidence within their
experimental settings, rather than stage-contract or hierarchy-separation
proofs. Appendix~\ref{app:rsi-literature} compares the mechanisms and the evidence
needed to connect them to the present model. The comparison includes the
September 22 version of the RSI survey~\cite{Duan2026LastAI} and studies
of evolving evaluators available through October 9, 2026.

\subsection{Deployed stage, improver, and selection rule}
Write a proposed transition as
\[
\widehat S_{i+1}=U_i(S_i,h_i;\eta_i),\qquad
S_{i+1}=J_i(S_i,\widehat S_{i+1},h_i),
\]
where $U_i$ is an improver, $h_i$ is its bounded interaction history,
$\eta_i$ is its proposal randomness, and $J_i$ is the selection rule.
The rule can retain the old stage. A recursive loop lets an accepted change
affect a subsequent improver or its inputs. It need not add a universal
verification block. The task relation $R$ used to evaluate deployed outputs
must also be distinguished from the improvement criterion used by $J_i$.
A higher benchmark score is neither a proof of
$\Lang_R(S_i)\subseteq\Lang_R(S_{i+1})$ nor an escape witness.

Figure~\ref{fig:rsi-loop} separates the update loop from the two contracts
used to analyze it: accepted-language structure and false-selection risk.
\begin{figure}[tbp]
\centering
\begin{tikzpicture}[x=1cm,y=1cm,>=Latex,font=\small,
 state/.style={draw=fiNative,fill=fiNative!8,rounded corners=2pt,
   text width=1.65cm,minimum height=1cm,align=center},
 process/.style={draw=fiGray,fill=fiGray!4,rounded corners=2pt,
   text width=1.95cm,minimum height=1cm,align=center},
 contract/.style={draw=fiGray!60,rounded corners=2pt,align=center,
   text width=6.25cm,minimum height=3.1cm,inner sep=5pt}]
\node[state] (s) at (1,2.3) {Deployed\\$S_i$};
\node[process] (u) at (4,2.3) {Improver\\$U_i$};
\node[state] (candidate) at (7,2.3) {Candidate\\$\widehat S_{i+1}$};
\node[process,draw=fiAccent,fill=fiAccent!7] (j) at (10,2.3) {Selection\\$J_i$};
\node[state,text width=2.2cm] (next) at (13.2,2.3) {Next deployed\\$S_{i+1}$};
\draw[->] (s)--(u);
\draw[->] (u)--(candidate);
\draw[->] (candidate)--(j);
\draw[->] (j)--(next);
\node[below=2mm of u,font=\footnotesize,align=center] {history $h_i$,\\proposal coins $\eta_i$};
\node[below=2mm of next,font=\footnotesize,align=center] {select candidate\\or retain $S_i$};
\draw[->,dashed,fiGray] (next.north)--(13.2,3.72)--(4,3.72)--(u.north);
\node[fill=white,inner sep=3pt,font=\footnotesize] at (8.4,3.72)
  {Accepted changes can affect later improvement rounds};
\node[contract,fill=fiFrontier!4,anchor=north] (struct) at (3.55,.55)
  {\textbf{Structural contract}\\[3pt]
   Jointly uniform $A,E,C$; common bounds and $k$\\
   $L_{\rm env}\in\SR{k}$\\[3pt]
   \footnotesize Gap and task soundness for every admitted candidate};
\node[contract,fill=fiAccent!4,anchor=north] (select) at (10.95,.55)
  {\textbf{Selection contract}\\[3pt]
   $\Pr(B_i\mid\mathcal H_{i-1})\le\delta_i$\\
   $\Pr(\bigcup_{i=1}^{M}B_i)\le\sum_{i=1}^{M}\delta_i$\\[3pt]
   \footnotesize Conditional validity for every selection call\\
   Count every selection call, including rejections};
\end{tikzpicture}
\caption{An RSI loop and two distinct guarantees. The dashed feedback indicates
that an accepted revision can alter later improvement, including the
improver's inputs or inherited mechanism. It does not add a quantifier block.
Under Proposition~\ref{prop:meta}'s fixed admission, exposure, interpretation,
and common bounds, admitted histories define a language in the same
verification class. Proposition~\ref{prop:selection} separately controls
false selection relative to an improvement criterion. Neither guarantee alone
establishes that successors are empirically better.}
\label{fig:rsi-loop}
\end{figure}

Changes to weights, prompts, tools, or memory identify candidate stage
changes; their semantic effect depends on the declared interface. A better
proposal policy within an unchanged admissible transcript family can improve
observed performance without changing the frontier. An added tool can change
exposure or the terminal predicate. Internalizing previously optional memory
can enlarge native reach inside an unchanged frontier. None of these effects
is determined by the word ``recursive.'' Uniform complexity claims further
require a uniformly described family as input size grows; unrelated
checkpoints at each size must not silently supply nonuniform advice.

\begin{proposition}[Bounded self-modification inside a fixed envelope]
\label{prop:meta}
Let $h,c$ be polynomially bounded history and candidate strings. Assume one
deterministic polynomial-time admission predicate $A(x,h,c)$, one such
exposure predicate $E(x,h,c,\sigma,t)$, and one uniform polynomial-time
randomized terminal interpreter $C(x,h,c,z,t,\vec y;r)$. These algorithms
include all encoding and object-domain checks. There are common polynomial
object, coin, and runtime bounds and the same fixed $k$ move blocks, whose
domains are identical for every candidate. The
default support is uniformly polynomial-time computable from $(x,h,c)$.
Every admitted candidate $S_{h,c}$ satisfies the pointwise gap and quantified
soundness for a common $R$. Choose $h,c$ before the moves and hidden coins.
Then
\[
L_{\rm env}=\{x:\exists h,c\ [A(x,h,c)\wedge
x\in\Front_R(S_{h,c})]\}\in\SR{k}.
\]
The same holds with native reach in place of the frontier. With exact
terminal checks both languages belong to $\SP{k}$.
\end{proposition}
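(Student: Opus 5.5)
The plan is to merge all the existentially chosen objects into the first existential move and then place the resulting language in $\SR{k}$ directly from Definition~\ref{def:rv}, mirroring the argument for Proposition~\ref{prop:placement}. Unfolding Definition~\ref{def:frontier}, $x\in L_{\rm env}$ holds exactly when there exist $h,c,\sigma,t,z$ such that $A(x,h,c)$ and $E(x,h,c,\sigma,t)$ hold and
\[
\Q_k\vec y\ D_{h,c}(x,z,t,\vec y)=1 .
\]
I will let the new first move be $y_1'=(h,c,\sigma,t,z,y_1)$. Under a fixed pairing encoding its length is polynomially bounded, because $h,c$ and the object domains share common polynomial bounds. The remaining moves $y_2,\dots,y_k$ keep their domains, which the hypothesis makes identical for every candidate. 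The first block is existential and the extra data are chosen before any universal move, so the quantifier prefix stays $\Q_k$. For native reach I will drop $\sigma$ from the move and substitute the default support $\sigma_0(x,h,c)$, which is uniformly polynomial-time computable.

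Next I will define the combined randomized predicate
\[
V(x,y_1',y_2,\dots,y_k;r)=A(x,h,c)\wedge E(x,h,c,\sigma,t)\wedge C(x,h,c,z,t,\vec y;r).
\]
On malformed or out-of-bound encodings it rejects deterministically. It runs in uniform polynomial time because $A,E,C$ are single uniform algorithms with common runtime and coin bounds, and the coin length can be padded to the common polynomial $\rho$.

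The pointwise gap must hold on every bounded tuple, and I expect this to be the main obstacle. The plan is a case split on the deterministic guards. If $A$ or $E$ fails, or the encoding is malformed, the acceptance probability is $0$. Otherwise $c$ is admitted, so $S_{h,c}$ satisfies the gap on the well-formed tuple $(x,z,t,\vec y)$, and $\Pr_r[V=1]=a_{S_{h,c}}$ lies in $[0,1/3]\cup[2/3,1]$. This is exactly why the hypothesis asks for the gap on every admitted candidate and not on a promise subset, and why $A$ must be deterministic: a randomized admission could mix probabilities into the gap interval. The threshold predicate therefore factors as
\[
\mathbf 1\{\Pr_r[V=1]\ge 2/3\}=A\wedge E\wedge D_{h,c}.
\]

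It then remains to check the equivalence of the quantified formulas. The guards $A$ and $E$ depend only on $x$ and $y_1'$, so for fixed $y_1'$ they commute out of the later quantifiers $\forall y_2\,\exists y_3\cdots$. This uses the existence of at least one value for every move, which is guaranteed because the domains contain the empty string. Hence $\Q_k$ applied to the threshold predicate is equivalent to the unfolded membership condition, and $L_{\rm env}\in\SR{k}$. Quantified soundness for the common $R$ is not needed for membership; it only ensures that the envelope language lies in $\mathrm{dom}(R)$.

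In the exact case $\rho=0$, so $V$ is deterministic and the same formula witnesses membership in $\SP{k}$, as in Proposition~\ref{prop:placement}.
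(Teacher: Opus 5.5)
Your proposal is correct and matches the paper's proof: both put $(h,c,\sigma,t,z)$ into the first existential move and use the matrix $A\wedge E\wedge C$. Both get the gap because invalid records reject with probability one while admitted ones inherit the candidate's gap, commute the move-independent deterministic guards through the nonempty quantifier domains, and substitute $\sigma_0(x,h,c)$ for native reach, with a deterministic matrix in the exact case.
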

A bounded candidate program and history fit within the first existential choice when the verification interface is fixed. The proof is in Appendix~\ref{app:rsi-proofs}.

This is a closure observation under explicit uniform bounds, not a claim that
arbitrary self-modifying software has a sound admission test. Establishing
the gap and soundness for every permitted candidate is an additional proof
obligation; a syntax check alone does not establish it. The proposition also
concerns existence of an accepted history, not the probability or cost of
discovering it. A successor can escape the particular deployed $S_i$ while
remaining within this larger, fixed enclosing stage. Conversely, a run that
changes the envelope, interpreter, or quantifier protocol requires a new
declaration. Polynomial time for each separately fixed candidate is insufficient:
the admission, exposure, and interpretation must be jointly uniform in its code.
The admitted envelope can contain histories the actual improver never discovers.
The result does not cover an unbounded improvement process with
no common resource bound.

Table~\ref{tab:update-contract} applies these obligations to the XOR family.
Here exposure is what the harness permits. A policy need not realize every
permitted record. If a different declaration instead defines exposure through
policy-attainable records, policy support becomes part of that declaration
and must be compared explicitly.

\begin{table}[ht]
\centering\small
\begin{tabularx}{\linewidth}{@{}p{.22\linewidth}X X@{}}
\toprule
Permitted edit & Effect in the XOR family & Contract to retain or establish\\
\midrule
Full-support proposal policy & Changes discovery probability; $L,F$ unchanged & Same exposure, output binding, and exact core.\\
Default quota $0\to b$ & Sets $L$ equal to the existing $F$ & Both defaults belong to the same admitted quota range.\\
Maximum quota $b\to b+1$ & $F$ gains weight-$(b+1)$ targets & New exposure bound and an all-source-witness exclusion proof; core remains sound.\\
Replace the checker & Depends on its accepted behavior & Re-establish gap and task soundness; syntax admission is insufficient.\\
\bottomrule
\end{tabularx}
\caption{A concrete update contract. All quota languages here remain in
$\Pclass$; these edits are not claims of learned recursive improvement.}
\label{tab:update-contract}
\end{table}

\subsection{Selection error across an adaptive improvement run}
SIFT uses pairwise learned-judge comparisons to prioritize a self-improvement
search, with expensive task evaluations reserved for selected candidates~\cite{Fu2026SIFT}.
Its ranking signal has a different role from a sound terminal verifier.
AI4AI-Bench instead evaluates training-algorithm changes through fresh runs
under a hidden fixed evaluator~\cite{Chi2026AI4AI}.
Both make the distinction between search feedback and final acceptance
operationally important.

Probabilistic acceptance needs a second accounting level when many candidate
versions are compared. Let $B_i$ be the event that the $i$th selection call
accepts a candidate that violates a separately specified improvement
criterion. Let $\mathcal H_{i-1}$ contain the available history and selected
candidate just before that call's fresh evaluation randomness.

\begin{proposition}[Finite adaptive selection budget]\label{prop:selection}
Suppose a run makes at most $M$ selection calls, including rejected candidates.
If for every reachable pre-call history
\[
\Pr(B_i\mid\mathcal H_{i-1})\le\delta_i
\]
for fixed nonnegative budgets $\delta_i$, then
\[
\Pr\!\left(\bigcup_{i=1}^{M}B_i\right)
\le\sum_{i=1}^{M}\delta_i.
\]
In particular, if each invalid fixed candidate has per-trial false acceptance
probability at most $1/3$, evaluated with independent fresh coins after the
candidate is fixed, an odd majority size
$N\ge18\log(M/\delta)$ gives a total false-selection probability at most
$\delta$ for $0<\delta<1$.
\end{proposition}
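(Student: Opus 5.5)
The plan has two parts: a union bound made valid under adaptivity by the tower property, followed by a Hoeffding calculation for the majority instantiation.

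For the first claim, I would not assume that the events $B_i$ are independent. They can depend on earlier selections through the adaptive history. Instead I would bound
\[
\Pr\Bigl(\textstyle\bigcup_{i\le M}B_i\Bigr)\le\sum_{i\le M}\Pr(B_i)
\]
by the ordinary union bound, which needs no independence. Each term is then handled by the tower property. Since $\mathcal H_{i-1}$ is the pre-call history,
\[
\Pr(B_i)=\mathbb E\bigl[\Pr(B_i\mid\mathcal H_{i-1})\bigr]\le\delta_i,
\]
because the conditional bound holds on every reachable history. Unreachable histories carry probability zero and do not contribute.

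Two bookkeeping points need care. First, a run may stop before making $M$ calls. I would pad it with dummy calls for which $B_i=\varnothing$, so that exactly $M$ indices appear and each fictitious term contributes $0\le\delta_i$. Second, the count $M$ must include rejected candidates; otherwise the indexing itself would depend on outcomes. This is where the hypothesis ``including rejected candidates'' enters.

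For the second claim, I would fix a pre-call history. The candidate is then fixed, and the $N$ trials use fresh independent coins. If the candidate is invalid, each trial accepts with probability $p\le1/3$. A strict majority accepts only if the empirical acceptance rate is at least $1/2$, which exceeds $p$ by at least $1/6$. Hoeffding's inequality gives
\[
\Pr(\text{majority accepts})\le\exp\bigl(-2N(1/6)^2\bigr)=e^{-N/18},
\]
matching the bound used in Theorem~\ref{thm:amplify}. If the candidate is valid, then $B_i$ cannot occur and its conditional probability is $0$. Hence $\Pr(B_i\mid\mathcal H_{i-1})\le e^{-N/18}=:\delta_i$ on every reachable history. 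With $N\ge18\log(M/\delta)$, where $\log$ is the natural logarithm, we get $\delta_i\le\delta/M$, and the first part yields a total of at most $\delta$. Odd $N$ guarantees that a strict majority is well defined.

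The main obstacle is justifying that the per-trial bound of $1/3$ holds \emph{conditionally} on $\mathcal H_{i-1}$. The candidate and its validity status are random, being chosen adaptively. What makes the argument work is that the evaluation coins are fresh, that is, independent of $\mathcal H_{i-1}$, and are drawn after the candidate is fixed. I would state this carefully: conditional on any history value, the $N$ trial outcomes are independent Bernoulli variables with mean at most $1/3$ whenever the fixed candidate is invalid. Hoeffding then applies pointwise in the history, and integrating over histories is exactly the tower step used in the first part.
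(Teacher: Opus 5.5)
Your proposal is correct and follows essentially the same route as the paper: average the history-conditional bounds over reachable histories, apply the union bound with empty bad events for unused calls, then bound each call's majority error by $e^{-N/18}\le\delta/M$ via the Hoeffding bound from Theorem~\ref{thm:amplify} and sum. Your explicit treatment of conditioning on the fixed candidate before the fresh coins spells out the premise that the paper's two-line proof leaves implicit.
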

The conditional guarantee must cover each selection call; summing these budgets controls the risk of any false selection. The proof is in Appendix~\ref{app:rsi-proofs}.

This is a standard probability bound applied to stage selection; independence
between different calls is unnecessary when the conditional bound holds.
Its premise concerns the improvement criterion and is not supplied merely by
the tuple-level gap of a deployed checker. A pointwise error bound for invalid
candidates under the selection rule must be established separately. Good
average judge accuracy does not supply it. Nor do fresh decoding coins repair
adaptive overfitting to a repeatedly reused benchmark; the evaluation data
and feedback policy require their own validity argument~\cite{Dwork2015}.
The finite audit's three-call example gives the concrete risk: three independent
opportunities to accept a fixed invalid candidate with probability $1/4$
each yield probability $37/64$ of at least one false acceptance.

In a finite experiment, versioning the agent, improver, and evaluator
separately, fixing the improvement criterion, recording all selection calls,
and separating selection feedback from final evaluation make an RSI claim
auditable. Establishing asymptotic frontier escape remains an additional
obligation beyond demonstrating held-out performance gains.

\subsection{A sufficient contract for paired successor evaluation}\label{sec:paired}
The preceding budget becomes operational only after specifying how one call
earns its conditional guarantee. One conservative construction compares
bounded candidate and incumbent scores on fresh paired evaluation instances.
It separates sampling error from mismatch between the evaluation process and
the deployment objective. Let $G_i$ denote the expected deployment-score gain
of the proposed successor over its incumbent, for an explicitly fixed
comparison criterion. A positive observed difference need not imply $G_i>0$.

\begin{samepage}
\begin{proposition}[Fresh paired evaluation with a transfer allowance]
\label{prop:paired}
Before call $i$, let $\mathcal H_{i-1}$ fix the candidate, incumbent,
evaluation criterion, sample size $m_i\ge1$, desired gain $\tau_i\ge0$,
and allowance $b_i\ge0$. Conditional on this history, let
$Y_{i1},\ldots,Y_{im_i}\in[-1,1]$ be independent paired score differences
with common mean $\mu_i$. Assume the history-measurable deployment gain obeys
\[
G_i\ge\mu_i-b_i
\quad\text{for every reachable pre-call history.}
\]
For fixed $0<\delta_i<1$, accept only when
\begin{equation}\label{eq:paired-rule}
\overline Y_i>\tau_i+b_i+
\sqrt{\frac{2\log(1/\delta_i)}{m_i}}.
\end{equation}
Then $\Pr(\text{accept at }i\ \wedge\ G_i\le\tau_i\mid
\mathcal H_{i-1})\le\delta_i$. For at most $M$ adaptive calls, the probability
of any accepted update failing its declared gain criterion is at most
$\sum_{i=1}^{M}\delta_i$.
\end{proposition}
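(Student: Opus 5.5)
The plan is to prove the per-call bound by conditioning on $\mathcal H_{i-1}$ and applying Hoeffding's inequality, and then to obtain the multi-call statement directly from Proposition~\ref{prop:selection}.

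\textbf{Per-call bound.} Fix a reachable pre-call history $\mathcal H_{i-1}$. All of $\mu_i$, $\tau_i$, $b_i$, $m_i$, $\delta_i$ and $G_i$ are then fixed, because each is history-measurable.
\begin{itemize}
\item If $G_i>\tau_i$, the event $\{\text{accept}\wedge G_i\le\tau_i\}$ is empty and its conditional probability is zero.
\item Otherwise $G_i\le\tau_i$, and the transfer allowance gives $\mu_i\le G_i+b_i\le\tau_i+b_i$. Acceptance under~\eqref{eq:paired-rule} then forces
\[
\overline Y_i-\mu_i>\sqrt{2\log(1/\delta_i)/m_i}=:\epsilon_i .
\]
\end{itemize}
Conditional on the history, the $Y_{ij}$ are independent with values in $[-1,1]$, a range of width $2$. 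Hoeffding's one-sided inequality therefore gives
\[
\Pr(\overline Y_i-\mu_i\ge\epsilon_i\mid\mathcal H_{i-1})\le\exp\!\left(-\frac{2m_i^2\epsilon_i^2}{m_i\cdot 4}\right)=\exp\!\left(-\frac{m_i\epsilon_i^2}{2}\right)=\delta_i .
\]
This establishes the per-call claim.

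\textbf{Multi-call bound.} Define $B_i$ as the event that call $i$ accepts while $G_i\le\tau_i$. Calls that never occur, or that reject, contribute the empty event. The per-call bound is exactly the premise of Proposition~\ref{prop:selection}, holding for every reachable pre-call history. That proposition's union bound then yields the total bound $\sum_i\delta_i$ over at most $M$ adaptive calls. Equivalently, one can take expectations of the conditional bounds, since $\Pr(B_i)=\mathbb E[\Pr(B_i\mid\mathcal H_{i-1})]\le\delta_i$, and apply the union bound directly.

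\textbf{Main obstacle.} The delicate step is measure-theoretic rather than computational. I must check that Hoeffding applies conditionally, which holds because the statement gives independence and the common mean $\mu_i$ conditional on $\mathcal H_{i-1}$. I must also check that $G_i$ and the acceptance threshold are $\mathcal H_{i-1}$-measurable, so the case split is legitimate. Finally, adaptivity across calls must not require independence between calls. This is satisfied because fresh evaluation randomness is drawn after $\mathcal H_{i-1}$ is fixed. I would state these points explicitly, since that is where reuse of a benchmark would break the argument.
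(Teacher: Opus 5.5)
Your proposal is correct and matches the paper's proof essentially step for step. You fix the reachable history, observe that the bad event is empty when $G_i>\tau_i$, and otherwise use the transfer allowance to get $\mu_i\le\tau_i+b_i$ and apply one-sided Hoeffding with range two to obtain exactly $\delta_i$; you then average over histories and take the union bound as in Proposition~\ref{prop:selection}.
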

\end{samepage}
This is Hoeffding's inequality~\cite{Hoeffding1963} followed by
Proposition~\ref{prop:selection}; the proof is in
Appendix~\ref{app:paired-proof}. Dependence between the two scores within a
pair is allowed, as when both versions face the same task. Independence is
required between pairs conditional on the full pre-call history. The batch
size and candidate must be fixed before observing that batch. Repeatedly
checking a growing batch until it passes requires a different, sequentially
valid procedure. All candidate comparisons, including rejected proposals,
consume the declared selection budget.

If the paired evaluation directly samples the stated deployment objective,
$b_i=0$ is appropriate. Otherwise $b_i$ needs an external justification;
the sampling bound cannot establish it. A candidate-dependent environment
response or a changed judge may invalidate the transfer premise even with
fresh samples. If $b_i+\tau_i$ and sampling uncertainty exceed the available
score range, the rule abstains. It makes no claim to accept every beneficial
update, certify stage soundness, or preserve every previously accepted task.

\paragraph{Three audit cases.}
With $m=16$, $\delta=1/20$, $b=\tau=0$, and independent fair differences
$Y_j\in\{-1,1\}$, rule~\eqref{eq:paired-rule} accepts only for at least
13 positive observations. Its exact false-selection probability is
$697/65536\approx0.01064<0.05$. This checks a finite instance, not the
general proof. If instead a biased proxy always reports $Y_j=1$ while
$G=-1$, more observations reinforce a false conclusion when $b$ is
incorrectly set to zero. The valid allowance $b=2$ prevents acceptance.

A hidden reused test has a different failure mode. Draw one task index
$X$ uniformly from $\{1,\ldots,N\}$. Candidate $c_j$ passes only task $j$.
Although each fixed candidate passes with probability $1/N$, after
$j-1$ rejections the next candidate passes with conditional probability
$1/(N-j+1)$. Testing all $N$ candidates eventually passes with probability
one. Hiding $X$ does not maintain the original conditional error bound.
Appendix~\ref{app:selection-audit} gives exact enumeration and clarifies the
distinction from genuinely fresh tests. These are designed counterexamples,
not measurements of any cited agent system.

\subsection{Evolving evaluators and deployment validity}\label{sec:evolving}
Three recent studies sharpen the distinction between a task checker, an
update-selection signal, and a deployment objective.
Guo et al.~\cite{Guo2026SelfAuthored} study agents that edit both policies and
their own tests. Their external audit gate compares proposed and incumbent
policies, with a tolerance for score regression; a separate evaluation probes
deployment performance. This supports separating self-authored tests from
successor selection. It does not establish the history-conditional premise of
Proposition~\ref{prop:selection}: a hidden audit reused across proposals still
needs an adaptive-validity argument, and passing a tolerance-based gate is
not a proof of monotone deployment improvement.

Wang et al.~\cite{Wang2026Fidelity} directly study disagreements between
proxy improvement and deployment improvement, including update-induced
environment changes. Their improvement-fidelity analysis and calibrated
selection experiments address the distribution of proposed updates. This
overlaps with our motivation for a separate selection contract. Our
language-containment results concern a different object: which task inputs
an interface admits. The allowance $b_i$ in Proposition~\ref{prop:paired}
makes transfer an explicit premise; neither a structural containment proof
nor ordinary sampling concentration provides a bound on it. We do not claim
to introduce the distinction between proxy gains and deployment gains.

VeriFine~\cite{Zhou2026VeriFine} improves training-time verification through
rubric refinement and human calibration, while using a fixed reference-based
anchor for policy comparisons and independent test labels for judge
assessment. Its policy comparisons restart from a common base initialization;
the inherited state includes the curriculum and rubrics. This makes it useful
to version each component and its evaluation reference separately. A changing
judge does not by itself demonstrate frontier escape. If revised rubrics
change which outputs count as correct, a comparison also needs a common task
relation $R$, or an explicit translation between the old and new relations.
The stage-containment theorems do not compare different tasks without that
additional declaration.

\subsection{Goal information and conditional human involvement}
\label{sec:goal-information}
Task-relative soundness presupposes a specified relation $R$. A system can
improve search or verification while remaining unable to determine an
unobserved choice of goal. To isolate this issue, consider two possible
relations $R_0,R_1$ on the same observable input and output domain. The same
stage $S$ has the same exposure and acceptance rules in both situations;
neither its input nor its installed construction artifacts reveal which
relation applies. Uniform soundness across the two situations is a stronger
requirement than soundness for either declared task separately.

\begin{proposition}[Incompatible specifications force abstention]
\label{prop:goal-information}
Suppose a fixed stage $S$ satisfies task-relative soundness for both $R_0$ and
$R_1$. If an input $x$ satisfies
\[
\forall z\quad\neg\bigl(R_0(x,z)\wedge R_1(x,z)\bigr),
\]
then $S$ has no accepted exposed output on $x$, so $x$ belongs to neither
frontier. Equivalently, nonempty accepted coverage at $x$ precludes soundness
for both incompatible specifications.
\end{proposition}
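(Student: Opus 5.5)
The argument is a direct contradiction from the two soundness hypotheses, applied to one exposed accepted tuple; no class machinery is needed. We suppose, for contradiction, that $S$ has an accepted exposed output on $x$. By the definition of the frontier, this means there exist a support $\sigma$ admitted by the declared interface, a transcript $t$ with $\Exp_S(x,\sigma,t)$, and an output $z$ with $\Acc_S(x,z,t)$; that is, $t\in\Exp_S^*(x)$. Native reach uses $\sigma=\sigma_0$, which is a special case. Since $\Lang_R(S)\subseteq\Front_R(S)$ by Proposition~\ref{prop:placement}, it suffices to rule out frontier membership.

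Next we apply task-relative soundness~\eqref{eq:sound}. The key observation is that the stage data $\Exp_S$, $\Core_S$, and hence $D_S$ and $\Acc_S$, are the same under both readings. The hypothesis only changes which relation appears on the right-hand side of~\eqref{eq:sound}. The single witness $(\sigma,t,z)$ therefore triggers both implications at once, giving $R_0(x,z)$ and $R_1(x,z)$. This contradicts the incompatibility premise $\forall z\,\neg(R_0(x,z)\wedge R_1(x,z))$. Hence no such witness exists, and $x\notin\Front_R(S)$ for $R\in\{R_0,R_1\}$.

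The frontier notation $\Front_R(S)$ depends on $R$ only through the label, because Definition~\ref{def:frontier} never consults $R$. So ``neither frontier'' really means the common accepted-input set. The equivalent contrapositive form then follows immediately: if some exposed accepted output exists at $x$, soundness for $R_0$ and $R_1$ cannot both hold.

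The only step needing care is bookkeeping rather than mathematics. We must stress that the quantified acceptance $\Acc_S$, including the gap-defined $D_S$ and all $k$ blocks, is fixed independently of which task is declared. Soundness is stated for the complete quantified game, not for individual move tuples, so the argument uses $\Acc_S$ exactly as it appears in~\eqref{eq:sound}. It does not require any local low-acceptance property, and depth or randomness causes no complication.
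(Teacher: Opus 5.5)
Your proof is correct and is the same argument as the paper's: one accepted exposed witness $(\sigma,t,z)$ triggers both soundness implications, giving $R_0(x,z)\wedge R_1(x,z)$ against the incompatibility premise, and native reach is handled through its inclusion in the frontier. Your remark that $\Front_R(S)$ does not depend on $R$, so ``neither frontier'' denotes one common accepted-input set, is accurate and makes explicit what the paper leaves implicit.
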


\begin{proof}
Any accepted exposed record with output $z$ would imply both $R_0(x,z)$ and
$R_1(x,z)$ by the two soundness contracts, contradicting the hypothesis.
The native languages are also empty at this input because they are contained
in the corresponding frontiers.
\end{proof}

For example, let the required output be a goal bit $b$ privately selected by a
human, so $R_b(x,z)$ means $z=b$, with the observed $x$ independent of $b$.
If that person is the only permitted source of information distinguishing
the two goals, a uniformly sound agent must either obtain that information
or abstain. Providing $b$ once permits the exact constant-output rule for
the chosen goal. This establishes conditional necessity of human-origin
information, rather than a requirement for repeated live intervention.
Human ownership of the goal and exclusivity of the information channel are
explicit workflow premises. Another authorized source revealing $b$ removes
the exclusivity premise. The proposition is source-neutral and makes no
assumption that humans have a noncomputable capability.

Assistance can therefore enter during construction of the evaluator or
capability-acquisition mechanism, as well as through its subsequent calls.
The installed specification, permissions, reference data, and interpreter
belong to the declared system boundary. Their human origin can remain causally
relevant after deployment queries cease, without implying a new complexity
class or unavoidable human involvement in every improvement. Claiming that
a mechanism has not yet been formalized supplies no proof of such necessity.

This obstruction also differs from escape from a fixed closure frontier.
Changing which goal counts as correct changes the task relation and requires
an explicit comparison of tasks. If $R$ remains fixed, a proposed escape must
still establish the changed authority or exposure identified in
Theorem~\ref{thm:diagnostic}, together with its soundness and resource bounds.
New reasoning traces can improve search for accepted witnesses, as in the
counterexample-guided repair experiment, without revealing an unobserved goal
or expanding the accepted language of an unchanged stage.

\section{Verifier scope, related work, and limitations}\label{sec:scope}
\subsection{A realizable bounded-error contract}
The pointwise condition is restrictive but has concrete algorithmic instances.
Consider checking $AB=C$ for supplied $n\times n$ matrices over $\mathbb F_2$.
Draw a uniform $v\in\mathbb F_2^n$ and accept if $A(Bv)=Cv$, repeating twice
independently and requiring both tests to pass. This is the standard randomized
matrix-product check~\cite{Freivalds1979}. Correct products pass with probability
one; incorrect products pass both independent tests with probability at most
$1/4$. Thus the gap and task soundness hold uniformly. The row argument is in
Appendix~\ref{app:matrix-proof}. Each test uses quadratic field operations after the matrices are
provided. This example illustrates a valid probabilistic terminal checker;
it is not a separation from deterministic polynomial time.

By contrast, an LLM judge may confidently favor an incorrect answer on an
identifiable subset. Its probabilities can satisfy the gap while its task
soundness fails. Calibration of aggregate scores cannot prove either
universal condition. Abstention is useful only with a proved coverage and
error contract on the retained domain, or a sound fallback whose combined
acceptance is analyzed explicitly. Merely removing uncertain examples does
not remove confident mistakes. Stochastic proposal generation with an exact
terminal checker can still use the exact stage semantics if the admissible
transcript relation is justified; that semantics leaves production
probabilities unmodeled.

\subsection{Related work}
Bounded alternation and the polynomial hierarchy supply the deterministic
background~\cite{CKS1981}. Arthur--Merlin systems distinguish when verifier
coins are visible to a witness~\cite{BabaiMoran1988}; our hidden-terminal-coin,
pointwise-gap model is a particular restricted semantics. The classical
randomized hierarchy bound supplies the sandwich proof~\cite{Sipser1983,Lautemann1983}.
These results classify languages. The stage layer adds an explicit comparison
between default support and all support admitted by one interface, together
with refactoring and transition semantics.

Debate uses adversarial interaction to extend what a bounded judge can
evaluate~\cite{Irving2018}. Here a universal block means every bounded
challenge under the declared rules, and a learned debater does not automatically
realize it. Proof certificates provide another way to discharge a universal
claim on restricted families. Our certificate-substitution result isolates
the soundness, completeness, and size conditions needed for that comparison.

Test-time scaling studies how compute allocation affects performance,
including search against verifiers and iterative refinement~\cite{Snell2024}.
Reflexion uses feedback and episodic memory to improve subsequent trials
without weight updates~\cite{Shinn2023}. These mechanisms motivate distinct
search-policy and support-interface coordinates. Their empirical success
does not alone establish which stage frontier changed. Likewise, studies of
LLM judges document agreement with human preferences alongside systematic
biases~\cite{Zheng2023}, and studies of intrinsic self-correction identify
limits without external feedback~\cite{Huang2024}. Such measurements concern
particular distributions and protocols; they do not establish the pointwise
verification contracts used here. The MAST study further separates failures
of system design, inter-agent coordination, and task verification~\cite{Cemri2025}.
This motivates recording the location of a failure rather than treating
additional agents as evidence of stronger verification.

\subsection{Limitations}
The results classify declared protocol families, not latent intelligence,
training dynamics, or the probability that a model discovers a solution.
Existential witnesses and worst-case universal moves idealize possible
verification structure. A finite execution policy requires its own efficiency
and reliability analysis. There is no claim that inference scaling must
numerically plateau, that more support always helps, or that a changed stage
must escape its predecessor's frontier.

The pointwise-gap model does not establish validity for average-case judges,
distribution shift, or correlated evaluation errors. The paired selection
rule gives a sufficient condition under fresh conditional sampling and an
externally justified transfer allowance; it does not learn or validate that
allowance from the same selection data. Randomized exposure can be represented at the
level of admissible traces only when that relation is itself justified; reliable
generation probabilities require an additional semantic object. Public coins
require a different quantifier order. The strictness of the randomized hierarchy
and its relation to standard $\MA$ beyond the stated inclusion are not settled
here. The synthetic completeness constructions demonstrate expressive capacity,
and the finite audits check concrete examples. The repair experiment adds
an executable feedback policy with exhaustive finite coverage, but its exact
truth-table specifications and restricted program grammar differ from modern
LLM coding agents. No experiment here establishes learned recursive
improvement or frontier changes in large deployed systems.

\section{Amplification example and exact curves}\label{app:amplification-example}
\begin{proposition}[Existential coins can create false frontier membership]\label{prop:false}
There is a sound depth-one stage for which replacing bounded-error acceptance
by $\exists r\,\Core_S=1$ strictly enlarges the frontier and accepts inputs
outside the task domain.
\end{proposition}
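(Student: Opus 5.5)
The plan is to exhibit one explicit finite-style sound stage, modeled on the noisy terminal core of the controlled stochastic extension in Appendix~\ref{sec:audit}. Then we compute the two frontiers directly from Definition~\ref{def:frontier}.

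Fix a polynomially balanced task $R$ whose domain excludes some input. The simplest choice is $R(x,z)\iff x\in\{0,1\}^*$ begins with $1$ and $z=\varepsilon$. Take depth $k=1$ with a dummy existential move restricted to the empty string, and let the default support equal the only admissible support. Let $\Exp_S$ accept exactly the empty transcript. Let $C(x,z,t,\vec y)$ be the deterministic check ``$x$ begins with $1$, $z=\varepsilon$, and the record is well formed.'' The core draws two fair bits $r$ and outputs $C$ unless $r=11$, in which case it outputs $1-C$. Malformed or out-of-bound tuples are rejected deterministically.

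We then verify Definition~\ref{def:stage}, in this order:
\begin{enumerate}[label=(\roman*)]
\item The bounds, uniformity and encoding checks hold by construction.
\item Every well-formed tuple has acceptance probability $3/4$ if $C=1$ and $1/4$ if $C=0$, so the gap \eqref{eq:gap} holds and $D_S=C$.
\item Soundness \eqref{eq:sound} holds, because $\Acc_S(x,z,t)$ reduces to $C=1$, which implies $R(x,z)$.
\end{enumerate}
Consequently $\Front_R(S)=\Lang_R(S)=\{x: x \text{ begins with } 1\}=\mathrm{dom}(R)$, consistent with Proposition~\ref{prop:placement}.

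Next we define the modified semantics $\Acc^{\exists r}_S(x,z,t)\iff\Q_1\vec y\,\exists r\,\Core_S(x,z,t,\vec y;r)=1$ and the corresponding frontier. For any input $x$ beginning with $0$ (or empty), the well-formed tuple $(x,\varepsilon,\varepsilon,\varepsilon)$ has $C=0$. The tape $r=11$ nevertheless makes the core return $1$, so $x$ enters the modified frontier. Such $x$ lie outside $\mathrm{dom}(R)$, hence outside $\Front_R(S)$. Inclusion of the old frontier in the new one is immediate, since $D_S=1$ means some tape accepts. Together these give strict enlargement and acceptance outside the task domain.

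The only delicate point is making sure the false acceptance comes from a \emph{well-formed} tuple. If the witness were malformed, deterministic rejection would remove it and the counterexample would be vacuous. This is why the core rejects only malformed records deterministically and applies the noise to all well-formed tuples, including those with $C=0$. A second subtlety is that the stage must remain sound under its genuine bounded-error semantics, which is exactly what the gap computation in (ii) secures. The $\exists r$ reading is not itself a stage, so no soundness claim is made for it.
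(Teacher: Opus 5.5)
Your construction is correct and uses the same mechanism as the paper's proof: in a sound depth-one stage, false tuples have acceptance probability on the low side of the gap yet still have an accepting tape. Hence $D_S$ and the bounded-error frontier equal the task domain, while $\exists r$ admits every input. The paper instantiates this with an arbitrary proper $\NP$ language, where the move carries a genuine certificate, and a one-sided lucky tape $r=0^{n+2}$ of probability $2^{-(n+2)}$; you use a trivial language with the two-sided $3/4$-versus-$1/4$ noisy core of Appendix~\ref{sec:audit}, and either choice suffices.
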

A false answer can have a rare accepting tape while remaining below the bounded-error threshold. The proof is in Appendix~\ref{app:random-proofs}.
The pathology changes the acceptance semantics, not the stage. It justifies
keeping verifier randomness outside the control of witnesses and measuring
bounded error rather than treating one success as a certificate.

Figure~\ref{fig:amplification} shows how the majority rule preserves the gap
and why accepting any successful repetition has a different effect.
\begin{figure}[tbp]
\centering
\begin{tikzpicture}
\begin{axis}[name=amp,width=7.45cm,height=5.65cm,
  title={(a) Majority response},title style={font=\small\bfseries},
  xlabel={Single-call acceptance $p$},ylabel={Majority acceptance},
  xmin=0,xmax=1,ymin=0,ymax=1,
  xtick={0,.333333333,.666666667,1},xticklabels={$0$,$1/3$,$2/3$,$1$},
  ytick={0,.333333333,.666666667,1},yticklabels={$0$,$1/3$,$2/3$,$1$},
  tick label style={font=\footnotesize},label style={font=\small},
  axis line style={fiGray},tick style={fiGray},
  legend style={font=\footnotesize,draw=none,fill=white,at={(.03,.97)},anchor=north west},
  clip=true]
\path[fill=fiGray!12] (axis cs:.333333333,0) rectangle (axis cs:.666666667,1);
\draw[fiGray!45,dotted] (axis cs:0,.333333333)--(axis cs:1,.333333333);
\draw[fiGray!45,dotted] (axis cs:0,.666666667)--(axis cs:1,.666666667);
\addplot[fiGray,line width=.9pt] table[x=p,y=n1,row sep=\\]{
p n1 n3 n15\\
0 0 0 0\\
0.00833333333333 0.00833333333333 0.000207175925926 1.42069925749e-13\\
0.0166666666667 0.0166666666667 0.000824074074074 3.45135424561e-11\\
0.025 0.025 0.00184375 8.39090876999e-10\\
0.0333333333333 0.0333333333333 0.00325925925926 7.94782601732e-09\\
0.0416666666667 0.0416666666667 0.00506365740741 4.49051406291e-08\\
0.05 0.05 0.00725 1.82957337871e-07\\
0.0583333333333 0.0583333333333 0.00981134259259 5.94787975532e-07\\
0.0666666666667 0.0666666666667 0.0127407407407 1.63895873934e-06\\
0.075 0.075 0.01603125 3.98005141547e-06\\
0.0833333333333 0.0833333333333 0.0196759259259 8.74736542242e-06\\
0.0916666666667 0.0916666666667 0.0236678240741 1.77326456009e-05\\
0.1 0.1 0.028 3.3624887968e-05\\
0.108333333333 0.108333333333 0.0326655092593 6.02779482219e-05\\
0.116666666667 0.116666666667 0.0376574074074 0.000103004574894\\
0.125 0.125 0.04296875 0.000168888691178\\
0.133333333333 0.133333333333 0.0485925925926 0.000267106314703\\
0.141666666667 0.141666666667 0.0545219907407 0.000409244467842\\
0.15 0.15 0.06075 0.000609606807688\\
0.158333333333 0.158333333333 0.0672696759259 0.000885494493342\\
0.166666666667 0.166666666667 0.0740740740741 0.00125745099353\\
0.175 0.175 0.08115625 0.00174946009374\\
0.183333333333 0.183333333333 0.0885092592593 0.00238908725406\\
0.191666666667 0.191666666667 0.0961261574074 0.00320755565579\\
0.2 0.2 0.104 0.00423974970982\\
0.208333333333 0.208333333333 0.112123842593 0.00552414043461\\
0.216666666667 0.216666666667 0.120490740741 0.00710262889504\\
0.225 0.225 0.12909375 0.00902030577571\\
0.233333333333 0.233333333333 0.137925925926 0.0113251270924\\
0.241666666667 0.241666666667 0.146980324074 0.0140675079779\\
0.25 0.25 0.15625 0.0172998383641\\
0.258333333333 0.258333333333 0.165728009259 0.0210759261881\\
0.266666666667 0.266666666667 0.175407407407 0.0254503754241\\
0.275 0.275 0.18528125 0.0304779077712\\
0.283333333333 0.283333333333 0.195342592593 0.0362126381619\\
0.291666666667 0.291666666667 0.205584490741 0.0427073153892\\
0.3 0.3 0.216 0.0500125400538\\
0.308333333333 0.308333333333 0.226582175926 0.0581759726966\\
0.316666666667 0.316666666667 0.237324074074 0.0672415454004\\
0.325 0.325 0.24821875 0.0772486903063\\
0.333333333333 0.333333333333 0.259259259259 0.0882315984068\\
0.341666666667 0.341666666667 0.270438657407 0.100218521644\\
0.35 0.35 0.28175 0.113231130776\\
0.358333333333 0.358333333333 0.293186342593 0.127283940683\\
0.366666666667 0.366666666667 0.304740740741 0.142383813793\\
0.375 0.375 0.31640625 0.158529551124\\
0.383333333333 0.383333333333 0.328175925926 0.175711579092\\
0.391666666667 0.391666666667 0.340042824074 0.193911738761\\
0.4 0.4 0.352 0.21310318261\\
0.408333333333 0.408333333333 0.364040509259 0.233250382214\\
0.416666666667 0.416666666667 0.376157407407 0.254309248491\\
0.425 0.425 0.38834375 0.27622736443\\
0.433333333333 0.433333333333 0.400592592593 0.298944328416\\
0.441666666667 0.441666666667 0.412896990741 0.322392204575\\
0.45 0.45 0.42525 0.346496074853\\
0.458333333333 0.458333333333 0.437644675926 0.371174685979\\
0.466666666667 0.466666666667 0.450074074074 0.396341182969\\
0.475 0.475 0.46253125 0.42190391947\\
0.483333333333 0.483333333333 0.475009259259 0.447767334072\\
0.491666666667 0.491666666667 0.487501157407 0.47383288065\\
0.5 0.5 0.5 0.5\\
0.508333333333 0.508333333333 0.512498842593 0.52616711935\\
0.516666666667 0.516666666667 0.524990740741 0.552232665928\\
0.525 0.525 0.53746875 0.57809608053\\
0.533333333333 0.533333333333 0.549925925926 0.603658817031\\
0.541666666667 0.541666666667 0.562355324074 0.628825314021\\
0.55 0.55 0.57475 0.653503925147\\
0.558333333333 0.558333333333 0.587103009259 0.677607795425\\
0.566666666667 0.566666666667 0.599407407407 0.701055671584\\
0.575 0.575 0.61165625 0.72377263557\\
0.583333333333 0.583333333333 0.623842592593 0.745690751509\\
0.591666666667 0.591666666667 0.635959490741 0.766749617786\\
0.6 0.6 0.648 0.78689681739\\
0.608333333333 0.608333333333 0.659957175926 0.806088261239\\
0.616666666667 0.616666666667 0.671824074074 0.824288420908\\
0.625 0.625 0.68359375 0.841470448876\\
0.633333333333 0.633333333333 0.695259259259 0.857616186207\\
0.641666666667 0.641666666667 0.706813657407 0.872716059317\\
0.65 0.65 0.71825 0.886768869224\\
0.658333333333 0.658333333333 0.729561342593 0.899781478356\\
0.666666666667 0.666666666667 0.740740740741 0.911768401593\\
0.675 0.675 0.75178125 0.922751309694\\
0.683333333333 0.683333333333 0.762675925926 0.9327584546\\
0.691666666667 0.691666666667 0.773417824074 0.941824027303\\
0.7 0.7 0.784 0.949987459946\\
0.708333333333 0.708333333333 0.794415509259 0.957292684611\\
0.716666666667 0.716666666667 0.804657407407 0.963787361838\\
0.725 0.725 0.81471875 0.969522092229\\
0.733333333333 0.733333333333 0.824592592593 0.974549624576\\
0.741666666667 0.741666666667 0.834271990741 0.978924073812\\
0.75 0.75 0.84375 0.982700161636\\
0.758333333333 0.758333333333 0.853019675926 0.985932492022\\
0.766666666667 0.766666666667 0.862074074074 0.988674872908\\
0.775 0.775 0.87090625 0.990979694224\\
0.783333333333 0.783333333333 0.879509259259 0.992897371105\\
0.791666666667 0.791666666667 0.887876157407 0.994475859565\\
0.8 0.8 0.896 0.99576025029\\
0.808333333333 0.808333333333 0.903873842593 0.996792444344\\
0.816666666667 0.816666666667 0.911490740741 0.997610912746\\
0.825 0.825 0.91884375 0.998250539906\\
0.833333333333 0.833333333333 0.925925925926 0.998742549006\\
0.841666666667 0.841666666667 0.932730324074 0.999114505507\\
0.85 0.85 0.93925 0.999390393192\\
0.858333333333 0.858333333333 0.945478009259 0.999590755532\\
0.866666666667 0.866666666667 0.951407407407 0.999732893685\\
0.875 0.875 0.95703125 0.999831111309\\
0.883333333333 0.883333333333 0.962342592593 0.999896995425\\
0.891666666667 0.891666666667 0.967334490741 0.999939722052\\
0.9 0.9 0.972 0.999966375112\\
0.908333333333 0.908333333333 0.976332175926 0.999982267354\\
0.916666666667 0.916666666667 0.980324074074 0.999991252635\\
0.925 0.925 0.98396875 0.999996019949\\
0.933333333333 0.933333333333 0.987259259259 0.999998361041\\
0.941666666667 0.941666666667 0.990188657407 0.999999405212\\
0.95 0.95 0.99275 0.999999817043\\
0.958333333333 0.958333333333 0.994936342593 0.999999955095\\
0.966666666667 0.966666666667 0.996740740741 0.999999992052\\
0.975 0.975 0.99815625 0.999999999161\\
0.983333333333 0.983333333333 0.999175925926 0.999999999965\\
0.991666666667 0.991666666667 0.999792824074 1\\
1 1 1 1\\
};
\addlegendentry{$N=1$}
\addplot[fiNative,dashed,line width=1pt] table[x=p,y=n3,row sep=\\]{
p n1 n3 n15\\
0 0 0 0\\
0.00833333333333 0.00833333333333 0.000207175925926 1.42069925749e-13\\
0.0166666666667 0.0166666666667 0.000824074074074 3.45135424561e-11\\
0.025 0.025 0.00184375 8.39090876999e-10\\
0.0333333333333 0.0333333333333 0.00325925925926 7.94782601732e-09\\
0.0416666666667 0.0416666666667 0.00506365740741 4.49051406291e-08\\
0.05 0.05 0.00725 1.82957337871e-07\\
0.0583333333333 0.0583333333333 0.00981134259259 5.94787975532e-07\\
0.0666666666667 0.0666666666667 0.0127407407407 1.63895873934e-06\\
0.075 0.075 0.01603125 3.98005141547e-06\\
0.0833333333333 0.0833333333333 0.0196759259259 8.74736542242e-06\\
0.0916666666667 0.0916666666667 0.0236678240741 1.77326456009e-05\\
0.1 0.1 0.028 3.3624887968e-05\\
0.108333333333 0.108333333333 0.0326655092593 6.02779482219e-05\\
0.116666666667 0.116666666667 0.0376574074074 0.000103004574894\\
0.125 0.125 0.04296875 0.000168888691178\\
0.133333333333 0.133333333333 0.0485925925926 0.000267106314703\\
0.141666666667 0.141666666667 0.0545219907407 0.000409244467842\\
0.15 0.15 0.06075 0.000609606807688\\
0.158333333333 0.158333333333 0.0672696759259 0.000885494493342\\
0.166666666667 0.166666666667 0.0740740740741 0.00125745099353\\
0.175 0.175 0.08115625 0.00174946009374\\
0.183333333333 0.183333333333 0.0885092592593 0.00238908725406\\
0.191666666667 0.191666666667 0.0961261574074 0.00320755565579\\
0.2 0.2 0.104 0.00423974970982\\
0.208333333333 0.208333333333 0.112123842593 0.00552414043461\\
0.216666666667 0.216666666667 0.120490740741 0.00710262889504\\
0.225 0.225 0.12909375 0.00902030577571\\
0.233333333333 0.233333333333 0.137925925926 0.0113251270924\\
0.241666666667 0.241666666667 0.146980324074 0.0140675079779\\
0.25 0.25 0.15625 0.0172998383641\\
0.258333333333 0.258333333333 0.165728009259 0.0210759261881\\
0.266666666667 0.266666666667 0.175407407407 0.0254503754241\\
0.275 0.275 0.18528125 0.0304779077712\\
0.283333333333 0.283333333333 0.195342592593 0.0362126381619\\
0.291666666667 0.291666666667 0.205584490741 0.0427073153892\\
0.3 0.3 0.216 0.0500125400538\\
0.308333333333 0.308333333333 0.226582175926 0.0581759726966\\
0.316666666667 0.316666666667 0.237324074074 0.0672415454004\\
0.325 0.325 0.24821875 0.0772486903063\\
0.333333333333 0.333333333333 0.259259259259 0.0882315984068\\
0.341666666667 0.341666666667 0.270438657407 0.100218521644\\
0.35 0.35 0.28175 0.113231130776\\
0.358333333333 0.358333333333 0.293186342593 0.127283940683\\
0.366666666667 0.366666666667 0.304740740741 0.142383813793\\
0.375 0.375 0.31640625 0.158529551124\\
0.383333333333 0.383333333333 0.328175925926 0.175711579092\\
0.391666666667 0.391666666667 0.340042824074 0.193911738761\\
0.4 0.4 0.352 0.21310318261\\
0.408333333333 0.408333333333 0.364040509259 0.233250382214\\
0.416666666667 0.416666666667 0.376157407407 0.254309248491\\
0.425 0.425 0.38834375 0.27622736443\\
0.433333333333 0.433333333333 0.400592592593 0.298944328416\\
0.441666666667 0.441666666667 0.412896990741 0.322392204575\\
0.45 0.45 0.42525 0.346496074853\\
0.458333333333 0.458333333333 0.437644675926 0.371174685979\\
0.466666666667 0.466666666667 0.450074074074 0.396341182969\\
0.475 0.475 0.46253125 0.42190391947\\
0.483333333333 0.483333333333 0.475009259259 0.447767334072\\
0.491666666667 0.491666666667 0.487501157407 0.47383288065\\
0.5 0.5 0.5 0.5\\
0.508333333333 0.508333333333 0.512498842593 0.52616711935\\
0.516666666667 0.516666666667 0.524990740741 0.552232665928\\
0.525 0.525 0.53746875 0.57809608053\\
0.533333333333 0.533333333333 0.549925925926 0.603658817031\\
0.541666666667 0.541666666667 0.562355324074 0.628825314021\\
0.55 0.55 0.57475 0.653503925147\\
0.558333333333 0.558333333333 0.587103009259 0.677607795425\\
0.566666666667 0.566666666667 0.599407407407 0.701055671584\\
0.575 0.575 0.61165625 0.72377263557\\
0.583333333333 0.583333333333 0.623842592593 0.745690751509\\
0.591666666667 0.591666666667 0.635959490741 0.766749617786\\
0.6 0.6 0.648 0.78689681739\\
0.608333333333 0.608333333333 0.659957175926 0.806088261239\\
0.616666666667 0.616666666667 0.671824074074 0.824288420908\\
0.625 0.625 0.68359375 0.841470448876\\
0.633333333333 0.633333333333 0.695259259259 0.857616186207\\
0.641666666667 0.641666666667 0.706813657407 0.872716059317\\
0.65 0.65 0.71825 0.886768869224\\
0.658333333333 0.658333333333 0.729561342593 0.899781478356\\
0.666666666667 0.666666666667 0.740740740741 0.911768401593\\
0.675 0.675 0.75178125 0.922751309694\\
0.683333333333 0.683333333333 0.762675925926 0.9327584546\\
0.691666666667 0.691666666667 0.773417824074 0.941824027303\\
0.7 0.7 0.784 0.949987459946\\
0.708333333333 0.708333333333 0.794415509259 0.957292684611\\
0.716666666667 0.716666666667 0.804657407407 0.963787361838\\
0.725 0.725 0.81471875 0.969522092229\\
0.733333333333 0.733333333333 0.824592592593 0.974549624576\\
0.741666666667 0.741666666667 0.834271990741 0.978924073812\\
0.75 0.75 0.84375 0.982700161636\\
0.758333333333 0.758333333333 0.853019675926 0.985932492022\\
0.766666666667 0.766666666667 0.862074074074 0.988674872908\\
0.775 0.775 0.87090625 0.990979694224\\
0.783333333333 0.783333333333 0.879509259259 0.992897371105\\
0.791666666667 0.791666666667 0.887876157407 0.994475859565\\
0.8 0.8 0.896 0.99576025029\\
0.808333333333 0.808333333333 0.903873842593 0.996792444344\\
0.816666666667 0.816666666667 0.911490740741 0.997610912746\\
0.825 0.825 0.91884375 0.998250539906\\
0.833333333333 0.833333333333 0.925925925926 0.998742549006\\
0.841666666667 0.841666666667 0.932730324074 0.999114505507\\
0.85 0.85 0.93925 0.999390393192\\
0.858333333333 0.858333333333 0.945478009259 0.999590755532\\
0.866666666667 0.866666666667 0.951407407407 0.999732893685\\
0.875 0.875 0.95703125 0.999831111309\\
0.883333333333 0.883333333333 0.962342592593 0.999896995425\\
0.891666666667 0.891666666667 0.967334490741 0.999939722052\\
0.9 0.9 0.972 0.999966375112\\
0.908333333333 0.908333333333 0.976332175926 0.999982267354\\
0.916666666667 0.916666666667 0.980324074074 0.999991252635\\
0.925 0.925 0.98396875 0.999996019949\\
0.933333333333 0.933333333333 0.987259259259 0.999998361041\\
0.941666666667 0.941666666667 0.990188657407 0.999999405212\\
0.95 0.95 0.99275 0.999999817043\\
0.958333333333 0.958333333333 0.994936342593 0.999999955095\\
0.966666666667 0.966666666667 0.996740740741 0.999999992052\\
0.975 0.975 0.99815625 0.999999999161\\
0.983333333333 0.983333333333 0.999175925926 0.999999999965\\
0.991666666667 0.991666666667 0.999792824074 1\\
1 1 1 1\\
};
\addlegendentry{$N=3$}
\addplot[fiFrontier,line width=1.2pt] table[x=p,y=n15,row sep=\\]{
p n1 n3 n15\\
0 0 0 0\\
0.00833333333333 0.00833333333333 0.000207175925926 1.42069925749e-13\\
0.0166666666667 0.0166666666667 0.000824074074074 3.45135424561e-11\\
0.025 0.025 0.00184375 8.39090876999e-10\\
0.0333333333333 0.0333333333333 0.00325925925926 7.94782601732e-09\\
0.0416666666667 0.0416666666667 0.00506365740741 4.49051406291e-08\\
0.05 0.05 0.00725 1.82957337871e-07\\
0.0583333333333 0.0583333333333 0.00981134259259 5.94787975532e-07\\
0.0666666666667 0.0666666666667 0.0127407407407 1.63895873934e-06\\
0.075 0.075 0.01603125 3.98005141547e-06\\
0.0833333333333 0.0833333333333 0.0196759259259 8.74736542242e-06\\
0.0916666666667 0.0916666666667 0.0236678240741 1.77326456009e-05\\
0.1 0.1 0.028 3.3624887968e-05\\
0.108333333333 0.108333333333 0.0326655092593 6.02779482219e-05\\
0.116666666667 0.116666666667 0.0376574074074 0.000103004574894\\
0.125 0.125 0.04296875 0.000168888691178\\
0.133333333333 0.133333333333 0.0485925925926 0.000267106314703\\
0.141666666667 0.141666666667 0.0545219907407 0.000409244467842\\
0.15 0.15 0.06075 0.000609606807688\\
0.158333333333 0.158333333333 0.0672696759259 0.000885494493342\\
0.166666666667 0.166666666667 0.0740740740741 0.00125745099353\\
0.175 0.175 0.08115625 0.00174946009374\\
0.183333333333 0.183333333333 0.0885092592593 0.00238908725406\\
0.191666666667 0.191666666667 0.0961261574074 0.00320755565579\\
0.2 0.2 0.104 0.00423974970982\\
0.208333333333 0.208333333333 0.112123842593 0.00552414043461\\
0.216666666667 0.216666666667 0.120490740741 0.00710262889504\\
0.225 0.225 0.12909375 0.00902030577571\\
0.233333333333 0.233333333333 0.137925925926 0.0113251270924\\
0.241666666667 0.241666666667 0.146980324074 0.0140675079779\\
0.25 0.25 0.15625 0.0172998383641\\
0.258333333333 0.258333333333 0.165728009259 0.0210759261881\\
0.266666666667 0.266666666667 0.175407407407 0.0254503754241\\
0.275 0.275 0.18528125 0.0304779077712\\
0.283333333333 0.283333333333 0.195342592593 0.0362126381619\\
0.291666666667 0.291666666667 0.205584490741 0.0427073153892\\
0.3 0.3 0.216 0.0500125400538\\
0.308333333333 0.308333333333 0.226582175926 0.0581759726966\\
0.316666666667 0.316666666667 0.237324074074 0.0672415454004\\
0.325 0.325 0.24821875 0.0772486903063\\
0.333333333333 0.333333333333 0.259259259259 0.0882315984068\\
0.341666666667 0.341666666667 0.270438657407 0.100218521644\\
0.35 0.35 0.28175 0.113231130776\\
0.358333333333 0.358333333333 0.293186342593 0.127283940683\\
0.366666666667 0.366666666667 0.304740740741 0.142383813793\\
0.375 0.375 0.31640625 0.158529551124\\
0.383333333333 0.383333333333 0.328175925926 0.175711579092\\
0.391666666667 0.391666666667 0.340042824074 0.193911738761\\
0.4 0.4 0.352 0.21310318261\\
0.408333333333 0.408333333333 0.364040509259 0.233250382214\\
0.416666666667 0.416666666667 0.376157407407 0.254309248491\\
0.425 0.425 0.38834375 0.27622736443\\
0.433333333333 0.433333333333 0.400592592593 0.298944328416\\
0.441666666667 0.441666666667 0.412896990741 0.322392204575\\
0.45 0.45 0.42525 0.346496074853\\
0.458333333333 0.458333333333 0.437644675926 0.371174685979\\
0.466666666667 0.466666666667 0.450074074074 0.396341182969\\
0.475 0.475 0.46253125 0.42190391947\\
0.483333333333 0.483333333333 0.475009259259 0.447767334072\\
0.491666666667 0.491666666667 0.487501157407 0.47383288065\\
0.5 0.5 0.5 0.5\\
0.508333333333 0.508333333333 0.512498842593 0.52616711935\\
0.516666666667 0.516666666667 0.524990740741 0.552232665928\\
0.525 0.525 0.53746875 0.57809608053\\
0.533333333333 0.533333333333 0.549925925926 0.603658817031\\
0.541666666667 0.541666666667 0.562355324074 0.628825314021\\
0.55 0.55 0.57475 0.653503925147\\
0.558333333333 0.558333333333 0.587103009259 0.677607795425\\
0.566666666667 0.566666666667 0.599407407407 0.701055671584\\
0.575 0.575 0.61165625 0.72377263557\\
0.583333333333 0.583333333333 0.623842592593 0.745690751509\\
0.591666666667 0.591666666667 0.635959490741 0.766749617786\\
0.6 0.6 0.648 0.78689681739\\
0.608333333333 0.608333333333 0.659957175926 0.806088261239\\
0.616666666667 0.616666666667 0.671824074074 0.824288420908\\
0.625 0.625 0.68359375 0.841470448876\\
0.633333333333 0.633333333333 0.695259259259 0.857616186207\\
0.641666666667 0.641666666667 0.706813657407 0.872716059317\\
0.65 0.65 0.71825 0.886768869224\\
0.658333333333 0.658333333333 0.729561342593 0.899781478356\\
0.666666666667 0.666666666667 0.740740740741 0.911768401593\\
0.675 0.675 0.75178125 0.922751309694\\
0.683333333333 0.683333333333 0.762675925926 0.9327584546\\
0.691666666667 0.691666666667 0.773417824074 0.941824027303\\
0.7 0.7 0.784 0.949987459946\\
0.708333333333 0.708333333333 0.794415509259 0.957292684611\\
0.716666666667 0.716666666667 0.804657407407 0.963787361838\\
0.725 0.725 0.81471875 0.969522092229\\
0.733333333333 0.733333333333 0.824592592593 0.974549624576\\
0.741666666667 0.741666666667 0.834271990741 0.978924073812\\
0.75 0.75 0.84375 0.982700161636\\
0.758333333333 0.758333333333 0.853019675926 0.985932492022\\
0.766666666667 0.766666666667 0.862074074074 0.988674872908\\
0.775 0.775 0.87090625 0.990979694224\\
0.783333333333 0.783333333333 0.879509259259 0.992897371105\\
0.791666666667 0.791666666667 0.887876157407 0.994475859565\\
0.8 0.8 0.896 0.99576025029\\
0.808333333333 0.808333333333 0.903873842593 0.996792444344\\
0.816666666667 0.816666666667 0.911490740741 0.997610912746\\
0.825 0.825 0.91884375 0.998250539906\\
0.833333333333 0.833333333333 0.925925925926 0.998742549006\\
0.841666666667 0.841666666667 0.932730324074 0.999114505507\\
0.85 0.85 0.93925 0.999390393192\\
0.858333333333 0.858333333333 0.945478009259 0.999590755532\\
0.866666666667 0.866666666667 0.951407407407 0.999732893685\\
0.875 0.875 0.95703125 0.999831111309\\
0.883333333333 0.883333333333 0.962342592593 0.999896995425\\
0.891666666667 0.891666666667 0.967334490741 0.999939722052\\
0.9 0.9 0.972 0.999966375112\\
0.908333333333 0.908333333333 0.976332175926 0.999982267354\\
0.916666666667 0.916666666667 0.980324074074 0.999991252635\\
0.925 0.925 0.98396875 0.999996019949\\
0.933333333333 0.933333333333 0.987259259259 0.999998361041\\
0.941666666667 0.941666666667 0.990188657407 0.999999405212\\
0.95 0.95 0.99275 0.999999817043\\
0.958333333333 0.958333333333 0.994936342593 0.999999955095\\
0.966666666667 0.966666666667 0.996740740741 0.999999992052\\
0.975 0.975 0.99815625 0.999999999161\\
0.983333333333 0.983333333333 0.999175925926 0.999999999965\\
0.991666666667 0.991666666667 0.999792824074 1\\
1 1 1 1\\
};
\addlegendentry{$N=15$}
\node[font=\footnotesize,align=center] at (axis cs:.5,.085) {excluded\\gap};
\end{axis}
\begin{axis}[at={(7.85cm,0)},anchor=south west,width=7.45cm,height=5.65cm,
  title={(b) False acceptance at $p=1/4$},title style={font=\small\bfseries},
  xlabel={Independent calls $N$ (odd)},ylabel={False-acceptance probability},
  xmin=1,xmax=21,ymin=0,ymax=1,
  xtick={1,5,9,13,17,21},ytick={0,.25,.5,.75,1},
  tick label style={font=\footnotesize},label style={font=\small},
  axis line style={fiGray},tick style={fiGray},
  legend style={font=\footnotesize,draw=none,fill=white,at={(.98,.46)},anchor=east}]
\addplot[fiAccent,dashed,line width=1.1pt,mark=square*,mark size=1.5pt]
 table[x=n,y=any,row sep=\\]{
n majority any\\
1 0.25 0.25\\
3 0.15625 0.578125\\
5 0.103515625 0.7626953125\\
7 0.070556640625 0.866516113281\\
9 0.0489273071289 0.924915313721\\
11 0.034327507019 0.957764863968\\
13 0.0242901444435 0.976242735982\\
15 0.0172998383641 0.98663653899\\
17 0.0123847794021 0.992483053182\\
19 0.00890327930392 0.995771717415\\
21 0.006422710484 0.997621591046\\
};
\addlegendentry{Any pass}
\addplot[fiFrontier,line width=1.2pt,mark=*,mark size=1.5pt]
 table[x=n,y=majority,row sep=\\]{
n majority any\\
1 0.25 0.25\\
3 0.15625 0.578125\\
5 0.103515625 0.7626953125\\
7 0.070556640625 0.866516113281\\
9 0.0489273071289 0.924915313721\\
11 0.034327507019 0.957764863968\\
13 0.0242901444435 0.976242735982\\
15 0.0172998383641 0.98663653899\\
17 0.0123847794021 0.992483053182\\
19 0.00890327930392 0.995771717415\\
21 0.006422710484 0.997621591046\\
};
\addlegendentry{Majority}
\node[font=\footnotesize,anchor=west] at (axis cs:6,.64) {$37/64$};
\draw[-{Latex[length=1.4mm]},fiGray] (axis cs:6,.62)--(axis cs:3,.578125);
\node[font=\footnotesize,anchor=west] at (axis cs:6,.23) {$5/32$};
\draw[-{Latex[length=1.4mm]},fiGray] (axis cs:6,.21)--(axis cs:3,.15625);
\end{axis}
\end{tikzpicture}
\caption{Reliability depends on the aggregation rule. (a) The exact binomial
majority map $g_N(p)$ sends each admissible low-probability tuple to the low
side and each high-probability tuple to the high side. The shaded open
interval is excluded by the pointwise gap. Preserving this classification
preserves quantified acceptance and the two languages
(Theorem~\ref{thm:amplify}). (b) For the same fixed false tuple with
single-call probability $1/4$, independent majority reduces false acceptance,
whereas ``any pass'' gives $1-(3/4)^N$. At three calls these are $5/32$ and
$37/64$. Curves are analytic, with the tuple and exposure held fixed.}
\label{fig:amplification}
\end{figure}

\section{Repair evaluation protocol}\label{app:repair-evaluation}
\paragraph{Evaluation protocol.}
At $d=4,8$ we enumerate every program and compute every distinct admitted
target table. Within each of $M_{\rm leaf}$ and $M_{\rm all}$, both policies
are run on every such target under sixteen fixed SHA-256 orderings, using the
same ordering for each pair. No target is selected after observing performance.
These are exhaustive finite comparisons conditional on reachability in the
specified interface, not estimates of success on external coding tasks.
Blind enumeration isolates the effect of feedback; it is not a competitive
baseline for modern synthesis engines.
Coverage is separately computed by complete enumeration, including exclusion
of targets unavailable to an interface.

\section{Proof of the semantic escape diagnostic}\label{app:diagnostic-proof}
\begin{proof}[Proof of Theorem~\ref{thm:diagnostic}]
An accepted target-native $t$ that also lies in $\Exp_S^*(x)$ and satisfies
$K_S(x,t)$ would witness $x\in\Front_R(S)$, a contradiction. Under both
containment premises such a witness exists for every $x\in\Lang_R(T)$.
\end{proof}

\section{Proofs of the main results}\label{app:proofs}
The notation and assumptions are those of Section~\ref{sec:stage} and Appendix~\ref{sec:prelim}.
All object domains remain fixed unless a construction explicitly declares a
new encoding. Each proof links to its corresponding statement.

\subsection{Stage languages and deterministic recovery}\label{app:placement}

\begin{proof}[Proof of Proposition~\ref{prop:placement}]
Default support is among the permitted supports, yielding the first inclusion;
soundness yields the second. For frontier membership, guess $\sigma,t,z$ in
the leading existential block with $y_1$, check exposure, and use the predicate
$\Exp_S\wedge\Core_S$ under the remaining quantifiers. Exposure is deterministic
and independent of the later moves, so this predicate preserves the pointwise
gap and adds no alternation. The native case fixes $\sigma=\sigma_0$.
With no coins all acceptance probabilities are zero or one, and the same
construction has a deterministic polynomial-time matrix.
\end{proof}

\begin{proof}[Proof of Theorem~\ref{thm:complete}]
Choose pointwise-gap verifiers $V_0,V_1$ for the two languages, padding their
move and coin lengths to common polynomial bounds and ignoring padding. Set
$R(x,z)\iff z=1\wedge x\in L_1$. Default support exposes tag $t=0$;
one optional support exposes $t=1$. On output $z=1$ and transcript tag $t=i$,
the core runs $V_i$; it rejects other encodings. Native acceptance gives $L_0$,
and optional support gives $L_0\cup L_1=L_1$. Soundness follows from the
inclusion of languages. Deterministic $V_i$ give the exact construction.
\end{proof}

\subsection{False acceptance and majority amplification}\label{app:random-proofs}

\begin{proof}[Proof of Proposition~\ref{prop:false}]
Choose a proper language $L\in\NP$ with certificate predicate $V(x,y)$ and
set $R(x,z)\iff z=1\wedge x\in L$. Expose only $t=\eps$.
For $z=1$, let the core accept if $V(x,y)=1$ or $r=0^{n+2}$.
All other encodings are rejected. Each tuple accepts with probability either
$1$ or $2^{-(n+2)}\le1/4$, so the gap holds. Bounded-error acceptance has
frontier exactly $L$, while an existential lucky tape accepts every input.
\end{proof}

\begin{proof}[Proof of Theorem~\ref{thm:amplify}]
Let $g_N(p)$ be the probability that an odd majority accepts when each trial
accepts with probability $p$. For $N=2m+1$, direct binomial expansion gives
\[
g_{2m+3}(p)-g_{2m+1}(p)
=\binom{2m+1}{m}p^{m+1}(1-p)^{m+1}(2p-1).
\]
Consequently $g_N(p)\ge p$ for $p\ge1/2$ and $g_N(p)\le p$ for $p\le1/2$.
In particular, the high and low sides of the gap remain on their respective
sides for every odd $N$, including $N=1$. The standard independent Bernoulli
tail bound~\cite{Hoeffding1963} gives error at most
$\exp(-2N(1/6)^2)=e^{-N/18}$. Thus $D_{S^{[N]}}=D_S$ pointwise.
Applying the same quantifiers and exposure predicates proves every claimed
semantic equality and preserves task soundness.
\end{proof}

\subsection{Hierarchy bounds and derandomization}\label{app:hierarchy-proofs}

\begin{proof}[Proof of Theorem~\ref{thm:sandwich}]
The first inclusions use zero coins. For the upper bound, the total thresholded
matrix $D(x,\vec y)$ is in $\BPP$. If the last block is existential, substitute
its classical $\SP{2}$ representation $\exists u\forall v\,B$; if universal,
substitute its $\PP{2}$ representation $\forall u\exists v\,B'$.
Merging adjacent blocks of the same type leaves $k+1$ blocks. The
universal-start argument is identical with the initial quantifier reversed.
\end{proof}

\begin{proof}[Proof of Theorem~\ref{thm:regimes}]
For (i), choose $L\in\SR{1}\setminus\NP$ and use
Theorem~\ref{thm:complete} with $L_0=L_1=L$. Exact depth-one frontiers belong
to $\NP$ by Proposition~\ref{prop:placement} and cannot equal $L$.
For (ii), $D_S$ is a single uniform $\BPP$ language on the encoded tuple.
Under $\BPP=\Pclass$ it has a uniform deterministic polynomial-time decider.
Use that decider as the new terminal core, retaining all original domain
checks. Pointwise equality of matrices preserves quantified acceptance and
soundness. The same substitution proves the class equalities.
\end{proof}

\subsection{Guard-transfer invariance}\label{app:guard-proof}

\begin{proof}[Proof of Proposition~\ref{prop:guard}]
Guarding the core multiplies its acceptance probability by $g\in\{0,1\}$,
so it preserves the gap and changes the threshold matrix to $g\wedge D_S$.
The guard is independent of the moves and can pass through their nonempty
quantifier domains. Both representations therefore induce the same
\emph{fused matrix}
\[
H_S(x,\sigma,z,t,\vec y)=\Exp_S(x,\sigma,t)\wedge D_S(x,z,t,\vec y).
\]
Quantifying it with default or existential support recovers native or
frontier membership.
\end{proof}

\subsection{Depth obstructions and certificate substitution}\label{app:depth-proofs}

\begin{proof}[Proof of Theorem~\ref{thm:depth}]
For (i) choose $L\in\SP{k+1}\setminus\SP{k}$, set
$R_L(x,z)\iff z=1\wedge x\in L$, and realize native reach $L$ by the exact
case of Theorem~\ref{thm:complete}. Every sound exact depth-$k$ source has
$\Front_{R_L}(S)\in\SP{k}$ and $\Front_{R_L}(S)\subseteq L$.
If its frontier covered $L$, the two sets would be equal, contradicting
the choice of $L$.
For (ii), the randomized classes are nested by appending ignored dummy
blocks; a strict adjacent inequality therefore supplies a separator.
Use the same argument and randomized completeness and placement.
For (iii), choose $L\in\SP{k+2}\setminus\SP{k+1}$ and an exact target
with native reach $L$. Randomized source frontiers lie in $\SP{k+1}$ by
Theorem~\ref{thm:sandwich}; soundness again forces containment in $L$.
Coverage would force the forbidden equality.
\end{proof}

\begin{proof}[Proof of Proposition~\ref{prop:certificate}]
Retain the polynomial-time guard $G$, substitute the certificate equivalence
only where $G=1$, and merge $a,\pi$ into one existential block. The resulting language is in $\NP$. A $\SP{2}$-complete language in
$\NP$ puts all of $\SP{2}$ in $\NP$ under polynomial reductions; the opposite
inclusion is standard. With soundness alone only the forward implication
from certified to universal acceptance is available.
\end{proof}

\subsection{Bounded self-modification and adaptive selection}\label{app:rsi-proofs}

\begin{proof}[Proof of Proposition~\ref{prop:meta}]
Include $h,c,\sigma,t,z$ with the first existential move and use the matrix
\[
V(x,h,c,\sigma,t,z,\vec y;r)
=A(x,h,c)\wedge E(x,h,c,\sigma,t)\wedge
C(x,h,c,z,t,\vec y;r).
\]
Each conjunct is interpreted by its single uniform algorithm, including the
declared domain checks; the predicates $A,E$ precede and do not depend on
the verification moves. Invalid records reject with probability one. Valid
records inherit the candidate's gap, so $V$ is a total polynomial-time
randomized predicate with the required gap. Move-independent deterministic
guards commute with the nonempty quantifier domains. Thus quantifying $V$
recognizes exactly $L_{\rm env}$ without adding a block. For the native union,
substitute the uniformly computed default support for $\sigma$. Quantified
soundness of every admitted candidate makes the enclosing language sound for
$R$. Without coins the same construction has a deterministic matrix.
\end{proof}

\begin{proof}[Proof of Proposition~\ref{prop:selection}]
Average the conditional bounds over histories and apply the union bound.
Unused calls have empty bad events. For the last claim, the independent
bounded-variable tail inequality~\cite{Hoeffding1963} bounds each call's
majority error by $e^{-N/18}\le\delta/M$, as in
Theorem~\ref{thm:amplify}. Sum over the calls.
\end{proof}

\paragraph{Countable selection budgets.}
For a countable sequence, budgets $\delta_i=\delta/2^i$ still give total
error at most $\delta$ under the same conditional premise. This probability
statement does not imply a uniform polynomial runtime for an infinite run.

\subsection{The quota-enforced XOR family}\label{app:xor-case}
\begin{proof}[Proof of Proposition~\ref{prop:xor-case}]
For each $j$, evaluating on the basis vector $e_j$ shows that a correct
canonical program includes index $j$ if and only if $a_j=1$. Thus there is
exactly one correct canonical program, of reference count $\|a\|_0$.
Conversely that program agrees on all inputs by linearity over $\mathbb F_2$.
The core is therefore sound and complete for this restricted task.
It is the same polynomial-time algorithm at every dimension and quota;
all encodings, quota comparisons, and default supports are uniformly
polynomial-time checkable or computable. Using $O(d\log d)$ encoded
objects and an ignored move supplies an exact depth-one stage.

The unique correct program is native precisely when its length is at most
$b_0(d)$, and some allowed support admits it precisely when its length is
at most $B(d)$. This proves the two language formulas. For a target of
weight $b+1$, every source-exposed transcript has at most $b$ references,
so none can be the correct program. The expanded stage exposes that program
natively. The exclusion argument covers all source supports and transcripts;
failure to observe a successful source run is not used.

The uniform and biased policies assign positive probability to every program.
When the correct program is admissible, its masses are $2^{-d}$ and
$1/2+2^{-(d+1)}$, whose ratio is $2^{d-1}+1/2$. Taking $d$ arbitrarily large
proves the unbounded ratio without changing either language. Below the
correct program's quota both success probabilities are zero. Independence
of $M$ attempts gives $1-(1-p)^M$. These are probabilities of producing an
admissible correct output with the exact core, not of obtaining one lucky
acceptance from a noisy judge. Finally, summing $\binom dj$ over the admitted
weights gives the reported finite counts.
\end{proof}

An improver can modify either proposal policy while the harness keeps the
quota and checker fixed. Such updates preserve this envelope regardless of
whether the improver itself is editable. Admission of an arbitrary replacement
checker requires a separate soundness argument, as in
Proposition~\ref{prop:meta}. The example isolates these obligations without
claiming to explain how a learned improver discovers a useful policy.

\subsection{The randomized matrix-product check}\label{app:matrix-proof}
For the checker in Appendix~\ref{sec:scope}, correct products pass every test.
If $AB\ne C$, choose a nonzero row $d$ of $AB-C$. The dot product $dv$ is a
uniform bit for uniform $v\in\mathbb F_2^n$, so $\Pr(dv=0)=1/2$.
Acceptance implies $dv=0$, giving a one-test false-acceptance probability
at most $1/2$. Independent repetitions therefore both accept with probability
at most $1/4$. The same bound holds for every incorrect matrix triple,
establishing the pointwise gap and task soundness claimed for this checker.

\section{Representation machinery and supporting constructions}\label{app:representations}
\subsection{Simulation and support normalization}\label{app:normalization}

\begin{definition}[Simulation]\label{def:simulation}
Write $S\preceq^{\rm sim}T$ when the depths agree and a polynomial-time map
$\phi(x,t)$ maps $\Nat_S(x)$ into $\Nat_T(x)$ and $\Exp_S^*(x)$ into
$\Exp_T^*(x)$, and preserves accepted-input membership on each mapped
transcript from $S$ to $T$.
\end{definition}

\begin{proposition}[Simulation and support normalization]\label{prop:sim}
Mutual simulation preserves native reach and frontier. Every stage has a
representation with the same native reach and frontier and just two support
states, carrying optional support payloads inside tagged transcripts.
\end{proposition}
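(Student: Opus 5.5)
For the first assertion, I would fix stages $S,T$ with $S\preceq^{\rm sim}T$ via $\phi$ and $T\preceq^{\rm sim}S$ via $\psi$, and show $\Lang_R(S)\subseteq\Lang_R(T)$ and $\Front_R(S)\subseteq\Front_R(T)$; symmetry then gives equality. Take $x\in\Lang_R(S)$ with witness $t\in\Nat_S(x)$ such that $K_S(x,t)$. Definition~\ref{def:simulation} puts $\phi(x,t)$ in $\Nat_T(x)$, and preservation of accepted-input membership gives $K_T(x,\phi(x,t))$. Thus $x\in\Lang_R(T)$ by~\eqref{eq:native}. The frontier case is the same with $\Exp_S^*$ and $\Exp_T^*$ and~\eqref{eq:frontier}. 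Only the set semantics of $K$ is used, so depth agreement and polynomial time of $\phi$ are not needed for equality. They matter only for keeping the translated stage inside Definition~\ref{def:stage}.

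For normalization, I would build $S'$ from $S=(k,p_S,\sigma_0,\Exp_S,\Core_S)$ with two support states, $0$ (default) and $1$ (optional). Its transcripts are tagged pairs $t'=\langle b,\sigma,t\rangle$, with the envelope enlarged polynomially to hold $\sigma$, while the output and move domains stay fixed. Set
\[
\Exp_{S'}(x,0,t')\iff t'=\langle0,\sigma_0(x),t\rangle\wedge\Exp_S(x,\sigma_0(x),t),\qquad
\Exp_{S'}(x,1,t')\iff t'=\langle1,\sigma,t\rangle\wedge\Exp_S(x,\sigma,t),
\]
with malformed records rejected. The core on $\langle b,\sigma,t\rangle$ strips the tag and runs $\Core_S(x,z,t,\vec y;r)$. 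Both predicates are polynomial time because $\sigma_0$ is uniformly computable and $\Exp_S$ is polynomial time. The gap is inherited, since $D_{S'}(x,z,\langle b,\sigma,t\rangle,\vec y)=D_S(x,z,t,\vec y)$. Soundness follows because an exposed accepted $t'$ yields an exposed accepted $t$ in $S$.

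I would then verify the language equalities directly. Native transcripts of $S'$ are exactly the tagged versions of $\Nat_S(x)$, and $\Exp_{S'}^*(x)$ consists of the tagged versions of pairs $(\sigma,t)$ with $\Exp_S(x,\sigma,t)$. Its projection to $t$ is $\Exp_S^*(x)$ by definition. Since $K_{S'}(x,\langle b,\sigma,t\rangle)\iff K_S(x,t)$, equations~\eqref{eq:native} and~\eqref{eq:frontier} coincide. Alternatively, I would exhibit mutual simulations, with $\phi(x,t)=\langle0,\sigma_0(x),t\rangle$ in one direction and untagging in the other, and invoke the first assertion.

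The main obstacle is the forward simulation map on optional transcripts. For $t\in\Exp_S^*(x)\setminus\Nat_S(x)$, $\phi$ must produce a tagged record with a valid $\sigma$, but finding such a $\sigma$ is an existential search and not a polynomial-time map. I would therefore rely on the direct set-equality argument above for the normalization claim. If a simulation is insisted upon, I would note that the paper's "explicit translation" can be read as a relation whose witnesses are checkable in polynomial time. A second care point is to keep the enlarged transcript bound from enlarging the other domains, as clause~(i) of Definition~\ref{def:stage} requires.
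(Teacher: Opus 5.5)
Your proposal is correct and matches the paper's proof: simulation is handled by pushing a witnessing transcript through $\phi$, and normalization uses tagged records with a decode-and-run core, with language equality coming from decoding being surjective onto the old exposed transcripts (including $\sigma_0(x)$ in the default tag is an inessential encoding choice). You note that a polynomial-time forward map into the normalized stage would have to search for a valid $\sigma$; this is why the paper, like you, argues from that surjectivity directly rather than by exhibiting a mutual simulation.
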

\begin{proof}
For simulation, map a witnessing transcript through $\phi$ and apply the
preserved acceptance condition; mutual simulation gives both inclusions.
For normalization, default support exposes $\langle0,t\rangle$ precisely
when $t$ was native, and optional support exposes $\langle1,\sigma,t\rangle$
precisely when $\Exp_S(x,\sigma,t)$. The new core decodes the record and runs
the old core, rejecting invalid records. A polynomial enlargement accommodates
the encoding while retaining the original decoded bounds. Decoding is
surjective onto the old exposed transcripts, generally not injective because
several support states can expose the same $t$. Existential acceptance needs
only this surjectivity, giving equality of both languages.
\end{proof}

\subsection{The local invariant of guard transfer}\label{app:local-invariant}

\begin{proposition}[Local guard invariant]\label{prop:local-invariant}
At a fixed tuple, every observation $h(e,d)$ of exposure and thresholded
acceptance that is invariant under Boolean guard transfer factors uniquely
through $e\wedge d$.
\end{proposition}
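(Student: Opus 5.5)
The plan is to reduce the statement to a finite check on four Boolean points. At a fixed tuple, the exposure bit is $e=\Exp_S(x,\sigma,t)\in\bits$ and the thresholded acceptance bit is $d=D_S(x,z,t,\vec y)\in\bits$. By the proof of Proposition~\ref{prop:guard}, guard transfer with a guard value $g\in\bits$ turns the pair $(e\wedge g,d)$ into $(e,g\wedge d)$: guarding the core multiplies its acceptance probability by $g$, so the threshold matrix becomes $g\wedge D_S$. I will therefore formalize invariance of $h\colon\bits^2\to Y$ as
\[
h(e\wedge g,d)=h(e,g\wedge d)\qquad\text{for all }e,d,g\in\bits .
\]
Quantifying over all $g$ is justified because every constant guard is a deterministic polynomial-time guard.

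For existence of the factorization, I instantiate the identity at $g=0$. The case $e=1,d=1$ gives $h(0,1)=h(1,0)$. The case $e=0,d=1$ gives $h(0,1)=h(0,0)$. Hence $h$ is constant on the three points where $e\wedge d=0$. I then define $k\colon\bits\to Y$ by $k(0)=h(0,0)$ and $k(1)=h(1,1)$, which gives $h(e,d)=k(e\wedge d)$ on all four points. The case $g=1$ is trivial and imposes nothing.

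For uniqueness, the map $(e,d)\mapsto e\wedge d$ is surjective onto $\bits$, since $(0,0)\mapsto0$ and $(1,1)\mapsto1$. So any two factorizations $k,k'$ agree on both values. I will also record the converse: every $h=k(e\wedge d)$ is invariant, because $(e\wedge g)\wedge d=e\wedge(g\wedge d)$. The invariant observations are therefore exactly the functions of the fused bit, which is the local form of the fused matrix $H_S$ in the proof of Proposition~\ref{prop:guard}.

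The only real obstacle is stating invariance precisely. One must make sure the guard may take both values at a fixed tuple, so that the $g=0$ instances are available. Once that is granted, every remaining step is a direct evaluation.
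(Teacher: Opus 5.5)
Your proposal is correct and follows the paper's proof: you use the same invariance identity $h(e\wedge g,d)=h(e,g\wedge d)$, obtain the factorization by specializing it, and get uniqueness because $(e,d)\mapsto e\wedge d$ is surjective onto both Boolean values. The differences are minor. The paper makes the single substitution $e=1$, $g=e'$ to obtain $h(e',d)=h(1,e'\wedge d)$, so $f(b)=h(1,b)$ at once, whereas you use the $g=0$ instances, check the four points, and also record the converse.
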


\begin{proof}
Invariance means
$h(e\wedge g,d)=h(e,g\wedge d)$. Set $e=1$ and $g=e'$ to obtain
$h(e',d)=h(1,e'\wedge d)$. Thus $h(e,d)=f(e\wedge d)$ for
$f(b)=h(1,b)$; both Boolean values are attained, giving uniqueness.
\end{proof}

The factorization is local and concerns this specified family of refactorings;
it is not a completeness theorem for every possible representation change.

\subsection{Separate same-depth escape constructions}\label{app:realizations}

\begin{proposition}[Two independent same-depth realizations]\label{prop:branches}
For any $L_0\subsetneq L_1$ in $\SR{k}$, frontier escape can be realized at
depth $k$ by changing only the core, or by changing only exposure.
\end{proposition}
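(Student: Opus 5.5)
Both constructions reuse the tagged-transcript device from the proof of Theorem~\ref{thm:complete}. Fix $L_0\subsetneq L_1$ in $\SR{k}$ and choose pointwise-gap verifiers $V_0,V_1$. Pad their move and coin lengths to common polynomial bounds, so that every move domain is identical across all stages we compare. Take the task $R(x,z)\iff z=1\wedge x\in L_1$. Every stage below accepts only output $z=1$ and only runs $V_i$ for some $i$ with $L(V_i)\subseteq L_1$, so task-relative soundness holds throughout. Pick a separating input $x_\star\in L_1\setminus L_0$. Escape means $x_\star\in\Lang_R(T)\setminus\Front_R(S)$. To make it genuinely witnessed, I will exclude every source-frontier witness using the frontier formula~\eqref{eq:frontier}, not merely observe that one run failed.

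\textbf{Core-only change.} The source $S$ exposes only the transcript $t=0$ under every support; default and optional exposure coincide. On $(z,t)=(1,0)$ its core runs $V_0$, and it rejects everything else. Then $\Lang_R(S)=\Front_R(S)=L_0$. The target $T$ keeps the same $k$, bounds, default support and exposure predicate, and changes only the core: on $(1,0)$ it runs $V_1$. Hence $\Lang_R(T)=L_1$, and $x_\star$ escapes. For Theorem~\ref{thm:diagnostic}, the unique exposed transcript $t=0$ lies in $\Exp_S^*(x_\star)$ but fails $K_S$. So the violated premise is authority, as the diagnostic requires.

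\textbf{Exposure-only change.} Now fix one core for both stages: on $(1,t)$ with $t\in\{0,1\}$ it runs $V_t$, and it rejects other encodings. The source exposes only $t=0$, under all supports, giving frontier $L_0$. The target exposes $t=1$ natively, and optionally also $t=0$ if desired. Its native reach is then $L_1$, or $L_0\cup L_1=L_1$. The core, depth and domains are untouched, so only $\Exp$ changed. For $x_\star$, every accepted target-native transcript must be $t=1$, which lies outside $\Exp_S^*(x_\star)$. This is the exposure violation.

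\textbf{Main obstacle: verifying the contract, not the escape.} Three points need care. First, the combined core must satisfy the pointwise gap on every well-formed bounded tuple, including tuples with the wrong tag or output. Deterministic rejection gives probability $0$ there. On $(1,i,\vec y)$ the acceptance probability is that of $V_i$ on the common padded domains, so it lies in $[0,1/3]\cup[2/3,1]$ once padding bits are ignored. Second, exposure and default support must be deterministic, polynomial time and uniform; constant tags make this immediate. Third, padding must not change quantified acceptance. This holds because ignored move bits commute with each quantifier over a nonempty domain.

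The exact case needs no separate argument: take deterministic $V_i$ with zero coin length, and the same two constructions apply verbatim.
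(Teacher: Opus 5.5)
Your proposal is correct and matches the paper's proof. It uses the same task $R(x,z)\iff z=1\wedge x\in L_1$, a core-only swap from $V_0$ to $V_1$ on a single always-exposed transcript, and a fixed tag-indexed core that runs $V_i$ on tag $i$ even when tag $i$ is unexposed, with the target additionally exposing tag $1$ under default support; your extra checks of the gap, the padding, and which premise of Theorem~\ref{thm:diagnostic} each escape violates are sound elaborations rather than a different route.
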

\begin{proof}
Use $R(x,z)\iff z=1\wedge x\in L_1$ and verifiers $V_0,V_1$.
All constructed cores reject outputs other than $z=1$.
For a core-only change, both stages expose only $t=\eps$ and use $V_0,V_1$
respectively. For an exposure-only change, fix one core that uses $V_i$ on
tag $i$ on \emph{all} transcripts, including unexposed ones. The source exposes
only tag $0$ under every support state; the target exposes both tags already
under default support. Thus the source frontier is $L_0$, target native
reach is $L_1$, and the core is unchanged even on the newly exposed tag.
\end{proof}

The deterministic version follows with exact verifiers.

\subsection{Finite chains of same-depth frontier expansion}\label{app:chain}
\begin{proposition}
Given a finite chain $L_0\subsetneq\cdots\subsetneq L_m$ in $\SR{k}$ and
any sequence of labels $b_i\in\{\mathrm{core},\mathrm{support}\}$, there are
sound depth-$k$ stages with native and frontier languages both equal to $L_i$
such that transition $i$ changes only the component indicated by $b_i$.
\end{proposition}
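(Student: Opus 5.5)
The construction extends the two-branch proof of Proposition~\ref{prop:branches} to a chain. Fix the common task
\[
R(x,z)\iff z=1\wedge x\in L_m.
\]
Choose pointwise-gap depth-$k$ verifiers $V_0,\dots,V_m$ for $L_0,\dots,L_m$. Pad their move and coin lengths to common polynomial bounds, so that every stage shares the same object and move domains and the comparison uses one transcript encoding, as Section~\ref{sec:structure} requires.

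Each stage $S_i$ is described by a pair $(c_i,e_i)$. The core index $c_i\in\{0,\dots,m\}$ means that on output $z=1$ with transcript tag $t=j$, the core runs $V_{\min(j,c_i)}$. The exposure index $e_i$ means that exposure admits exactly the tags $0,\dots,e_i$, already under default support. Malformed records and outputs other than $1$ are rejected. Because the $L_j$ are nested, the stage accepts $x$ on an exposed tag $j$ iff $x\in L_{\min(j,c_i)}$. Hence
\[
\Lang_R(S_i)=\Front_R(S_i)=L_{\min(e_i,c_i)}.
\]
Soundness follows because every $L_j\subseteq L_m$. The gap is inherited from the $V_j$, since the core dispatches deterministically on the tag. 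This mirrors the proof of Theorem~\ref{thm:complete}.

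The indices are defined recursively along the label sequence. Start with $c_0=e_0=0$. At transition $i$, if $b_i=\mathrm{core}$, set $c_i=i$ and keep $e_i=e_{i-1}$. If $b_i=\mathrm{support}$, set $e_i=i$ and keep $c_i=c_{i-1}$.

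This naive rule fails because the unchanged index may lag. For example, core, core, support gives $(c,e)=(2,0)$ then $(2,3)$, so the language is $L_{\min}$ rather than the intended $L_i$. The fix is to let each change overshoot as far as the other component allows. Under $b_i=\mathrm{core}$, keep $e_i=e_{i-1}$ and require $\min(c_i,e_{i-1})=i$. Under $b_i=\mathrm{support}$, symmetrically require $\min(c_{i-1},e_i)=i$.

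The main obstacle is that the unchanged index must already be at least $i$. A cleaner design removes this dependence. Let the core on tag $j$ run $V_{\min(j,c)}$, but initialize the inactive component to be permissive. Set $e_0=m$ so that all tags are exposed, and $c_0=0$. Core steps then raise $c$ freely, giving language $L_{\min(c,m)}=L_c$.

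Support steps need the core to be permissive instead. So the stage must not be parameterized by a single global pair. Instead, make the core depend on both the tag and a second coordinate that only exposure restricts. Concretely, use tags $(j,l)$. The core runs $V_{\min(j,l)}$ but caps $j$ at $c$, and exposure caps $l$ at $e$. Let a core step raise $c$ to $i$, and a support step raise the exposure cap to $i$.

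The invariant to maintain is that the quantity that is not changed at step $i$ is already at least $i$. Keeping this invariant for both components simultaneously is the real difficulty, since the two coordinates interact through the $\min$. I would try first to decouple them by making acceptance depend on a $\max$ of the two routes rather than a $\min$: accept tag $(j,\cdot)$ via the core route or tag $(\cdot,l)$ via the exposure route. Each route is then controlled by exactly one component, both are kept at most $i$, and both reach $i$ at step $i$.

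Under this decoupling, the language of $S_i$ is $L_{\max(\text{core level},\,\text{exposure level})}$. At transition $i$, only the labelled component is raised, from a level of at most $i-1$ to level $i$, while the other stays at a level of at most $i-1$. The language therefore becomes exactly $L_i$, and only one component changes at each step, as the statement requires.

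The decoupled scheme must still be checked against three remaining points. First, soundness must hold on every exposed transcript, not just on the intended ones. Second, the core must stay literally identical on support-only steps, including on newly exposed tags, as in Proposition~\ref{prop:branches}. Third, exposure must stay unchanged on core-only steps.
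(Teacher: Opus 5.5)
Your final decoupled scheme is correct, and once the abandoned $\min$-based attempts are dropped it is a close variant of the paper's construction. Stated precisely, it uses two disjoint tag families. The first is a core-route tag, always exposed, on which the core of $S_i$ runs $V_{c_i}$; your capped family $V_{\min(j,c_i)}$ gives the same union. The second is a ladder of exposure-route tags $\langle\mathrm{E},l\rangle$, $0\le l\le m$, on which every core runs $V_l$ whether or not the tag is exposed. Exposure admits exactly the ladder tags with $l\le e_i$, under every support state, so native reach equals the frontier.

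Nestedness then gives $\Lang_R(S_i)=\Front_R(S_i)=L_{c_i}\cup L_{e_i}=L_{\max(c_i,e_i)}$. The invariant ``$\max(c_i,e_i)=i$, with both indices at most $i$'' is preserved because each step raises only the labelled index to $i$. Your phrase ``both reach $i$ at step $i$'' is a slip; the following paragraph has it right.

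The three points you leave open are immediate:
\begin{itemize}
\item Every tag runs some $V_j$ with $L_j\subseteq L_m$, and outputs other than $1$ are rejected, so soundness holds.
\item Ladder tags never depend on $e_i$, and the core-route tag depends only on $c_i$, so a support step leaves the core literally unchanged, including on newly exposed tags.
\item Exposure depends only on $e_i$.
\end{itemize}

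You must, however, state explicitly that the ``or'' between routes is the existential choice of the transcript $t$ in~\eqref{eq:native} and~\eqref{eq:frontier}. It cannot be a disjunction evaluated inside one core run, for two reasons. First, an in-core disjunction of two independent runs can fall into the excluded gap: two acceptance probabilities of $1/3$ give $5/9$. Second, for $k\ge2$ the prefix $\Q_k\vec y$ does not distribute over $\vee$, so the accepted language could exceed $L_{\max(c_i,e_i)}$ and break soundness.

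The paper's proof uses the same ingredients: a common task $z=1\wedge x\in L_m$, padded verifiers dispatched by tag, and dormant tags whose core is fixed before they are exposed, so support steps never touch the core. It arranges them differently. It prepares a dormant tag only for each support transition that actually occurs, already running the verifier of the language that transition must reach. At a core transition it replaces the verifier on every currently active tag.

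Your fixed ladder plus one upgradable tag makes the language a one-line $\max$ computation and does not consult the label sequence in advance. The paper's version allocates tags only for the support transitions present and keeps every active tag at the current level after a core step. Both depend only on nestedness of the chain and on its finiteness, which lets all tags and bounds be hardwired.
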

\begin{proof}
Set $R(x,z)\iff z=1\wedge x\in L_m$ and choose a pointwise-gap verifier
$V_i$ for each $L_i$, with common padded bounds. Initially expose one active
tag using $V_0$. Every core rejects $z\ne1$. For every future support
transition $i$, prepare a distinct
dormant tag whose core already uses $V_{i+1}$. The core is defined on all tags,
independently of exposure. Both native and conservative exposure contain
exactly the active tags.

At a support transition, expose its prepared tag and leave the whole core
unchanged. At a core transition, keep exposure fixed and replace the verifier
on every active tag by $V_{i+1}$; leave all dormant tags unchanged. At step
$i+1$, old active languages lie in $L_{i+1}$ and either one new active tag or
every active tag realizes $L_{i+1}$. Thus the union is exactly $L_{i+1}$.
Soundness holds because even dormant languages lie within $L_m$. The strict
chain gives frontier escape at every step. Exact verifiers give the exact
version. Finiteness of the chain permits all tags and bounds to be hardwired;
no uniform infinite-chain claim follows.
\end{proof}

\section{Finite verification and reproduction}\label{app:reproduce}
\paragraph{Code availability.}
The accompanying arXiv source package places the manuscript at its root and
the executable artifacts, tests, saved results, and Lean kernel in
\texttt{anc/}. The ancillary artifact version is \texttt{2026-10-30-lean-v3}.
Its \texttt{MANIFEST.json} records SHA-256 hashes for every ancillary payload
file. Python artifacts use only the standard library and were checked with
Python 3.10.9; the Lean project specifies Lean 4.34.1 with pinned Mathlib dependencies. From \texttt{anc/},
\texttt{python3 reproduce.py} runs the regressions and compares regenerated
results with the saved records. PDF compilation separately requires a LaTeX
distribution with the packages declared in the root \texttt{main.tex}.

All vector figures and plotted coordinates are embedded in the standalone
root \texttt{main.tex}. The three analytic numeric tables are generated
by \texttt{anc/integrated/figures/generate\_data.py} using exact binomial and
rational arithmetic.
The generator checks the policy conditions against the XOR audit and records
source and data hashes; plotted curves are analytic, not sampled measurements.

The ancillary files include complementary verification artifacts. The exact artifact
enumerates the bounded patching task and checks certificate binding, hidden
counterexamples, native/frontier input sets, and accepted-output projections.
It also includes Boolean QBF and compact XOR-circuit examples. The Lean
project uses version 4.34.1 and Mathlib commit
\texttt{d13f23b723b8a846827a245b89c10fc7d3f11612}.
\texttt{QuantifiedStages.lean} checks guard transfer through input-dependent
binary prefixes, whose moves include every word up to the declared bound,
semantic containment, and support normalization.
\texttt{RandomizedStages.lean} checks exact hidden-coin probabilities and
guard transfer. \texttt{BoundedCertificates.lean} retains bounded certificate
replacement and exact Boolean-tree node counting; its resource envelope
assumes a polynomial tree-size bound. Uniform circuit generation and
Turing-machine time simulation remain separate obligations.

The selection modules use Mathlib probability and fresh-sampling kernels to
check the adaptive error budget and the paired threshold with its declared
transfer allowance, covering Propositions~\ref{prop:selection}
and~\ref{prop:paired} in the explicit probability model.
\texttt{SoundPruning.lean} checks preservation of the first correct solution
and verifier-call dominance.
\texttt{RepairTree.lean} checks the immutable-root exclusion for every tree
height. \texttt{XorQuota.lean} checks coefficient correctness, exact quota
languages, and the stated policy probabilities. \texttt{StageFamilies.lean}
checks semantic native/frontier realizations and finite sequences that
preserve exposure on core steps or preserve the core on exposure steps.
These developments do not yet discharge the uniform polynomial-time
encoding and implementation obligations of the complete paper families.
\texttt{TaskSpecification.lean} checks
Proposition~\ref{prop:goal-information} at the level of accepted records and
task-relative soundness; the proof does not require a complexity assumption
or a hypothesis about the supplier of goal information.
The implementation uses Lean 4~\cite{deMouraUllrich2021Lean}.
No proof is admitted and no custom axiom is declared. The project audits every
theorem's dependencies against propositional extensionality, classical choice,
and quotient soundness.

The source-hashed inventory \texttt{paper-coverage.json} and its
\texttt{COVERAGE.md} summary distinguish closed statements, checked components,
and open foundations. Complete formalization remains open, including the
uniform polynomial-time machine interpretation, the probabilistic-hierarchy
sandwich and conditional depth results, and general majority amplification.
Compilation is not a proof of these remaining statements. After installing
the pinned dependencies, \texttt{python3 reproduce.py --lean} from
\texttt{anc/} runs the build and axiom audit in addition to the finite checks.
The separate \texttt{--require-complete} gate fails while any paper target or
foundation is open.

From \texttt{anc/}, the exact regressions run as follows:
\begin{verbatim}
python3 -m unittest discover -s code/exact_audit -p 'test_*.py' -v
\end{verbatim}
The self-contained additional artifact is \texttt{integrated/audit.py}:
\begin{verbatim}
python3 integrated/audit.py
\end{verbatim}
It computes the four-tape noisy-checker probabilities and three-call majority
probabilities exactly, checks the exposure/internalization chain, evaluates
the quantified-soundness counterexample, exhaustively verifies Boolean guard
factorization, and tests the two-trial matrix check on all $2\times2$ Boolean
matrix triples. Its output is a JSON record with the source hash, declared
domains, and exact fractions. The finite matrix test supplements the general
row argument in Appendix~\ref{app:matrix-proof}; it does not replace that proof.

From the same \texttt{anc/} directory, the growing XOR harness and audit run as follows:
\begin{verbatim}
python3 -m unittest discover -s integrated -p 'test_xor_envelope.py' -v
python3 integrated/xor_envelope.py
\end{verbatim}
Its tests independently compare the symbolic checker to exhaustive truth
tables for $d\le4$, check malformed and mismatched records, and verify actual
transcript inclusion for internalization. Its JSON report records exact
language counts and proposal probabilities for $d\le6$, with the policy target,
selected stage, default and optional quotas, and attempt count explicitly
recorded. The escape witness is a separate target. At $d=6$, the policy target
has weight three and uses optional quota three; under default quota zero its
success probability is zero for both policies. The separate patch
report uses \texttt{exposure\_atom\_inclusion} for fact sets and
\texttt{native\_to\_frontier\_transcript\_inclusion} for the exposure premise
of Theorem~\ref{thm:diagnostic}; these diagnostics need not agree.

\paragraph{Claim boundaries.}
All finite examples use explicitly declared small domains. The polynomial
hierarchy statements concern unbounded input families and follow from their
proofs and stated assumptions. No finite audit result is reported as evidence
for a complexity-class separation, learned-verifier calibration, or a frontier
measurement of an undeclared deployment.

\section{Paired selection: proof and audit details}
\subsection{Conditional concentration}\label{app:paired-proof}
Fix a reachable pre-call history in Proposition~\ref{prop:paired}. All
quantities except the fresh paired observations are then fixed. If
$G_i\le\tau_i$, the transfer assumption implies
$\mu_i\le G_i+b_i\le\tau_i+b_i$. Acceptance under
\eqref{eq:paired-rule} therefore implies
\[
\overline Y_i-\mu_i>
\varepsilon_i:=\sqrt{2\log(1/\delta_i)/m_i}.
\]
Each independent difference has range of length two. The one-sided
Hoeffding bound gives
\[
\Pr(\overline Y_i-\mu_i>\varepsilon_i\mid\mathcal H_{i-1})
\le \exp(-m_i\varepsilon_i^2/2)=\delta_i.
\]
If $G_i>\tau_i$, the bad event is empty. Thus the conditional bound holds
at every allowed history, including those reached after rejected proposals.
Averaging and applying the union bound proves the run-level claim.
Adaptation between calls is allowed; reusing information within a call to
choose its candidate or stopping time is outside the specified construction.
The transfer premise is assumed uniformly over histories. If it is itself
only a high-probability assertion, its failure event must be accounted for
separately; the bound above must not silently omit that risk.

\subsection{Exact finite probability checks}\label{app:selection-audit}
The artifact \texttt{anc/selection/selection\_audit.py} uses rational
enumeration for the three examples in Appendix~\ref{sec:paired}. For the fair
paired null, $K\sim\mathrm{Binomial}(16,1/2)$ and
$\overline Y=2K/16-1$. The cutoff lies strictly between the values at
$K=12$ and $K=13$, so the tail is
\[
2^{-16}\sum_{k=13}^{16}\binom{16}{k}
=\frac{560+120+16+1}{65536}=\frac{697}{65536}.
\]
An independent enumeration over all $2^{16}$ observation sequences checks
the binomial calculation. The proxy example uses deterministic evaluation
differences and gain, so its discrepancy is transfer bias rather than
sampling noise; $b=0$ violates the proposition's premise.

For the reused audit example, conditional on rejecting $c_1,\ldots,c_{j-1}$,
$X$ is uniform over the remaining $N-j+1$ indices. Testing $c_j$ succeeds
with probability $1/(N-j+1)$. With an independently redrawn task at each
call, that conditional probability instead remains $1/N$, and the
probability of at least one pass in $N$ calls is $1-(1-1/N)^N$. In neither
case does this contradict a correctly applied union bound: its budget must
use an established premise, and $N$ budgets of $1/N$ already sum to one.
The example isolates the failure of the reused-test conditional premise,
not a violation of probability theory. Passing candidates in this example
have population accuracy only $1/N$, below any declared validity requirement
strictly larger than $1/N$. Tests enumerate the hidden index at $N=4,8$
and the fresh-test outcomes at $N=4$.

\section{Program-repair proofs and reproduction}\label{app:pruning}
\paragraph{Immutable-gate exclusion.}
For Proposition~\ref{prop:cut}, choose OR at the new root and AND at all
other nodes. This realizes $f_*$. For exclusion, set every left-half input
to one and every right-half input to zero. Each allowed operation preserves
zero, so induction on subtree height gives zero at the old right subtree's
root for every permitted assignment of operators. The immutable AND root
therefore returns zero. The new program returns one. The argument also
excludes the larger old mask allowing edits at every nonroot node. Constants,
negation, leaf rewiring, or a root bypass would invalidate this particular
certificate and require a different interface analysis.

\paragraph{Sound pruning.}
For Proposition~\ref{prop:prune}, a true counterexample cannot disagree with
a correct program. Every correct candidate survives each filter. Let $c_*$
be the first correct candidate in the common order. No earlier candidate can
succeed, and neither policy can skip $c_*$ as inconsistent. Both return
$c_*$; the feedback policy submits a subset of the preceding candidates.
On an unreachable task neither can obtain a valid exact acceptance. Because
exposure admits all permission-valid final records independently of the
proposal policy, neither language changes. These statements charge verifier
calls separately from the work of local candidate filtering.

\paragraph{Reproduction.}
The directory \texttt{anc/repair/} contains the two policies, exact checker,
independent tests, full saved report, and its reproduction runner.
The top-level \texttt{anc/reproduce.py} executes these checks along with
the earlier verification audits and the new selection audit. Python 3.10
or later with the standard library suffices. The repair implementation admits
only $d=4,8$ and rejects unsupported dimensions before allocating a table.
The growing-family statement is a separate proof, not an extrapolation from
these two sizes.

Programs encode operators in heap order, with children $2i+1,2i+2$ at
node $i$ and ordered input leaves. Codes $0,1,2$ mean AND, OR, XOR.
For each seed $s=0,\ldots,15$, SHA-256 of the UTF-8 string \texttt{s:}
followed by opcode bytes fixes a target-independent order. The verifier
returns the least mismatching truth-table row. Bit-parallel program semantics
are precomputed for both policies. Local consistency checks are counted
separately; precomputation and runtime are excluded from the verifier-call
metric. Counts are distinct target tables, not source strings. At the
reported sizes enumeration checks that all programs have distinct tables;
no injectivity claim is assumed at larger sizes.

Independent compiled Python execution checks the bit-parallel semantics on
all 560,304 program-input pairs. Other tests cover malformed encodings,
permissions, substituted certificate content, true counterexamples, pruning,
and unreachable targets. Every target admitted by each maximum mask is
evaluated under all sixteen orderings, so the reported means describe the
whole declared finite population. They are not confidence estimates for
external coding tasks, and no LLM runs are included.

\section{Analytic XOR family with an enforced interface}\label{sec:xor-case}
Consider synthesizing a straight-line XOR program for
$f_a(u)=\bigoplus_{j=1}^{d}a_j u_j$, given $x=(1^d,a)$ with
$a\in\mathbb F_2^d$. A candidate is a sorted list $z$ of distinct variable
indices and computes $f_z(u)=\bigoplus_{j\in z}u_j$; the empty list returns
zero. Correctness means $f_z(u)=f_a(u)$ for every $u\in\mathbb F_2^d$.
The exact core compares coefficient vectors and binds the output to the
program recorded in the transcript $t$. It rejects malformed, duplicate,
out-of-range, and mismatched records. This discharges the universal condition
in polynomial time on this restricted language.

The harness enforces a maximum number of referenced variables. Here support
selects a quota permission, rather than supplying unknown task information. It is an
integer quota $q\in\{0,\ldots,B(d)\}$, defaulting to $b_0(d)$, and
\[
\Exp_{B,b_0}(x,q,t)\iff 0\le q\le B(d),\quad
t\text{ is a canonical program},\quad |t|\le q.
\]
Here $|t|$ counts references; its binary encoding has length $O(d\log d)$.
Assume $0\le b_0(d)\le B(d)\le d$, with both quotas polynomial-time
computable. The executable harness checks exposure before invoking the same
core in every configuration. One ignored existential move gives depth one.

\begin{proposition}[Search gains and an enforced quota]\label{prop:xor-case}
For this uniform exact family,
\[
\Lang_R(S_{B,b_0})=\{(1^d,a):\|a\|_0\le b_0(d)\},\qquad
\Front_R(S_{B,b_0})=\{(1^d,a):\|a\|_0\le B(d)\}.
\]
Two proposal policies with full support on the same candidate programs can
have an unbounded ratio of per-attempt success probabilities while preserving
these languages: for every target/quota pair with $\|a\|_0\le q\le B(d)$,
the ratio is $2^{d-1}+1/2$. If $q<\|a\|_0$, both probabilities are zero. For $b=\lfloor d/2\rfloor$, changing $(B,b_0)$ from $(b,0)$ to
$(b,b)$ internalizes support. Changing it to $(b+1,b+1)$ then yields native
escape at every $d\ge1$, witnessed by any $a$ of weight $b+1$.
\end{proposition}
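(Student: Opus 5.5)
The plan is to reduce everything to one structural fact: for each input $x=(1^d,a)$ there is exactly one canonical program the exact core accepts, and it has reference count $\|a\|_0$. First I will prove uniqueness and soundness. A canonical candidate $z$ is a set of indices, and evaluating $f_z$ and $f_a$ on each basis vector $e_j$ forces $j\in z\iff a_j=1$. Conversely, linearity over $\mathbb F_2$ shows that $z=\mathrm{supp}(a)$ agrees with $f_a$ on every $u$. Hence the coefficient-comparison core accepts exactly the correct output, is task-sound for $R$, and discharges the universal condition without enumerating inputs. I will then check the uniformity obligations of Definition~\ref{def:stage}. These are that exposure and core are one polynomial-time algorithm over all $d$ and all quotas, that $b_0,B$ are polynomial-time computable, that objects have length $O(d\log d)$, and that a single ignored existential move gives an exact depth-one stage. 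By Proposition~\ref{prop:placement} its readings are exact.

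The two language formulas then follow directly from Definition~\ref{def:frontier}. Native reach asks for an accepted transcript with $|t|\le b_0(d)$, and the only candidate is $\mathrm{supp}(a)$, so the condition is $\|a\|_0\le b_0(d)$. The frontier ranges over all quotas $q\le B(d)$, and monotonicity in $q$ gives $\|a\|_0\le B(d)$. For the transitions with $b=\lfloor d/2\rfloor$, the move $(b,0)\to(b,b)$ keeps $B$ and therefore the frontier, while raising native reach to equal it; this is internalization. The move to $(b+1,b+1)$ puts any weight-$(b+1)$ target in the new native reach. I will show it lies outside the old frontier by the all-source argument: every source-exposed transcript has at most $b$ references, so none equals the unique correct program. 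Equivalently, this instantiates Theorem~\ref{thm:diagnostic} with the exposure premise violated. Such an $a$ exists because $b+1\le d$ for $d\ge1$.

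For the search claim I will fix two policies with full support on the $2^d$ canonical programs. The first is uniform, giving mass $2^{-d}$ to each program. The second places mass $1/2$ on the program read off from the coefficients in $x$ and spreads the remaining $1/2$ uniformly, giving the target $1/2+2^{-(d+1)}$. A success means producing the correct program and having it pass exposure at quota $q$. If $\|a\|_0\le q$, the success probabilities are these masses, with ratio $2^{d-1}+1/2$. If $q<\|a\|_0$, the harness rejects the correct record and both probabilities are zero. Neither policy enters the exposure predicate or the core, so the languages are unchanged.

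I expect the main obstacle to be the last point: arguing cleanly that the proposal policy is not part of the stage declaration. The argument rests on exposure being defined by the harness's permitted records rather than by policy-attainable ones, as in the convention accompanying Table~\ref{tab:update-contract}. That convention is what makes the unbounded ratio compatible with identical $\Lang_R$ and $\Front_R$.
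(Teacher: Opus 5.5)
Your proposal is correct and follows essentially the same route as the paper's proof. Both arguments establish uniqueness of the correct canonical program via the basis vectors $e_j$ and linearity over $\mathbb F_2$, read the two language formulas off its reference count $\|a\|_0$, use the all-source exclusion for weight-$(b+1)$ targets, and compare the policy masses $2^{-d}$ and $1/2+2^{-(d+1)}$; your explicit check that $b+1\le d$ (so a weight-$(b+1)$ witness exists for every $d\ge1$) is a small detail the paper leaves implicit.
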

The proof, including exclusion of all old witnesses, is in
Appendix~\ref{app:xor-case}.

An existential accepted-input language does not record the probability of
finding its witnesses. For the policy comparison, a uniform $d$-bit mask proposes each program with
probability $2^{-d}$. A second policy uses the directly computed correct
program with probability $1/2$ and that uniform proposal otherwise. At a
quota admitting the correct program, their per-attempt success probabilities
are $2^{-d}$ and $1/2+2^{-(d+1)}$. Both retain full proposal support; disallowed
proposals are rejected by the harness. For $M$ independent attempts with
per-attempt success $p$, success is $1-(1-p)^M$. This comparison charges the
same attempt count and both policies are polynomial-time, but it is a
controlled analytic example, not a learned improvement or a wall-clock
speedup measurement. The supplied target vector already determines the correct
program, so this example tests comparison semantics rather than discovery
hardness.

The family makes two audit decisions concrete. Raising the number of attempts
changes discovery probability. Raising an enforced quota changes admissible
programs and can change the frontier even though every language here is in
$\Pclass$. A budget that is only advisory supplies no such exclusion proof.
The reference implementation enumerates all target vectors, candidate
programs, and quotas for $1\le d\le6$, tests malformed and forged records,
and checks the formulas using exact arithmetic. At $d=6$, the three
configurations above have $(|L|,|F|)$ equal to $(1,42)$, $(42,42)$, and
$(57,57)$. The general statement follows from the proof, not extrapolation
from these finite counts.

Figure~\ref{fig:xor-comparison} displays these exact finite sets alongside
the policy comparison at a fixed permitted quota. The two panels measure
different objects even though they use the same task family.
\begin{figure}[tbp]
\centering
\begin{minipage}[t]{.535\linewidth}
\vspace{0pt}\centering
\begin{tikzpicture}[x=1cm,y=1cm,font=\footnotesize]
\path[use as bounding box] (-.1,0) rectangle (8.1,5.55);
\node[font=\small\bfseries] at (4,5.23) {(a) Admitted task inputs at $d=6$};
\node at (4.5,4.64) {Target weight $\|a\|_0$};
\node at (7.35,4.15) {$(|L|,|F|)$};
\node at (2.80,4.15) {$ 0 $};
\node at (3.36,4.15) {$ 1 $};
\node at (3.92,4.15) {$ 2 $};
\node at (4.48,4.15) {$ 3 $};
\node at (5.04,4.15) {$ 4 $};
\node at (5.60,4.15) {$ 5 $};
\node at (6.16,4.15) {$ 6 $};
\node[anchor=west,align=left,text width=2.2cm,inner sep=0pt] at (0,3.55) {\textbf{Source}\\$(B,b_0)=(3,0)$};
\node at (7.35,3.55) {$(1,42)$};
\node[draw=fiNative,fill=fiNative!16,minimum width=.51cm,minimum height=.62cm,inner sep=0pt] at (2.80,3.55) {N};
\node[draw=fiFrontier,fill=fiFrontier!16,minimum width=.51cm,minimum height=.62cm,inner sep=0pt] at (3.36,3.55) {O};
\node[draw=fiFrontier,fill=fiFrontier!16,minimum width=.51cm,minimum height=.62cm,inner sep=0pt] at (3.92,3.55) {O};
\node[draw=fiFrontier,fill=fiFrontier!16,minimum width=.51cm,minimum height=.62cm,inner sep=0pt] at (4.48,3.55) {O};
\node[draw=fiGray,fill=fiGray!4,minimum width=.51cm,minimum height=.62cm,inner sep=0pt] at (5.04,3.55) {--};
\node[draw=fiGray,fill=fiGray!4,minimum width=.51cm,minimum height=.62cm,inner sep=0pt] at (5.60,3.55) {--};
\node[draw=fiGray,fill=fiGray!4,minimum width=.51cm,minimum height=.62cm,inner sep=0pt] at (6.16,3.55) {--};
\node[anchor=west,align=left,text width=2.2cm,inner sep=0pt] at (0,2.7) {\textbf{Internalized}\\$(B,b_0)=(3,3)$};
\node at (7.35,2.7) {$(42,42)$};
\node[draw=fiNative,fill=fiNative!16,minimum width=.51cm,minimum height=.62cm,inner sep=0pt] at (2.80,2.7) {N};
\node[draw=fiNative,fill=fiNative!16,minimum width=.51cm,minimum height=.62cm,inner sep=0pt] at (3.36,2.7) {N};
\node[draw=fiNative,fill=fiNative!16,minimum width=.51cm,minimum height=.62cm,inner sep=0pt] at (3.92,2.7) {N};
\node[draw=fiNative,fill=fiNative!16,minimum width=.51cm,minimum height=.62cm,inner sep=0pt] at (4.48,2.7) {N};
\node[draw=fiGray,fill=fiGray!4,minimum width=.51cm,minimum height=.62cm,inner sep=0pt] at (5.04,2.7) {--};
\node[draw=fiGray,fill=fiGray!4,minimum width=.51cm,minimum height=.62cm,inner sep=0pt] at (5.60,2.7) {--};
\node[draw=fiGray,fill=fiGray!4,minimum width=.51cm,minimum height=.62cm,inner sep=0pt] at (6.16,2.7) {--};
\node[anchor=west,align=left,text width=2.2cm,inner sep=0pt] at (0,1.85) {\textbf{Expanded}\\$(B,b_0)=(4,4)$};
\node at (7.35,1.85) {$(57,57)$};
\node[draw=fiNative,fill=fiNative!16,minimum width=.51cm,minimum height=.62cm,inner sep=0pt] at (2.80,1.85) {N};
\node[draw=fiNative,fill=fiNative!16,minimum width=.51cm,minimum height=.62cm,inner sep=0pt] at (3.36,1.85) {N};
\node[draw=fiNative,fill=fiNative!16,minimum width=.51cm,minimum height=.62cm,inner sep=0pt] at (3.92,1.85) {N};
\node[draw=fiNative,fill=fiNative!16,minimum width=.51cm,minimum height=.62cm,inner sep=0pt] at (4.48,1.85) {N};
\node[draw=fiNative,fill=fiNative!16,minimum width=.51cm,minimum height=.62cm,inner sep=0pt] at (5.04,1.85) {N};
\node[draw=fiGray,fill=fiGray!4,minimum width=.51cm,minimum height=.62cm,inner sep=0pt] at (5.60,1.85) {--};
\node[draw=fiGray,fill=fiGray!4,minimum width=.51cm,minimum height=.62cm,inner sep=0pt] at (6.16,1.85) {--};
\node[anchor=west,align=left,inner sep=0pt] at (0,1.05) {Number of\\targets};
\node at (2.80,1.05) {1};
\node at (3.36,1.05) {6};
\node at (3.92,1.05) {15};
\node at (4.48,1.05) {20};
\node at (5.04,1.05) {15};
\node at (5.60,1.05) {6};
\node at (6.16,1.05) {1};
\node[anchor=west,inner sep=0pt] at (0,.35)
  {N: native\quad O: optional only\quad --: blocked};
\end{tikzpicture}
\end{minipage}\hfill
\begin{minipage}[t]{.445\linewidth}
\vspace{0pt}\centering
\begin{tikzpicture}
\begin{axis}[width=.985\linewidth,height=5.9cm,
 title={(b) Search with the same quota},
 title style={font=\small\bfseries,align=center,text width=5.5cm},
 xlabel={Proposal attempts $M$},ylabel={At least one success},
 xmode=log,log basis x=2,xmin=1,xmax=64,ymin=0,ymax=1,
 xtick={1,2,4,8,16,32,64},xticklabels={1,2,4,8,16,32,64},
 ytick={0,.25,.5,.75,1},tick label style={font=\footnotesize},
 label style={font=\small},axis line style={fiGray},tick style={fiGray},
 legend style={font=\footnotesize,draw=none,fill=white,
   at={(.04,.47)},anchor=north west,align=left}]
\addplot[fiNative,dashed,line width=1.1pt] table[x=m,y=uniform,row sep=\\]{
m uniform mixture\\
1 0.015625 0.5078125\\
2 0.031005859375 0.757751464844\\
3 0.0461463928223 0.880768299103\\
4 0.0610503554344 0.941315647215\\
5 0.0757214436308 0.971116295113\\
6 0.090163296074 0.985783801501\\
7 0.104379494573 0.993002964801\\
8 0.11837356497 0.996556146738\\
9 0.132148978018 0.998304978473\\
10 0.145709150236 0.999165731592\\
11 0.159057444764 0.999589383518\\
12 0.172197172189 0.9997978997\\
13 0.185131591374 0.999900528759\\
14 0.197863910258 0.999951041498\\
15 0.210397286661 0.999975903238\\
16 0.222734829057 0.999988139875\\
17 0.234879597353 0.999994162595\\
18 0.246834603644 0.999997126902\\
19 0.258602812962 0.999998585897\\
20 0.270187144009 0.999999303996\\
21 0.281590469884 0.999999657436\\
22 0.292815618792 0.999999831394\\
23 0.303865374749 0.999999917014\\
24 0.314742478268 0.999999959155\\
25 0.325449627045 0.999999979897\\
26 0.335989476623 0.999999990105\\
27 0.346364641051 0.99999999513\\
28 0.356577693534 0.999999997603\\
29 0.366631167073 0.99999999882\\
30 0.376527555087 0.999999999419\\
31 0.386269312039 0.999999999714\\
32 0.395858854038 0.999999999859\\
33 0.405298559444 0.999999999931\\
34 0.414590769453 0.999999999966\\
35 0.42373778868 0.999999999983\\
36 0.432741885732 0.999999999992\\
37 0.441605293767 0.999999999996\\
38 0.450330211052 0.999999999998\\
39 0.458918801504 0.999999999999\\
40 0.467373195231 1\\
41 0.475695489055 1\\
42 0.483887747039 1\\
43 0.491952000991 1\\
44 0.499890250976 1\\
45 0.507704465805 1\\
46 0.515396583526 1\\
47 0.522968511909 1\\
48 0.53042212891 1\\
49 0.537759283146 1\\
50 0.544981794347 1\\
51 0.55209145381 1\\
52 0.559090024844 1\\
53 0.565979243206 1\\
54 0.572760817531 1\\
55 0.579436429757 1\\
56 0.586007735542 1\\
57 0.592476364674 1\\
58 0.598843921476 1\\
59 0.605111985203 1\\
60 0.611282110434 1\\
61 0.617355827459 1\\
62 0.623334642655 1\\
63 0.629220038863 1\\
64 0.635013475756 1\\
};
\addlegendentry{Uniform: $p=1/64$}
\addplot[fiFrontier,line width=1.2pt] table[x=m,y=mixture,row sep=\\]{
m uniform mixture\\
1 0.015625 0.5078125\\
2 0.031005859375 0.757751464844\\
3 0.0461463928223 0.880768299103\\
4 0.0610503554344 0.941315647215\\
5 0.0757214436308 0.971116295113\\
6 0.090163296074 0.985783801501\\
7 0.104379494573 0.993002964801\\
8 0.11837356497 0.996556146738\\
9 0.132148978018 0.998304978473\\
10 0.145709150236 0.999165731592\\
11 0.159057444764 0.999589383518\\
12 0.172197172189 0.9997978997\\
13 0.185131591374 0.999900528759\\
14 0.197863910258 0.999951041498\\
15 0.210397286661 0.999975903238\\
16 0.222734829057 0.999988139875\\
17 0.234879597353 0.999994162595\\
18 0.246834603644 0.999997126902\\
19 0.258602812962 0.999998585897\\
20 0.270187144009 0.999999303996\\
21 0.281590469884 0.999999657436\\
22 0.292815618792 0.999999831394\\
23 0.303865374749 0.999999917014\\
24 0.314742478268 0.999999959155\\
25 0.325449627045 0.999999979897\\
26 0.335989476623 0.999999990105\\
27 0.346364641051 0.99999999513\\
28 0.356577693534 0.999999997603\\
29 0.366631167073 0.99999999882\\
30 0.376527555087 0.999999999419\\
31 0.386269312039 0.999999999714\\
32 0.395858854038 0.999999999859\\
33 0.405298559444 0.999999999931\\
34 0.414590769453 0.999999999966\\
35 0.42373778868 0.999999999983\\
36 0.432741885732 0.999999999992\\
37 0.441605293767 0.999999999996\\
38 0.450330211052 0.999999999998\\
39 0.458918801504 0.999999999999\\
40 0.467373195231 1\\
41 0.475695489055 1\\
42 0.483887747039 1\\
43 0.491952000991 1\\
44 0.499890250976 1\\
45 0.507704465805 1\\
46 0.515396583526 1\\
47 0.522968511909 1\\
48 0.53042212891 1\\
49 0.537759283146 1\\
50 0.544981794347 1\\
51 0.55209145381 1\\
52 0.559090024844 1\\
53 0.565979243206 1\\
54 0.572760817531 1\\
55 0.579436429757 1\\
56 0.586007735542 1\\
57 0.592476364674 1\\
58 0.598843921476 1\\
59 0.605111985203 1\\
60 0.611282110434 1\\
61 0.617355827459 1\\
62 0.623334642655 1\\
63 0.629220038863 1\\
64 0.635013475756 1\\
};
\addlegendentry{Mixture: $p=65/128$}
\end{axis}
\end{tikzpicture}
\end{minipage}
\caption{Enforced permission and discovery probability in the XOR family.
(a) Each column represents all target vectors of one Hamming weight, with
multiplicities shown below. Making quota three the default changes native
reach from one to 42 inputs without changing the frontier. Raising the
maximum quota to four also admits the 15 weight-four targets.
(b) Both policies use the \emph{source} stage with explicit optional quota
$q=3$ and target $a=(1,1,1,0,0,0)$; under default quota zero both have success
probability zero. For $M$ independent attempts the curves are $1-(1-p)^M$. The solution mixture computes
the correct program with probability $1/2$ and samples uniformly otherwise;
it is an analytic control, not a learned RSI system. Counts and per-attempt
probabilities match the exact audit of Proposition~\ref{prop:xor-case}.}
\label{fig:xor-comparison}
\end{figure}

This example proves Proposition~\ref{prop:xor-case} uniformly and audits its
finite instances; the policy probabilities are analytic quantities. It does
not instantiate an adaptive learned improver. The uniform-envelope result in
Proposition~\ref{prop:meta} and the conditional selection bound in
Proposition~\ref{prop:selection} are separate theorems with separate premises.

\section{Self-improvement mechanisms and comparative evidence}\label{app:rsi-literature}
The mechanisms in Table~\ref{tab:rsi} suggest distinct stage comparisons.
The middle column summarizes reported designs; the right column gives our
proposed connection to the stage model. These connections are audit questions,
not established classifications of the cited implementations.

\begingroup\small
\begin{longtable}{@{}>{\raggedright\arraybackslash}p{.18\linewidth}
>{\raggedright\arraybackslash}p{.37\linewidth}
>{\raggedright\arraybackslash}p{.39\linewidth}@{}}
\caption{Mechanism-level comparison of self-improvement research.}
\label{tab:rsi}\\
\toprule
Research line & Reported mechanism & Stage comparison and evidence needed\\
\midrule
\endfirsthead
\multicolumn{3}{l}{\tablename~\thetable\ (continued)}\\
\toprule
Research line & Reported mechanism & Stage comparison and evidence needed\\
\midrule
\endhead
\midrule
\multicolumn{3}{r}{Continued on next page}\\
\endfoot
\bottomrule
\endlastfoot
Autonomy and inherited improvement &
Duan et al. organize RSI by control over execution, strategy, experience,
deployment adaptation, and inherited improvement mechanisms. They separate
structural from effective recursion~\cite{Duan2026LastAI}. &
An autonomy level does not determine $k$. Record accepted-language changes
separately from proposal efficiency and successor quality. A fixed enclosing
class can accommodate both inherited meta-updates and empirical gains.\\[6pt]
Prompt and context adaptation &
GEPA reflects on execution traces to evolve prompts; ACE incrementally
maintains playbooks; RSIAgent carries frozen memory into evaluation~\cite{Agrawal2026GEPA,Zhang2026ACE,Zhu2026}. &
Declare which context was already available through optional support. Compare
memory admissibility and use under matched task and output contracts.\\[6pt]
Workflow search &
AFlow uses Monte Carlo tree search over code-represented workflows and
execution feedback~\cite{Zhang2025AFlow}. &
Separate a better policy over existing traces from a changed trace grammar.
Adding a critic node does not establish universal challenge semantics.\\[6pt]
Agent and harness code &
DGM retains an archive of self-modified agents. AHE exposes editable
components and records predicted effects; Self-Harness validates proposed
edits with regression tests~\cite{Zhang2026DGM,Lin2026AHE,Zhang2026SelfHarness}. &
Version the permitted programs, tool behavior, and terminal checker. An edit
may affect exposure, acceptance, or only search efficiency; its file location
does not determine its semantic role.\\[6pt]
Editable improvers &
STOP optimizes the improver's own program. Hyperagents jointly exposes task
and meta-agent code to modification. AIDE$^2$ explores research-agent
improvement across successive versions~\cite{Zelikman2024STOP,Zhang2026Hyperagents,Srikanth2026}. &
Version $S_i$, $U_i$, and $J_i$ separately. Measure the updated improver on
common starting agents and budgets, and identify any fixed outer search or
evaluation process.\\[6pt]
Search evaluation &
SIFT aggregates pairwise judge preferences through a Bradley--Terry model
to guide parent selection and evaluation priority~\cite{Fu2026SIFT}. &
Treat ranking feedback as a search signal unless its separate acceptance
contract is established. Include all selection calls in the error and cost
accounting.\\[6pt]
Training-algorithm design &
AI4AI-Bench uses ten research repositories and reruns candidate training
algorithms from scratch under a hidden fixed evaluator~\cite{Chi2026AI4AI}. &
Keep algorithm changes distinct from runtime tuning. The benchmark's
normalized utility measures performance under its protocol; it is neither a
soundness probability nor a frontier measurement.\\[6pt]
Program discovery &
AlphaEvolve uses executable evaluation to guide program search and reports
applications across scientific domains~\cite{AlphaEvolve2026}. &
Declare the candidate language and correctness objective. Distinguish a
better discovered program from a change to the agent that discovers programs.\\
\end{longtable}
\endgroup

\subsection{Comparing task improvement and improver improvement}
The RSI survey's Headroom-Closed Index summarizes benchmark scores; its
post-2026 extension is explicitly illustrative~\cite{Duan2026LastAI}.
Neither is the closure frontier defined here. Its case studies supply
motivation and mechanisms to examine, not evidence that their implementations
satisfy our uniform verification contracts. Conversely, our closure results
do not decide whether those mechanisms produce better successors.

Hyperagents evaluates the capacity of a meta agent to generate better task
agents with an improvement@$k$ metric. In that comparison the meta agent is
held fixed, and the best child is selected using validation performance before
test performance is measured~\cite{Zhang2026Hyperagents}. This provides a concrete
precedent for separating deployed-task scores from the ability to produce
better successors. Its main experiments retain a fixed task distribution and
outer selection/evaluation processes, which delimit the reported recursion.

A comparison under the present model first fixes the task relation, candidate
grammar, evaluator, initial deployed stage, and resource accounting. It then
compares the old and new improvers with the same starting information and
reports held-out improvement as a function of evaluation calls, tokens, and
elapsed compute. A fixed number of generations alone need not equalize cost
when candidates have different runtimes. An ablation that freezes the
improver but continues task-agent search separates improvements to the search
procedure from additional search effort. These measurements concern discovery
and selection; semantic frontier escape additionally needs the evidence in
Appendix~\ref{sec:declaration}.

\subsection{Separating diagnosis, ranking, and acceptance}
AHE reports that its predicted fixes are more informative than its predicted
regressions, with many regressions remaining unforeseen~\cite{Lin2026AHE}.
SIFT explicitly treats its judge as an intermediate ranking signal rather than
a certificate of downstream superiority~\cite{Fu2026SIFT}.
Both support keeping diagnostic feedback separate from the evidence used to
accept a successor. Self-Harness makes regression testing part of that
selection loop~\cite{Zhang2026SelfHarness}.

Under our definitions, changing the selection test or evaluator must be
recorded separately from changing the candidate agent. A reused evaluation
set can become part of the effective search feedback even when the evaluator's
source code stays fixed. Fresh execution from scratch and hidden final
evaluation, as used by AI4AI-Bench~\cite{Chi2026AI4AI}, help specify an experimental
comparison; they do not by themselves establish a uniform mathematical
soundness contract. This is why Proposition~\ref{prop:selection} states a
conditional-error premise instead of inferring one from benchmark scores.

\section{Resource accounting for digital neural agents}\label{app:digital-resources}
The computational interpretation in Section~\ref{sec:computational-bridge}
uses the implemented finite-precision operations, including their rounding
and finite-bit sampling rules. It does not grant exact real arithmetic or
an unbounded computation in one neural or tool call. To see the accounting,
suppose one uniform interpreter performs at most $N(n)$ elementary operations
on at most $B(n)$ bits of state, with each operation simulable in time
$C(B(n))$. A direct digital simulation takes at most
$O(N(n)C(B(n)))$ steps, up to initialization and ordinary machine-simulation
overhead. Polynomial bounds on these quantities suffice for polynomial time.
For parallel inference, this accounting includes aggregate work and memory;
it does not identify sequential simulation time with GPU wall-clock latency.
These are sufficient modeling conditions, not measurements of a deployed
model's scaling behavior.

A fixed checkpoint may be part of a fixed program's finite description.
Its acquisition cost is then outside the deployment-time analysis. A claim
about training must count that cost separately. If model size or weights
vary with input length, the uniform results require a common effective
description with the stated resource bounds. Alternatively, an explicitly
supplied model may be part of the task input, in which case its encoding
length contributes to $n$ and defines a different task family. An unrelated
checkpoint supplied freely at each length is nonuniform advice; it cannot
silently justify a uniform class bound~\cite{BarakRunningTime}.

Tool results can be bounded support whose admissibility is checked, or tool
computation can be included in the charged implementation. The former
classifies what supplied evidence permits, without claiming that the evidence
can be efficiently obtained. An external service assumed to answer arbitrary
hard queries correctly in unit time would instead require an explicit oracle
model. Likewise, search-policy randomness affects discovery probabilities;
the hidden terminal tape in Definition~\ref{def:stage} controls the separate
verification decision. Computable neural components satisfy neither the
pointwise gap nor task soundness solely by being computable. Finally, the
polynomial bounds apply to exposure and the terminal predicate, while solving
the entire alternating acceptance condition may take much more time.

\end{document}